\pgfplotsset{compat=newest}
\newcommand{\tabl}[2]{\begin{tabular}{#1} #2 \end{tabular}}
\newtheorem{theorem}{Theorem}
\newtheorem{proposition}{Proposition}
\newtheorem{lemma}{Lemma}
\newtheorem{assumption}{}
\newcommand{\R}{\mathbb{R}}
\newcommand{\E}{\mathbb{E}}
\newcommand{\tr}{^\mathsf{\scriptscriptstyle T}}
\newcommand{\todoi}[2][]{\xspace\todo[color=red!20!white,size=\scriptsize,#1,inline]{#2}}
\newcommand\numberthis{\addtocounter{equation}{1}\tag{\theequation}}
\DeclareMathOperator*{\argmax}{argmax}
\crefname{subsection}{section}{sections}
\crefname{lemma}{lemma}{lemmas}
\crefname{property}{property}{properties}
\crefname{table}{table}{tables}
\crefname{assumption}{assumption}{assumptions}
\newcommand{\Hess}{\mathcal{H}}
\newtheorem{definition}{Definition}
\newcommand{\cmark}{\ding{51}}%
\newcommand{\xmark}{\ding{55}}
\title{Policy Newton methods for Distortion Riskmetrics}
\author{
Soumen Pachal$^*$\\
{\normalsize Indian Institute of Technology Madras} \\
{\normalsize \texttt{cs22d009@smail.iitm.ac.in}}
\and 
 Mizhaan Prajit Maniyar$^*$ \\
{\normalsize Google Deepmind} \\
{\normalsize \texttt{mizhaan@google.com}}
\and
Prashanth L. A.\\
{\normalsize Indian Institute of Technology Madras}\\
{\normalsize \texttt{prashla@cse.iitm.ac.in}}
}
\date{}
\begin{document}
\maketitle
\def\thefootnote{*}\footnotetext{Equal contribution}

\begin{abstract} 
We consider the problem of risk-sensitive control in a reinforcement learning (RL) framework. In particular, 
we aim to find a risk-optimal policy by maximizing the distortion riskmetric (DRM) of the discounted reward in a finite horizon Markov decision process (MDP). 
DRMs are a rich class of risk measures that include several well-known risk measures as special cases.
We derive a policy Hessian theorem for the DRM objective using the likelihood ratio method.
Using this result, we propose a natural DRM Hessian estimator from sample trajectories of the underlying MDP.
Next, we present a cubic-regularized policy Newton algorithm for solving this problem in an on-policy RL setting using estimates of the DRM gradient and Hessian. 
Our proposed algorithm is shown to converge to an $\epsilon$-second-order stationary point ($\epsilon$-SOSP) of the DRM objective, and this guarantee ensures the escaping of saddle points. The sample complexity of our algorithms to find an $\epsilon$-SOSP is $\mathcal{O}(\epsilon^{-3.5})$. Our experiments validate the theoretical findings. To the best of our knowledge, our is the first work to present convergence to an $\epsilon$-SOSP of a risk-sensitive objective, while existing works in the literature have either shown convergence to a first-order stationary point of a risk-sensitive objective, or a SOSP of a risk-neutral one.
\end{abstract}

\section{Introduction} 
\label{sec:introduction}
Reinforcement learning (RL) has achieved tremendous success in several applications, e.g., finance, transportation, insurance, and supply chain management to name a few. In a traditional RL setting, the aim is to learn an optimal policy that maximizes the expected sum of discounted rewards. However, the expected value may not be a good objective in practical applications. To illustrate, consider a portfolio optimization application, where the goal is to find an optimal way to rebalance the portfolio that is spread over assets with varying risks \cite{cover1991universal}. In this application, an appealing strategy is to invest in assets with high risk but high return. The portfolio could involve safe stocks with low growth (and low risk), but \cite{cover1991universal} showed that volatile stocks lead to great gains. A similar viewpoint is echoed in the well-known Markowitz theory of portfolio optimization \cite{markowitz1952portfolio}. 

In this paper, we focus on distortion riskmetrics (DRMs) \cite{Wang2020_ES} --- a general class that covers several well-known risk measures as special cases, e.g., value at risk (VaR) \cite{jorion1996risk2}, conditional value at risk (CVaR)\cite{rockafellar2000optimization}, Gini deviation \cite{Gini1912}, Gini shortfall \cite{furman2017gini}, rank-dependent expected utility \cite{quiggin2012generalized}.
DRMs distort the original distribution by using a distortion function, say $h:[0,1]\rightarrow [0,1]$, with $h(0)=0$, and then calculate a distorted expected value.
More importantly, DRMs include distortion risk measures \cite{wang1996premium,denneberg1990}, which additionally assume $h(1)=1$ and also that $h$ is monotone. DRMs also include deviation measures such as Gini deviation, Gini shortfall, Wang's right-tail, left-tail, and two-sided deviations \cite{jones2003empirical}. 
DRMs are also equivalent to
spectral risk measures \cite{acerbi2002spectral}, see \cite{gzyl06}. 
DRMs represent a rich class of risk measures, as shown recently by a characterization result involving a combination of DRM-type risk measures in \cite{williamson2024drm}. 
The reader is referred to Figure \ref{fig:w} and Table \ref{tab:distortion_function_example} for several examples of distortion functions. 

\begin{table*}
	\centering
 \caption{Comparison of sample complexities for finding either a $\epsilon$-first-order stationary point ($\epsilon$-FOSP) or $\epsilon$-second-order stationary point ($\epsilon$-SOSP), see \Cref{def:fosp}. A \cmark \; indicates that the algorithm converges to the first or second-order stationary point, whereas \xmark \; implies the algorithm doesn't converge to the corresponding stationary point. The first four rows are risk-neutral RL algorithms, while the last two correspond to risk-sensitive RL with DRM as the risk measure.}
        \begin{tabular}{|c|c|c|c|c|}
			\hline 
			\multirow{2}{*} {\textbf{Algorithm}}& \multirow{2}{*} {\textbf{Objective}} &\multirow{2}{*}{\textbf{Sample Complexity}} & \multirow{2}{*}{\textbf{$\epsilon$-FOSP}} & \multirow{2}{*}{\textbf{$\epsilon$-SOSP}} \\ 
			& &  & &  \\[0.1ex] \hline
 			REINFORCE \cite{williams1992simple}  & Expected value & $\mathcal{O} \left(\frac{1}{\epsilon^4} \right)$ &   \cmark  & \xmark \\[0.2ex] \hline
			HAPG\cite{shen2019hessian} & Expected value & $\mathcal{O} \left(\frac{1}{\epsilon^3} \right) $ & \cmark & \xmark \\[0.2ex] \hline
		\cite{yang2021sample} & Expected value & $\mathcal{O} \left(\frac{1}{\epsilon^{4.5}} \right) $ & \cmark & \cmark \\[0.2ex] \hline
				CR-PN \cite{maniyar2024crpn} & Expected value &     $\mathcal{O} \left(\frac{1}{\epsilon^{3.5}} \right) $ & \cmark & \cmark \\[0.2ex]\hline
        DRM-OnP-LR\cite{vijayan2021policy} & DRM &     $\mathcal{O} \left(\frac{1}{\epsilon^{2}} \right) $ & \cmark & \xmark \\[0.2ex]\hline
     Our work (CRPN-DRM) & DRM &     $\mathcal{O} \left(\frac{1}{\epsilon^{3.5}} \right) $ & \cmark & \cmark \\[0.1ex]\hline
        \end{tabular}
  \label{tab:sample_complexity_comparison}
\end{table*}



We consider a risk-sensitive RL problem with DRM as the objective in a finite horizon Markov decision process (MDP). The aim is to learn an optimal policy by maximizing the DRM of the cumulative discounted reward. 
We adopt the policy gradient solution approach for this problem. 
Risk-sensitive RL via policy gradients has received a lot of research attention recently, see \cite{prashanth2022risk} for a recent survey. 
Previous works have explored specific DRMs such as Gini deviation \cite{Luo2023}, inter-expected shortfall range \cite{Han2022} and an abstract distortion risk measure \cite{vijayan2021policy}. However, distortion riskmetrics in their generality have not been considered earlier. The closest related previous work in \cite{vijayan2021policy} has established convergence of a policy gradient algorithm to a first-order stationary point, which include spurious saddle points that do not result in a maxima for DRM. Moreover, 
there is no prior work that shows the practical utility of policy gradient type algorithms with a DRM objective that exhibits improved behavior from a risk-sensitivity viewpoint.
Our work aims to fill the research gaps in theory as well as practice. 

\paragraph{Our contributions.} We summarize our contributions below, whereas Table \ref{tab:sample_complexity_comparison} compares sample complexities of our algorithm to closely related works in the literature on risk-neutral and DRM-sensitive RL.

First, we derive a policy Hessian theorem for the DRM of cumulative discounted reward in a finite horizon MDP. 
Using this result, we employ the likelihood ratio method to arrive an estimator of the Hessian of DRM using sample trajectories. We establish MSE bounds of $\mathcal{O}(\frac{1}{b^{3/2}})$ for our DRM Hessian estimator, where $b$ number of trajectories. 
Second, we propose a cubic-regularized policy Newton algorithm in the on-policy RL setting for maximizing the DRM of cumulative discounted reward in a finite horizon MDP. 
Third, we provide a non-asymptotic bound in expectation for convergence to an $\epsilon$-second-order stationary point ($\epsilon$-SOSP) of our algorithm. The sample complexity of our proposed algorithm is $\mathcal{O}(\epsilon^{-3.5})$. More importantly, our algorithm escapes saddle points. To the best of our knowledge, we are the first to present convergence of a  policy gradient-type algorithm to an $\epsilon$-SOSP convergence of a risk-sensitive objective.
Finally, we conduct simulation experiments on the three environments, namely cliff walk, cart pole and humanoid. In each case, we find that our DRM-sensitive policy Newton algorithm finds a risk-seeking policy with a higher expected return than risk-neutral variant.

\paragraph{Related work.} 
Various riskmetrics have been proposed in the literature. A popular class of riskmetrics is coherent risk measures \cite{artzner1999coherent}. A coherent risk measure is sub-additive, homogeneous, translation invariant, and monotonic. Value-at-risk (VaR) is a popular risk measure that is widely used in financial applications. However, VaR is not coherent. A closely related risk measure is Conditional value at risk (CVaR) \cite{rockafellar2000optimization}, known variously as tail VaR, and expected shortfall. Unlike VaR, CVaR is a coherent risk measure. Distortion risk measures generalize VaR and CVaR, and under a concave distortion function, these measures are coherent. Distortion riskmetrics (DRMs), as mentioned before, generalize distortion risk measures further by including non-monotone distortion functions. Examples of non-monotone DRMs include Gini deviation, Gini shortfall, mean-median deviation, inter-quantile range, inter-ES range etc.
 A popular non-coherent risk measure called cumulative prospect theory (CPT) was proposed in \cite{tversky1992advances,prashanth2016cumulative}. 
CPT has been shown to model human preferences well through an inverted S-shaped distortion function. A risk measure that is closely related to CPT is rank-dependent expected utility (RDEU), see \cite{quiggin2012generalized}.


\todoi{Clean from here}

In the context of planning, where the underlying MDP model is known, dynamic programming (DP) approaches have been employed to find a time-consistent optimal policy of a risk-sensitive MDP.  Many such approaches have been proposed in the literature for both finite and infinite-horizon MDPs. For instance, average-value-at-risk has been studied in \cite{bauerle2011markov} to find a time-consistent optimal policy in a discounted cost MDP. In \cite{li2022quantile}, the authors focus on optimizing quantiles instead of the usual expected return, while entropic risk measure (ERM) and entropic value-at-risk (EVaR) have been studied in \cite{hau2023entropic} for discounted MDPs. Optimized certainty equivalent (OCE) risk measure has been discussed in \cite{bauerle2024markov} for non-stationary policies in infinite-horizon MDP. 
The aforementioned works operate in a setting where the MDP model is known and 
their proposed methods do not lend easily to an RL setting. For instance, in \cite{li2022quantile}, the authors use DP to solve an auxilary MDP with infinite state space. In \cite{hau2023entropic}, the authors derive a Bellman type equation for entropic risk, which enables the application of a DP algorithm. However, this equation is not amenable for RL since it involves a distribution in the RHS of the Bellman equation, which is not directly available via sampling. 
Several risk measures, including DRMs, do not have a dynamic programming scheme that is amenable to the learning scenario that is typical in a RL setting.

Risk-sensitive RL using the policy gradient approach has received a lot of research attention recently, and a variety of risk measures including CVaR, mean-variance tradeoff, cumulative prospect theory, percentile criteria, have been explored, see \cite{prashanth2022risk} for a survey. 
In \cite{anantharam2017variational}, the authors consider an exponential sum of rewards in this example, and aim to find a risk-sensitive re-balancing strategy. By approximating the exponential sum of rewards as a sum of mean and a constant multiple of variance, we can intuitively reason that the aforementioned risk-sensitive strategy prefers volatile stocks with high returns. A similar interpretation in a closely related RL setting can be seen in \cite{chow2020variational}. 

In \cite{maniyar2024crpn}, authors proposed a risk-neutral cubic-regularized policy Newton algorithm in a finite horizon MDP and show the convergence to a $\epsilon$-SOSP. In comparison, we consider a risk-sensitive objective based on DRMs. Unlike the risk-neutral case, our algorithm requires estimation of the DRM Hessian, which in turn requires empirical distribution functions (or sample means are not enough). Our DRM policy Hessian theorem and the Hessian estimate analysis (for an MSE bound) involve significant deviations from the risk-neutral case considered in \cite{maniyar2024crpn}.

In \cite{vijayan2021policy}, the authors propose a policy gradient algorithm for distortion risk measures and establish the convergence to an $\epsilon$-FOSP. However, such a convergence guarantee is insufficient, as FOSPs often include saddle points. In contrast, our policy Newton algorithm avoids saddle points.
More importantly, we consider a larger class of distortion riskmetrics that subsume distortion risk measures, and unlike \cite{vijayan2021policy}, we also demonstrate the practical utility of DRM-sensitive algorithm in three well-known RL benchmarks. 

In \cite{tamar2015coherent}, the authors propose a policy gradient approach for a coherent risk measure and this subsumes DRMs under some conditions on the distortion function. However, their approach requires the solution of an optimization problem for risk estimation and gradient computation, and they establish asymptotic convergence to a FOSP. In contrast, we have gradient estimates that are easy to implement, and we establish a non-asymptotic bound that shows convergence to an SOSP.


\section{Problem formulation} 
\label{sec:problem_formulation}
Let $X$ be a random variable and let $F$ denote the cumulative distribution function (CDF) of $X$. The distortion riskmetric (DRM) of $X$ is defined by
\begin{align}
    \rho_h (X) \!=\! 
    &\int\limits_{-\infty}^0 [h(1 - F(x))-h(1)] dx + \int\limits_0^{\infty} h(1 - F(x)) dx,\label{eq:drm-def}
\end{align}
where the distortion function $h$
is a function of bounded variation with $h(0) = 0$. 
DRM $\rho_h$ is defined only for functions $h$ that ensure both integrals in \eqref{eq:drm-def} are finite. For sub-Gaussian distributions, it is easy to see that $\rho_h(\cdot)$ is finite.
Further, if $h(t)=t$, then DRM coincides with the expected value of $X$.
Figure \ref{fig:w} presents a few popular DRMs. 

\begin{figure}[ht]
\centering
\tabl{c}{
  \scalebox{0.65}{\begin{tikzpicture}
  \begin{axis}[width=11cm,height=6.5cm,legend pos=south east,
           grid = major,
           grid style={dashed, gray!30},
           xmin=0,     
           xmax=1,    
           ymin=0,     
           ymax=1,   
           axis background/.style={fill=white},
           ylabel={\large Distortion $\bm{h(p)}$},
           xlabel={\large Probability $\bm{p}$}
           ]
          \addplot[domain=0:1, yellow, thick]{x}; 
             \addlegendentry{Identity}
          \addplot[domain=0:1, blue, thick,smooth,samples=1500] 
             {1-pow(1-x,2)}; 
             \addlegendentry{ Dual power}
          \addplot[domain=0:1, red, thick,smooth,samples=1500] 
             {exp(-sqrt(-ln(x)))}; 
             \addlegendentry{ RDEU}
          \addplot[domain=0:1, green, thick,smooth,samples=1500] 
             {x - x*x}; 
             \addlegendentry{Gini-Deviation}
          \addplot[domain=0:1, orange, thick,smooth,samples=1500] 
             {pow(x,1/2)}; 
             \addlegendentry{ PHT}
  \end{axis}
  \end{tikzpicture}}\\[1ex]
}
\caption{Examples of distortion functions. The choices
$h(t) = t$, $h(t) = 1 - (1-t)^2$, $h(t) = 1 - (1-t)^2$, $h(t) = t - t^2$, $h(t) = t^{0.5} $ correspond to identity, dual power DRM, RDEU, Gini deviation, and proportional Hazard transform (PHT) respectively. 
}
\label{fig:w}
\end{figure}
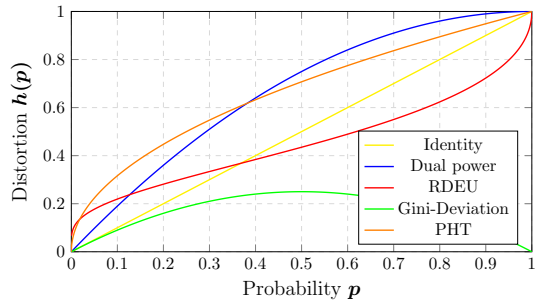
We integrate DRM into risk-sensitive RL problems to learn an optimal policy in the finite-horizon Markov decision process (MDP). We consider a finite-horizon MDP with finite state space $\mathbb{S}$, finite action space $\mathbb{A}$ . A single stage reward $r$ is defined as $r:\mathbb{S}\times\mathbb{A}\times\mathbb{S} \rightarrow [-r_{\max}, r_{\max}]$, where $r_{\max} \geq 0$. Let $p$ be the probability transition function. Let the horizon or episode length be $T$, i.e., all trajectories end after $T$ steps.  
We parameterize the stochastic policies $\{\pi_{\theta}:\mathbb{S}\times \mathbb{A} \times \mathbb{S} \rightarrow [0,1]\}$ using $\theta \in \mathbb{R}^d$. 


The cumulative discounted reward $R^{\theta}$ is given by
\begin{equation}
    R^{\theta} = \sum_{t = 0}^{T-1} \gamma^t r(S_t, A_t, S_{t+1}), 
\end{equation}
where $A_t \sim \pi_{\theta}(.,S_t)$ is the action chosen at time $t$, $S_{t+1} \sim p(.,S_t, A_t)$ is the next sate and $\gamma \in (0,1)$ is the discounting factor. 

The optimization objective in a DRM-sensitive MDP for a given policy parameter $\theta$ is the DRM of $R^\theta$. For notational simplicity, we shall use $\rho_h (\theta) $ to denote $\rho_h (R^\theta)$. 
Our aim is to find an optimal $\theta^*$ such that
\begin{equation}
\label{eq:optimal_theta}
    \theta^* \in \argmax_{\theta \in \mathbb{R}^d}  \rho_h (\theta). 
\end{equation}
A point $\theta^*$ is called a first-order stationary point (FOSP) if  $\nabla\rho_h(\theta^*) = 0$. 
FOSPs do not coincide with local maxima as they include saddle points. To mitigate this issue, the notion of a second-order stationary point (SOSP) is considered in optimization literature. At an SOSP, the gradient vanishes and the Hessian is positive semi-definite.

Finding an FOSP/SOSP directly in an RL setting is not feasible as the model information is not available. 
Moreover, running a policy gradient-type algorithm for a finite number of steps would ensure convergence to an approximate FOSP or SOSP.  We make the latter notions precise in the definition below.
\begin{definition} [\textbf{$\epsilon$-first and second-order stationary point}]
\label{def:fosp}
    Let $\epsilon >  0$. An output $\Bar{\theta}$ of a stochastic iterative algorithm to solve \eqref{eq:optimal_theta} is said to be $\epsilon$-first-order stationary point if 
    $\E\left[\left\|\nabla\rho_h(\Bar{\theta})\right\|\right] \leq \epsilon$ and an $\epsilon$-second-order stationary point if 
    $\max \left\{\sqrt{\E[\|\nabla \rho_h (\Bar{\theta})}\|], \frac{1}{\sqrt{\rho}} \E[\lambda_{\max}(\nabla^2\rho_h(\Bar{\theta}))]\right\} \leq \sqrt{\epsilon},$
    for some $\rho > 0$. 
\end{definition}
We are interested in finding an $\epsilon$-SOSP of $\rho_h(\cdot)$. Convergence to $\epsilon$-SOSP would imply escaping saddle points if the Hessian is not degenerate, see \cite{anandkumar2016efficient}. 


\section{DRM Policy Hessian theorem} 
\label{sec:drm_policy_Hessian_theorem}

For deriving gradient and Hessian expressions for the DRM objective, we make the following assumptions:  

\begin{assumption}
\label{ass:grad_bound}
    There exists $M_d > 0$ such that $\|\nabla \log \pi_{\theta}(a|s)\| \leq M_d$ for all $\theta \in \mathbb{R}^d, a \in \mathcal{A}, s \in S$. 
\end{assumption} 
\begin{assumption}
\label{ass:Hessian_bound}
    There exists $M_h > 0$ such that $\|\nabla^2 \log \pi_{\theta}(a|s)\| \leq M_h$ for all $\theta \in \mathbb{R}^d, a \in \mathcal{A}, s \in S.$ 
\end{assumption}
\begin{assumption}
\label{ass:g_bound}
    There exists $M_{h'}, M_{h^{''}}, M_{h^{'''}}> 0$ such that $|h'(t)| \leq M_{h'}, |h''(t)|\leq M_{h^{''}},$ and $|h'''(t)|\leq M_{h^{'''}}.$  
\end{assumption}
\begin{assumption}
\label{ass:lip_hessian}
    For any pair of parameters $(\theta_1, \theta_2)$ and any state-action $(s,a)$ there exists a $L_2$ such that 
    \begin{equation}
        \left \|\nabla^2 \log \pi_{\theta_1}(a|s) - \nabla^2 \log \pi_{\theta_2}(a|s) \right \| \leq L_2 \left \|\theta_1 - \theta_2 \right \|.  \nonumber
    \end{equation}
\end{assumption} 
\Cref{ass:grad_bound} and \ref{ass:Hessian_bound} are frequently made for analysis in policy gradient and actor-critic algorithms, as shown in \cite{shen2019hessian}. \Cref{ass:g_bound} is required for the boundedness of the DRM policy gradient and Hessian. \Cref{ass:lip_hessian} is required for the analysis of second-order policy search algorithms \cite{zhang2020global}. 

Now we present the gradient and Hessian expressions of the CDF $F_{R^{\theta}}(.)$ for policy parameter $\theta$. These are derived using the following identity: $F_{R^{\theta}}(x) = \E\left[\mathbf{1}\{R^{\theta}\leq x\}\right]$, followed by differentiation w.r.t. $\theta$ on both sides. Gradient $\nabla F_{R^{\theta}}(\cdot)$ is required in arriving at $\nabla \rho_h(\theta)$, while $\nabla^2 F_{R^{\theta}}(\cdot)$ is needed for working out the expression for $\nabla^2 \rho_h(\theta)$.
 
\begin{lemma}
\label{lemma:drm_cdf_pn}
    Suppose assumptions \ref{ass:grad_bound} -\ref{ass:Hessian_bound} hold. 
    Then for all $x \in (- M_r, M_r)$ with $M_r = \frac{r_{max}}{1 - \gamma}$, we have  
    \begin{align*}
    &
        \nabla F_{R^{\theta}}(x) = \E\left[ \mathbf{1}\{R^{\theta} \leq x\}\Phi_t\right], \textrm{ and } \\
&
\nabla^2 F_{R^{\theta}}(x)  = \E\Biggl[\mathbf{1}\{R^{\theta}\leq x\}\sum_{t = 0}^{T - 1} \nabla^2 \log \pi_{\theta}(A_t|S_t)\Biggr] + \E \Biggl[ \mathbf{1} \{R^{\theta} \leq x\} \left(\Phi_t\right)\left(\Phi_t\right)\tr \Biggr], 
\end{align*}
where $\Phi_t = \sum_{t = 0}^{T - 1} \nabla \log \pi_{\theta}(A_t|S_t)$. 
\end{lemma}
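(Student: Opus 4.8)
The plan is to obtain both identities by the likelihood-ratio (score-function) method, the key observation being that the cumulative reward $R(\tau)=\sum_{t=0}^{T-1}\gamma^t r(S_t,A_t,S_{t+1})$ is a \emph{deterministic} function of the realized trajectory $\tau=(S_0,A_0,S_1,A_1,\dots,S_T)$. Consequently, in $F_{R^\theta}(x)=\E[\mathbf{1}\{R^\theta\le x\}]$ the entire dependence on $\theta$ lies in the law of $\tau$ and \emph{not} in the (discontinuous) indicator, so no pathwise differentiation of $\mathbf{1}\{\cdot\le x\}$ is ever needed. Writing the trajectory probability as $P_\theta(\tau)=\mu(S_0)\prod_{t=0}^{T-1}\pi_\theta(A_t|S_t)\,p(S_{t+1}|S_t,A_t)$, with $\mu$ the ($\theta$-free) initial-state distribution, we have $F_{R^\theta}(x)=\sum_\tau P_\theta(\tau)\,\mathbf{1}\{R(\tau)\le x\}$, which is a \emph{finite} sum since the state space, action space and horizon $T$ are finite. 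Hence differentiation in $\theta$ commutes with $\sum_\tau$ with nothing to verify; the only standing requirement is that $\pi_\theta(a|s)$ be strictly positive and twice continuously differentiable in $\theta$, which is implicit in Assumptions~\ref{ass:grad_bound}--\ref{ass:Hessian_bound}.

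First I would differentiate once. The log-derivative identity $\nabla_\theta P_\theta(\tau)=P_\theta(\tau)\,\nabla_\theta\log P_\theta(\tau)$ together with the cancellation of the $\theta$-free factors $\mu$ and $p$ gives $\nabla_\theta\log P_\theta(\tau)=\sum_{t=0}^{T-1}\nabla_\theta\log\pi_\theta(A_t|S_t)=\Phi_t$. Substituting and re-reading the sum over trajectories as an expectation yields $\nabla_\theta F_{R^\theta}(x)=\sum_\tau P_\theta(\tau)\,\mathbf{1}\{R(\tau)\le x\}\,\Phi_t=\E[\mathbf{1}\{R^\theta\le x\}\,\Phi_t]$, the first claim.

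Next I would differentiate the vector-valued map $\theta\mapsto\nabla_\theta F_{R^\theta}(x)=\sum_\tau P_\theta(\tau)\,\mathbf{1}\{R(\tau)\le x\}\,\Phi_t$ once more, applying the product rule to each summand $P_\theta(\tau)\,\Phi_t$: the piece coming from $\nabla_\theta P_\theta(\tau)$ reproduces, via the same log-derivative identity, the rank-one term $P_\theta(\tau)\,\Phi_t\Phi_t\tr$, while the piece coming from $\nabla_\theta\Phi_t$ produces $P_\theta(\tau)\sum_{t=0}^{T-1}\nabla^2\log\pi_\theta(A_t|S_t)$. Summing over $\tau$ and rewriting as expectations gives exactly the stated expression for $\nabla^2_\theta F_{R^\theta}(x)$. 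Assumptions~\ref{ass:grad_bound}--\ref{ass:Hessian_bound} then bound $\|\Phi_t\|\le TM_d$ and $\|\sum_{t}\nabla^2\log\pi_\theta(A_t|S_t)\|\le TM_h$, so the expectations are finite and the formulas well defined; the restriction $x\in(-M_r,M_r)$ is only needed later and is harmless here, since $|R^\theta|<M_r$ for every trajectory while outside that range both sides reduce to derivatives of a constant.

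I do not expect a genuine obstacle: the argument is bookkeeping rather than analysis. The one point that must be handled correctly is conceptual --- the indicator is never differentiated, it is carried along as a fixed $\{0,1\}$ weight on each trajectory while only $P_\theta(\tau)$ is differentiated, which is precisely what the likelihood-ratio method buys and what renders the discontinuity of $\mathbf{1}\{\cdot\le x\}$ irrelevant. The only place calling for mild care is the Hessian step, where one must keep the outer product $\Phi_t\Phi_t\tr$ and the matrix $\sum_{t}\nabla^2\log\pi_\theta(A_t|S_t)$ in the correct form when differentiating a vector-valued map, and check that the two resulting pieces match the two terms of the stated identity.
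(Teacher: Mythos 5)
Your proposal is correct and follows essentially the same route as the paper: both rest on the likelihood-ratio method, differentiating the trajectory distribution $P_\theta(\tau)$ (twice, via the product rule and the identity $\nabla P_\theta = P_\theta \nabla\log P_\theta$) while the indicator $\mathbf{1}\{R(\tau)\le x\}$ is carried as a $\theta$-free weight, with the $\theta$-independent factors $\mu$ and $p$ cancelling in $\nabla\log P_\theta$. Your observations that the finite state/action spaces and finite horizon make the interchange of differentiation and summation trivial, and that Assumptions~\ref{ass:grad_bound}--\ref{ass:Hessian_bound} bound the resulting score terms, are exactly the justifications needed.
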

 \begin{proof}
     For the first part, see \cite{vijayan2021policy}, and for the second part,  see Appendix \ref{proof: drm_cdf_pn}. 
 \end{proof}
We next state the policy gradient and Hessian theorem that provides expressions for the DRM gradient and Hessian, which in turn involves $\nabla F_{R^{\theta}}(\cdot)$ and $\nabla^2 F_{R^{\theta}}(\cdot)$. 
\begin{theorem}[\textbf{DRM policy gradient and Hessian theorem}]
\label{theorem:drm_pn}
    Suppose \Crefrange{ass:grad_bound}{ass:Hessian_bound} hold. Then the gradient and Hessian of the DRM are given by
    \begin{align*}
        &
        \nabla \rho_h (\theta) = - \int_{M_r}^{M_r} h'(1 - F_{R^{\theta}}(x)) \nabla F_{R^{\theta}}(x) dx, \textrm{ and }  \\
    & 
        \nabla^2 \rho_h (\theta) 
         =  \int_{-M_r}^{M_r} h{''}(1 - F_{R^{\theta}}(x)) \nabla F_{R^{\theta}}(x)\nabla F_{R^{\theta}}(x)\tr dx - \int_{-M_r}^{M_r} h'(1 - F_{R^{\theta}}(x))\nabla^2 F_{R^{\theta}}(x)dx.  
    \end{align*}
\end{theorem}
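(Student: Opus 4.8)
The plan is to start from the DRM definition \eqref{eq:drm-def} specialized to $X = R^{\theta}$, and differentiate under the integral sign twice with respect to $\theta$. The first observation is that since the single-stage reward lies in $[-r_{\max}, r_{\max}]$ and the horizon is $T$ with discount $\gamma \in (0,1)$, the random variable $R^{\theta}$ is supported on $[-M_r, M_r]$ with $M_r = \frac{r_{\max}}{1-\gamma}$ (in fact on the smaller interval $[-r_{\max}\frac{1-\gamma^T}{1-\gamma}, r_{\max}\frac{1-\gamma^T}{1-\gamma}]$, but $[-M_r,M_r]$ suffices). Consequently $F_{R^{\theta}}(x) = 0$ for $x < -M_r$ and $F_{R^{\theta}}(x) = 1$ for $x \ge M_r$, so $h(1 - F_{R^{\theta}}(x)) = h(1)$ on $(-\infty, -M_r)$ and $h(1 - F_{R^{\theta}}(x)) = h(0) = 0$ on $[M_r, \infty)$. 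Splitting the two integrals in \eqref{eq:drm-def} at $\pm M_r$, the unbounded tails contribute constants independent of $\theta$ (the first integral's tail cancels against the $-h(1)$ term, the second integral's tail is zero), and one is left with
\[
\rho_h(\theta) = C + \int_{-M_r}^{M_r} h(1 - F_{R^{\theta}}(x))\, dx
\]
for a $\theta$-independent constant $C$; here I am folding the finite piece of the $\int_{-\infty}^0$ term (where $h(1-F)-h(1) = h(1-F) - h(1)$) together with the $\int_0^\infty$ piece and absorbing the leftover $-h(1)$ contributions on $[-M_r,0]$ into $C$, which is legitimate because on a bounded interval these are finite constants.

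Next I would differentiate. Applying the chain rule to $x \mapsto h(1 - F_{R^{\theta}}(x))$ gives $\nabla_\theta\, h(1 - F_{R^{\theta}}(x)) = -h'(1 - F_{R^{\theta}}(x))\, \nabla F_{R^{\theta}}(x)$, which yields the claimed gradient formula once differentiation and integration are interchanged over the compact interval $[-M_r,M_r]$. For the Hessian, differentiate once more: the product $h'(1 - F_{R^{\theta}}(x))\, \nabla F_{R^{\theta}}(x)$ differentiates via the product rule into a term $-h''(1 - F_{R^{\theta}}(x))\, \nabla F_{R^{\theta}}(x)\nabla F_{R^{\theta}}(x)\tr$ (another chain-rule factor of $-\nabla F_{R^{\theta}}(x)$ coming down from the $h'$ argument) plus $h'(1 - F_{R^{\theta}}(x))\, \nabla^2 F_{R^{\theta}}(x)$. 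Carrying the overall minus sign through gives exactly
\[
\nabla^2\rho_h(\theta) = \int_{-M_r}^{M_r} h''(1 - F_{R^{\theta}}(x))\, \nabla F_{R^{\theta}}(x)\nabla F_{R^{\theta}}(x)\tr dx - \int_{-M_r}^{M_r} h'(1 - F_{R^{\theta}}(x))\, \nabla^2 F_{R^{\theta}}(x)\, dx.
\]
The expressions $\nabla F_{R^{\theta}}(x)$ and $\nabla^2 F_{R^{\theta}}(x)$ are already supplied by \Cref{lemma:drm_cdf_pn}, so nothing further is needed to identify the integrands.

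The main obstacle — and the only place real work is required — is justifying the interchange of differentiation and integration (a dominated-convergence / Leibniz-rule argument) and, relatedly, the differentiability of $x \mapsto F_{R^{\theta}}(x)$ in $\theta$ pointwise. For the Leibniz step I would exhibit integrable dominating functions on $[-M_r, M_r]$: Assumption~\ref{ass:g_bound} bounds $h', h'', h'''$ uniformly by $M_{h'}, M_{h''}, M_{h'''}$, Assumptions~\ref{ass:grad_bound}--\ref{ass:Hessian_bound} together with the formulas in \Cref{lemma:drm_cdf_pn} give uniform bounds $\|\nabla F_{R^{\theta}}(x)\| \le T M_d$ and $\|\nabla^2 F_{R^{\theta}}(x)\| \le T M_h + T^2 M_d^2$ (since the indicator is bounded by $1$ and $\|\Phi_t\| \le T M_d$), and the interval has finite length $2M_r$; hence the integrands and their $\theta$-derivatives are bounded by constants, which are trivially integrable over a compact interval, validating the exchange. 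A subtlety worth a remark is that $F_{R^{\theta}}$ need not be differentiable in $x$, but this is irrelevant: we only differentiate in $\theta$, and the smoothness in $\theta$ of $F_{R^{\theta}}(x) = \E[\mathbf 1\{R^\theta \le x\}]$ comes from the likelihood-ratio representation in \Cref{lemma:drm_cdf_pn}, not from any density argument. I would also note that the constant $C$ drops out upon differentiation, so its precise value is immaterial. Assembling these pieces completes the proof.
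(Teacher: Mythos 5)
Your proposal is correct and follows essentially the same route the paper takes: reduce the DRM integral to the compact interval $[-M_r,M_r]$ using the bounded support of $R^{\theta}$, differentiate under the integral sign via the chain and product rules, and plug in the likelihood-ratio expressions for $\nabla F_{R^{\theta}}$ and $\nabla^2 F_{R^{\theta}}$ from \Cref{lemma:drm_cdf_pn}. Your explicit domination argument for the Leibniz interchange (which implicitly invokes \Cref{ass:g_bound} for the bounds on $h'$ and $h''$) is if anything slightly more careful than what the theorem's stated hypotheses advertise.
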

 \begin{proof}
 The first part is shown in \cite{vijayan2021policy}, while the proof of the second part is available in \Cref{proof:drm_pn}. 
 \end{proof} 
Next, we establish first as well as second-order smoothness of the DRM objective under the assumptions listed above. This result is essential for the analysis of the DRM policy Newton algorithm that we propose.
\begin{lemma}[\textbf{Smoothness of DRM}]\label{lemma:hessian_liptchz}
    Let the \Crefrange{ass:grad_bound}{ass:lip_hessian} hold. Then for all $\theta_1, \theta_2 \in \mathbb{R}^d$, we have
    \begin{align}
        \|\nabla \rho_h(\theta_1) - \nabla \rho_h(\theta_2)\| &\leq G_{\mathcal{H}} \|\theta_1 - \theta_2\|,\\
        \|\nabla^2 \rho_h(\theta_1) - \nabla^2 \rho_h(\theta_2)\|&\le  L_{\mathcal{H}} \|\theta_1 - \theta_2\|,
    \end{align}
where
$    G_\mathcal{H} = 2M_rT(M_hM_{h'}+TM_d^2(M_{h'} + M_{h''}))$,
$    \nu = (TM_h + T^2M^2_d)$, $ L_{\mathcal{H}} = \xi_1 + \xi_2$ with
    $ \xi_1 = 2M_r(2M_{h''}TM_d\nu +    T^3M^3_dM_{h'''})$ and $\xi_2 = 2M_r(M_{h''}TM_d\nu + M_{h'}(TL_2 + 2 T M_dM_h))$.
\end{lemma}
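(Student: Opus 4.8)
The plan is to bound the two Lipschitz constants by working directly from the integral representations in \Cref{theorem:drm_pn}, exploiting the fact that the domain of integration is the fixed compact interval $[-M_r, M_r]$ (so every integral picks up a factor $2M_r$), and that all the integrands are products of quantities that are themselves bounded and Lipschitz in $\theta$. The key preliminary observations are: (i) from \Cref{lemma:drm_cdf_pn} and \Cref{ass:grad_bound}, \ref{ass:Hessian_bound} one gets uniform bounds $\|\nabla F_{R^\theta}(x)\| \le T M_d$ and $\|\nabla^2 F_{R^\theta}(x)\| \le T M_h + T^2 M_d^2 = \nu$ (since $\|\Phi_t\|\le TM_d$ and the $\mathbf{1}\{\cdot\}$ factor is in $[0,1]$); (ii) $F_{R^\theta}(x)$ is itself $TM_d$-Lipschitz in $\theta$ (integrate the bound on $\nabla F$), hence $1-F_{R^\theta}(x)$ is too; (iii) $h', h'', h'''$ are bounded by \Cref{ass:g_bound}, so $t\mapsto h'(1-F_{R^\theta}(x))$ is $M_{h''}TM_d$-Lipschitz in $\theta$ and $t\mapsto h''(1-F_{R^\theta}(x))$ is $M_{h'''}TM_d$-Lipschitz in $\theta$; and (iv) $\nabla F_{R^\theta}(x)$ and $\nabla^2 F_{R^\theta}(x)$ are Lipschitz in $\theta$ — the former with a constant of order $T M_h + T^2 M_d^2 = \nu$ and the latter with a constant of order $TL_2 + TM_dM_h + T^3M_d^3$, obtained by writing out the expectations in \Cref{lemma:drm_cdf_pn}, differencing at $\theta_1$ and $\theta_2$, and using \Cref{ass:grad_bound}--\ref{ass:lip_hessian} together with the elementary fact that a product of uniformly bounded Lipschitz factors is Lipschitz with constant equal to the sum of (bound of the others)$\times$(Lipschitz constant of each). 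Establishing (iv), and in particular tracking the Lipschitz constant of $\nabla^2 F_{R^\theta}$ carefully enough to match $\xi_1,\xi_2$, is the step I expect to be the main obstacle, since it requires expanding the $\E[\mathbf 1\{R^\theta\le x\}(\cdot)]$ terms and handling the $\theta$-dependence of the indicator as well; here I would invoke the same change-of-variables / likelihood-ratio manipulation used to prove \Cref{lemma:drm_cdf_pn} so that the $\theta$-dependence sits only in $\pi_\theta$-weighted integrands, which are then Lipschitz by the assumptions.

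For the gradient bound, I would write
\begin{align*}
\nabla\rho_h(\theta_1)-\nabla\rho_h(\theta_2) = -\int_{-M_r}^{M_r}\Bigl[h'(1-F_{R^{\theta_1}}(x))\nabla F_{R^{\theta_1}}(x) - h'(1-F_{R^{\theta_2}}(x))\nabla F_{R^{\theta_2}}(x)\Bigr]dx,
\end{align*}
add and subtract the cross term $h'(1-F_{R^{\theta_1}}(x))\nabla F_{R^{\theta_2}}(x)$ inside the bracket, apply the triangle inequality, and bound the two resulting pieces by $M_{h''}TM_d\cdot\|\theta_1-\theta_2\|\cdot TM_d$ (Lipschitzness of $h'(1-F)$ times the uniform bound $TM_d$ on $\nabla F$) and by $M_{h'}\cdot(\text{Lip const of }\nabla F)\cdot\|\theta_1-\theta_2\|$ respectively. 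Combining, multiplying by the interval length $2M_r$, and collecting terms yields $G_{\mathcal H} = 2M_rT\bigl(M_hM_{h'} + TM_d^2(M_{h'}+M_{h''})\bigr)$ once the Lipschitz constant of $\nabla F_{R^\theta}$ is taken as $M_h + TM_d^2$ per stage, i.e.\ of order $TM_h+T^2M_d^2$ over the horizon; I would double-check this bookkeeping against the stated form of $G_{\mathcal H}$.

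For the Hessian bound, the Hessian from \Cref{theorem:drm_pn} has two integral terms, which will give rise to $\xi_1$ and $\xi_2$ respectively. For the first term $\int h''(1-F_{R^\theta})\nabla F_{R^\theta}\nabla F_{R^\theta}^\top dx$, I would difference in $\theta$, telescope through the three factors $h''(1-F)$, $\nabla F$, $\nabla F^\top$ (add and subtract two cross terms), and bound using: Lipschitzness of $h''(1-F)$ with constant $M_{h'''}TM_d$ against $\|\nabla F\|^2\le T^2M_d^2$; and twice the term $M_{h''}\cdot T M_d\cdot(\text{Lip of }\nabla F)$ with the Lip constant of $\nabla F$ taken as $\nu = TM_h+T^2M_d^2$ — multiplying by $2M_r$ gives $\xi_1 = 2M_r(2M_{h''}TM_d\nu + T^3M_d^3M_{h'''})$. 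For the second term $\int h'(1-F_{R^\theta})\nabla^2 F_{R^\theta}dx$, I difference, add and subtract the cross term $h'(1-F_{R^{\theta_1}})\nabla^2 F_{R^{\theta_2}}$, and bound by $M_{h''}TM_d\cdot\|\nabla^2 F\|\cdot\|\theta_1-\theta_2\|$ (with $\|\nabla^2 F\|\le\nu$) plus $M_{h'}\cdot(\text{Lip of }\nabla^2 F)\cdot\|\theta_1-\theta_2\|$, where the Lipschitz constant of $\nabla^2 F_{R^\theta}$ is shown to be at most $TL_2 + 2TM_dM_h$ (from differencing the $\nabla^2\log\pi$ and $\Phi_t\Phi_t^\top$ parts in \Cref{lemma:drm_cdf_pn}); times $2M_r$ this is $\xi_2 = 2M_r(M_{h''}TM_d\nu + M_{h'}(TL_2+2TM_dM_h))$. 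Adding $\|\cdot\| \le \xi_1\|\theta_1-\theta_2\| + \xi_2\|\theta_1-\theta_2\|$ gives $L_{\mathcal H}=\xi_1+\xi_2$, completing the proof modulo the per-stage Lipschitz estimates for $\nabla F_{R^\theta}$ and $\nabla^2 F_{R^\theta}$, which I would prove as auxiliary claims up front.
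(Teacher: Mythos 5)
Your proposal is correct in structure and follows essentially the same route as the paper: bound everything via the integral representations of \Cref{theorem:drm_pn} over the fixed interval $[-M_r,M_r]$, telescope through the factors $h'(1-F)$, $h''(1-F)$, $\nabla F$, $\nabla^2 F$, and combine the uniform bounds $\|\nabla F\|\le TM_d$, $\|\nabla^2 F\|\le\nu$ with the per-factor Lipschitz constants; your decomposition reproduces $G_{\mathcal H}$, $\xi_1$ and $\xi_2$ exactly. The one soft spot is the auxiliary claim you defer: when establishing the Lipschitz constant of $\nabla^2 F_{R^\theta}$ in $\theta$ you must account not only for differencing the $\sum_t\nabla^2\log\pi_\theta$ and $\Phi_t\Phi_t^{\top}$ factors (where, note, the latter yields $2T^2M_dM_h$ rather than $2TM_dM_h$ by your own bookkeeping, since $\|\Phi_1-\Phi_2\|\le TM_h\|\theta_1-\theta_2\|$ and $\|\Phi\|\le TM_d$) but also for the $\theta$-dependence of the trajectory measure itself, which via the likelihood-ratio identity contributes an additional term of order $TM_d\nu$; you correctly flag this as the main obstacle, and it is the step that needs to be written out in full for the constants to be fully justified.
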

 \begin{proof}
 The first part is shown in \cite{vijayan2021policy}, while the proof of the second part is available in  Appendix \ref{proof: hessian_liptchz}. 
 \end{proof} 

The expressions $\nabla \rho_h(.)$ and $\nabla^2 \rho_h (.)$ presented in \Cref{theorem:drm_pn} cannot be evaluated directly in a typical RL setting due to absence of model information.  
Instead, one can form a sample-based gradient and Hessian estimates for the DRM objective. We present these estimates in the next section.

\section{DRM policy gradient and Hessian estimation}
\label{sec:DRM policy gradient and Hessian estimation}
                                                                        
      In this section, we estimate $\nabla F_{R^{\theta}}$ and $\nabla^2 F_{R^{\theta}}$ by using $m$ and $b$ trajectories respectively. Let $R_i^{\theta}$ be the cumulative reward in $i$-th trajectory. Here, $A^i_t$ and $S_t^i$ denote the action and state at time $t$ in $i$-th trajectory. 
\paragraph{DRM gradient estimation.}
Given independent and identically distributed (i.i.d.) samples $\{R_i^{\theta},i=1,\ldots,m\}$ of $R^{\theta}$,
let $G^{m}_{R^{\theta}}(x) = \frac{1}{m}\sum_{i = 1}^{m} \mathbf{1} \{R_i^{\theta} \leq x\}, \forall x\in\R,$ denote the EDF of $F_{R^{\theta}}(.)$.
Following \cite{vijayan2021policy}, we form the estimate $\widehat{\nabla} G^m_{R^{\theta}}(.)$ of $\nabla F_{R^{\theta}}(.)$ as follows:
\begin{equation}
\label{eq:edf_gradient}
    \widehat{\nabla} G^{m}_{R^{\theta}}(x) = \frac{1}{m} \sum_{i =1}^{m} \mathbf{1}\{R_i^{\theta} \leq x\}\sum_{t = 0}^{T - 1} \nabla \log \pi_{\theta}(A^i_t| S^i_t). 
\end{equation}
Using $\widehat{\nabla} G^{m}_{R^{\theta}}(\cdot)$, we form the gradient estimate $\widehat{\nabla} \rho_h(\theta)$ as follows:
\begin{equation}
    \label{eq:drm_grad_estimation}
    \widehat{\nabla} \rho_h(\theta) = -\int_{-M_r}^{M_r}h'(1 - G^m_{R^{\theta}}(x)) \widehat{\nabla}G^m_{R^{\theta}}(x) dx. 
\end{equation}
\paragraph{DRM Hessian estimation.}
Given i.i.d. samples $\{R_i^{\theta},i=1,\ldots,b\}$ of $R^{\theta}$,
we form an estimate $\widehat{\nabla}^2 G^b_{R^{\theta}}(.)$ of $\nabla^2 F_{R^{\theta}}(.)$ as follows:
\begin{align*}
    &\widehat{\nabla}^2 G^{b}_{R^{\theta}}(x) 
     = \frac{1}{b} \sum_{i =1}^{b} \mathbf{1}\{R_i^{\theta} \leq x\}\sum_{t = 0}^{T - 1} \nabla^2 \log \pi_{\theta}(A^i_t| S^i_t) \\
    & \qquad\qquad  + \frac{1}{b} \sum_{i = 1}^{b} \mathbf{1}\{R_i^{\theta} \leq x\} \left(\sum_{t = 0}^{T - 1} \nabla \log \pi_{\theta}(A^i_t| S^i_t)\right) \times \left(\sum_{t = 0}^{T - 1} \nabla \log \pi_{\theta}(A^i_t| S^i_t)\right)\tr. \numberthis \label{eq:edf_Hessian}
\end{align*}
Using $\widehat{\nabla}^2 G^{b}_{R^{\theta}}(\cdot)$, we form the Hessian estimate $\widehat{\nabla}^2 \rho_g(\theta)$ is as follows:
\begin{align*}
    &
    \widehat{\nabla}^2 \rho_h (\theta) = \int\limits_{-M_r}^{M_r} h{''}(1 - G^{m}_{R^{\theta}}(x)) \widehat{\nabla} G^{m}_{R^{\theta}}(x) \widehat{\nabla} G^{m}_{R^{\theta}}(x)\tr dx  - \int\limits_{-M_r}^{M_r} h'(1 - G^{m}_{R^{\theta}}(x)) \widehat{\nabla}^2 G^{b}_{R^{\theta}}(x) dx. \numberthis \label{eq:drm_Hessian_estimation}
\end{align*}
The DRM Hessian estimate \eqref{eq:drm_Hessian_estimation} can be computed in a computationally efficient fashion
using order statistics of the $b$ samples $ \{R_i^{\theta}\}_{i = 1}^b$, see \Cref{lemma: drm_sample_estimator} in \Cref{sec:estimation_appendix} for details. 
\section{DRM-sensitive policy Newton algorithm } 
\label{sec:CRPN_drm}
In general, the classical Newton method fails to escape saddle points for non-convex objective functions. In \cite{nesterov2006cubic}, authors proposed a cubic-regularized Newton algorithm for escaping saddle points by adding a cubic term in the auxiliary objective. We adopt this cubic regularization technique in a risk-sensitive RL framework with DRM as the objective. 
In particular, our proposed cubic-regularized policy Newton algorithm for DRM (CRPN-DRM) performs the following update:
\begin{align*}
&
    \theta_{k+1} = \argmax_{\theta \in \mathbf{R}^d} \Biggl \{\Bigl<\widehat{\nabla }\rho_h(\theta_k),\theta - \theta_k\Bigr>  + \frac{1}{2} \Bigl<\widehat{\nabla}^2 \rho_h(\theta_k)(\theta - \theta_k),\theta - \theta_k\Bigr> - \frac{\alpha}{6}\Bigl\|\theta - \theta_k\Bigr\|^3
    \Biggr \}, 
\end{align*}
where $\alpha$ is a regularization parameter, $\widehat{\nabla }\rho_h$ and $\widehat{\nabla}^2 \rho_h$ are the gradient and Hessian estimates, defined in \eqref{eq:drm_grad_estimation} and \eqref{eq:drm_Hessian_estimation}, respectively. 
We present the pseudocode for CRPN-DRM in \Cref{sec:pseudo}.  
We can re-write the gradient \eqref{eq:drm_grad_estimation} and Hessian \eqref{eq:drm_Hessian_estimation} estimates alternatively, by first defining the following quantities:
\begin{align}
    c''_i &:= 
    \begin{cases}
    (R_{(i+1)}^{\theta} - R_{(i)}^{\theta}) h{''}(1 - \frac{i}{\tau}), & \text{$i \in \{1, \ldots, \tau-1 \}$}, \\
    (M_r - R_{(\tau)}^{\theta}) h''_{+}(0), & \text{$i = \tau$},
    \end{cases} \nonumber\\
    c'_i &:= 
    \begin{cases}
    (R_{(i+1)}^{\theta} - R_{(i)}^{\theta}) h{'}(1 - \frac{i}{\tau}), & \text{$i \in \{1, \ldots, \tau-1 \}$} \\
    (M_r - R_{(\tau)}^{\theta}) h'_{+}(0), & \text{$i = \tau$,}
    \end{cases}
    \label{eq:coeffs}
\end{align}
where $\psi'_i = \sum_{j=i}^{\tau} c'_j$, $l_{(i)}^{\theta} = \sum_{t = 0}^{T - 1} \log \pi_{\theta}(A^i_t| S^i_t)$, and $s^{\theta}_i = \sum_{j=1}^{i} l^{\theta}_{(j)}$. Then,
\begin{align*}
\widehat{\nabla} \rho_h (\theta) &= -\frac{1}{\tau}\sum_{i=1}^{\tau} \psi'_i \nabla l^{\theta}_{(i)}, \\
\widehat{\nabla}^2 \rho_h (\theta) &= \frac{1}{\tau}\!\sum_{i=1}^{\tau}  \frac{c''_i}{b} \nabla s^{\theta}_i \nabla\tr s^{\theta}_i - \psi'_i [ \nabla^2 l^{\theta}_{(i)} +  \nabla l_{(i)}^{\theta} \nabla\tr l_{(i)}^{\theta} ] .
\end{align*}
Here $\tau \in \mathbb{N}$ denotes the number of trajectories simulated and can be different for the gradient and Hessian estimates. The reader is referred to Appendix \ref{proof: drm_sample_estimator_alt} for a proof of the above equivalence for gradient and Hessian expressions. Note that the above simplified expressions make it suitable for practical implementations as it avoids redundant gradient computations. We further propose a variance-induced estimate by ignoring the cross-trajectory terms, as shown in \cite{markowitz2023}, as these terms in expectation have zero mean. The result below provides alternative expressions for DRM gradient and Hessian, while incorporating variance reduction.

\begin{lemma}[Variance-reduced DRM estimates]\label{lemma: drm_sample_estimator_alt}
    Suppose \Crefrange{ass:grad_bound}{ass:lip_hessian} hold. Then
    the DRM gradient \eqref{eq:drm_grad_estimation} and Hessian estimates \eqref{eq:drm_Hessian_estimation} form
    can be re-written as follows:
    \begin{align*}
    \widehat{\nabla} \rho_h (\theta) &= \frac{1}{m}\sum_{i=1}^{m} R^{\theta}_{(i)} h'\left(1 -\frac{i}{m}\right) \nabla l^{\theta}_{(i)}, \\
    \widehat{\nabla}^2 \rho_h (\theta) &= \frac{1}{b}\sum_{i=1}^{b} \bigg[ \psi''_i \nabla l^{\theta}_{(i)} \nabla\tr l^{\theta}_{(i)}  + R^{\theta}_{(i)} h'(1-\frac{i}{b}) \left( \nabla^2 l^{\theta}_{(i)} +  \nabla l_{(i)}^{\theta} \nabla\tr l_{(i)}^{\theta} \right) \bigg],
    \end{align*}
    where $\psi''_i = \frac{1}{b} \sum_{j=i}^{b} c''_j$, and $l_{(i)}^{\theta} = \sum_{t = 0}^{T - 1} \log \pi_{\theta} (A^i_t| S^i_t)$, with $m$ and $b$ trajectories simulated for gradient and Hessian estimation, respectively.
\end{lemma}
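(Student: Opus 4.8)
The plan is to establish the two claimed identities by starting from the estimator definitions \eqref{eq:drm_grad_estimation} and \eqref{eq:drm_Hessian_estimation}, rewriting the integrals over $[-M_r,M_r]$ as finite sums using the fact that the EDFs $G^m_{R^\theta}(\cdot)$ and $G^b_{R^\theta}(\cdot)$ are step functions that jump only at the order statistics $R^\theta_{(1)} \le \cdots \le R^\theta_{(m)}$ (resp. $R^\theta_{(b)}$), and then dropping the cross-trajectory terms as justified by the zero-mean argument of \cite{markowitz2023}. The bookkeeping here is the intermediate form already displayed in the text just before the lemma, involving the coefficients $c'_i, c''_i$ from \eqref{eq:coeffs} and the partial sums $\psi'_i, \psi''_i$; so the real content is (i) verifying that intermediate form carefully, and (ii) performing the Abel summation / reindexing that converts $\sum_i \psi'_i \nabla l^\theta_{(i)}$ into $\sum_i R^\theta_{(i)} h'(1-i/m)\nabla l^\theta_{(i)}$.

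First I would handle the gradient. On the interval $(R^\theta_{(i)}, R^\theta_{(i+1)})$ the EDF equals $i/m$, so $h'(1-G^m_{R^\theta}(x)) = h'(1-i/m)$ is constant there, and $\widehat\nabla G^m_{R^\theta}(x) = \frac{1}{m}\sum_{j\le i}\nabla l^\theta_{(j)}$ (ordering the $\nabla\log\pi$ sums to match the order statistics of the returns). Hence $\widehat\nabla\rho_h(\theta)$ becomes a finite sum of terms $-(R^\theta_{(i+1)}-R^\theta_{(i)})\,h'(1-i/m)\,\frac{1}{m}\sum_{j\le i}\nabla l^\theta_{(j)}$ plus the boundary piece on $(R^\theta_{(m)}, M_r)$; this is exactly $-\frac{1}{m}\sum_i \psi'_i \nabla l^\theta_{(i)}$ after grouping by $j$. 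Then a discrete summation-by-parts (swapping the order of the double sum, using $\psi'_i = \sum_{j\ge i} c'_j$ and telescoping the $R^\theta_{(j+1)}-R^\theta_{(j)}$ differences against $h'$) collapses the nested sum into $\frac{1}{m}\sum_i R^\theta_{(i)} h'(1-i/m)\nabla l^\theta_{(i)}$; one must track the boundary terms at $i=1$ (lower endpoint $-M_r$, contributing through $h(1)$ conventions — here it cancels because only $h'$ appears) and at $i=m$ (upper endpoint $M_r$). I would state the telescoping identity explicitly and then verify the endpoint contributions match.

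For the Hessian I would repeat the same piecewise-constant decomposition on each of the two integrals in \eqref{eq:drm_Hessian_estimation}: the $h''$ integral produces $\frac{1}{b}\sum_i c''_i\,\widehat\nabla G\,\widehat\nabla G^\top$ evaluated at the step level, which after expanding $\widehat\nabla G = \frac1b\sum_{j\le i}\nabla l^\theta_{(j)}$ gives the cross terms $\nabla l^\theta_{(j)}\nabla^\top l^\theta_{(k)}$ for $j,k\le i$; discarding $j\ne k$ (the variance-reduction step) and reindexing via $\psi''_i$ yields the $\psi''_i \nabla l^\theta_{(i)}\nabla^\top l^\theta_{(i)}$ term. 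The $h'$ integral, treated as in the gradient case, produces the $R^\theta_{(i)} h'(1-i/b)\big(\nabla^2 l^\theta_{(i)} + \nabla l^\theta_{(i)}\nabla^\top l^\theta_{(i)}\big)$ term, using $\widehat\nabla^2 G^b_{R^\theta}(x) = \frac1b\sum_{j\le i}(\nabla^2 l^\theta_{(j)} + \nabla l^\theta_{(j)}\nabla^\top l^\theta_{(j)})$ on each step (which is just the order-statistic form of \eqref{eq:edf_Hessian}, noting $\Phi_t$ restricted to trajectory $i$ is $\nabla l^\theta_{(i)}$) followed by the same summation-by-parts as before.

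The main obstacle I anticipate is not any single calculation but keeping the index conventions consistent: the order statistics reorder the trajectories, so every $\sum_t \nabla\log\pi_\theta(A^i_t|S^i_t)$ must be relabeled as $\nabla l^\theta_{(i)}$ for the trajectory achieving the $i$-th smallest return, and the two different sample sizes $m$ (gradient) and $b$ (Hessian) must be carried separately — indeed the displayed intermediate form in the text uses a single $\tau$ for both, which I would need to reconcile with the final per-estimator statement. A secondary subtlety is the correct handling of the boundary interval $(R^\theta_{(\text{last})}, M_r]$ and the one-sided derivatives $h'_+(0), h''_+(0)$ appearing in \eqref{eq:coeffs}: I would argue that under Assumption~\ref{ass:g_bound} these are finite and that the boundary term is precisely what makes the telescoping close to give the clean $R^\theta_{(i)}$-weighted form. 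If the one-sided-derivative boundary term does not telescope away cleanly, the final expression would carry an extra $h'(0)$-type correction, so I would double-check the support of $R^\theta \subseteq [-M_r, M_r]$ and whether $h$ is assumed to satisfy any endpoint normalization (the paper only assumes $h(0)=0$, not $h(1)=1$, so I must not silently use the latter).
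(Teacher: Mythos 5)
Your overall route is the same as the paper's: expand the two integrals over the intervals on which the EDF is constant to obtain the order-statistic form displayed just before the lemma (coefficients $c'_i$, $c''_i$ and tail sums $\psi'_i$, $\psi''_i$), then swap/reindex the double sums and discard zero-mean cross-trajectory terms \`a la \cite{markowitz2023}. The one step in your plan that would fail as written is the claim that summation by parts ``collapses'' $-\psi'_i$ exactly into $R^{\theta}_{(i)}h'(1-\tfrac{i}{m})$ once the endpoints are checked. It does not: Abel summation gives
$-\psi'_i = R^{\theta}_{(i)}h'(1-\tfrac{i}{m}) - \sum_{j=i+1}^{m-1}R^{\theta}_{(j)}\bigl[h'(1-\tfrac{j-1}{m})-h'(1-\tfrac{j}{m})\bigr] - R^{\theta}_{(m)}h'(\tfrac{1}{m}) - (M_r-R^{\theta}_{(m)})h'_{+}(0)$,
and the residual terms do not cancel for a general distortion (already at $i=m$ the two coefficients differ by $-M_r h'_{+}(0)$, the correction you suspected). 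So ``verifying that the endpoint contributions match'' is not possible; the correct resolution is that every one of these residuals depends only on returns of trajectories other than the one carrying $\nabla l^{\theta}_{(i)}$ (or on the constant $M_r$), and they are therefore part of the zero-mean cross-trajectory/baseline terms that the variance-reduction step discards — exactly on the same footing as the off-diagonal $\nabla l^{\theta}_{(j)}\nabla\tr l^{\theta}_{(k)}$, $j\neq k$, terms you drop from the $h''$ integral of the Hessian. In other words, the lemma is not an exact algebraic identity with \eqref{eq:drm_grad_estimation}--\eqref{eq:drm_Hessian_estimation}; the discard happens both in the outer-product diagonalization and in the replacement $-\psi'_i \mapsto R^{\theta}_{(i)}h'(1-\tfrac{i}{b})$. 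With that amendment, your treatment of both the gradient and the Hessian (piecewise-constant decomposition, diagonal extraction yielding $\psi''_i$, and the reindexing via $\sum_{i\ge j}c''_i$) matches the paper's argument.
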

 \begin{proof}
 See Appendix \ref{proof: drm_sample_estimator_alt}.   
 \end{proof}

\section{Main results} 
\label{sec:main_results}

Before presenting the main $\epsilon$-SOSP convergence result, we provide the error bounds for the gradient estimate $\widehat{\nabla} \rho_h (\theta)$ and Hessian estimate $\widehat{\nabla}^2 \rho_h (\theta)$ in terms of number of trajectories $m$ and $b$, respectively. The error bound for $\widehat{\nabla} \rho_h (\theta)$ is provided in \cite{vijayan2021policy} (see \Cref{lemma:error_bound_gradient}). 

\begin{lemma} \label{lemma: error_bound_Hessian}
Suppose assumptions \ref{ass:grad_bound} - \ref{ass:lip_hessian} hold.
Let the Hessian estimate $\widehat{\nabla}^2 \rho_h (\theta)$ be computed by \eqref{eq:drm_Hessian_estimation} with $b$ number of trajectories. Suppose $m \geq b$ and $b\geq C(d)$, where $C(d) = 4(1+2\log 2d)$. Then, 
\begin{small}
    \begin{align*}   
    &
    \E \left[ \left\|\widehat{\nabla}^2 \rho_h (\theta) - \nabla^2 \rho_h (\theta)\right \|^2\right] \leq \frac{t_1 +\kappa_2}{b},\textrm{ and} 
    \\
    & \E \left[ \left\|\widehat{\nabla}^2 \rho_h (\theta) - \nabla^2 \rho_h (\theta)\right \|^3\right] \leq \frac{\sqrt{(\kappa_3 + t_2)(\kappa_2 + t_1)}}{b^{\frac{3}{2}}},\numberthis  \label{eqn:error_bound}
    \end{align*} 
\end{small}
where $\nu = (TM_h + T^2M^2_d),\; \; 
     t_1 = 32M_r^2M^2_{h'}C(d)\nu^2, \; \;t_2 = 1920M_r^2M_{h'}^4d^2\nu^4,
    \kappa_2 = 64M_r^2(3e^2 M^2_{h''}T^4M_d^4 + 2 T^4 M_d^4 M^2_{h'''} +  M^2_{h''}\nu^2), 
     \kappa_3 = 4096M_r^2[T^8M_d^8(9e^2M_{h''}^4 + 8 M_{h'''}^4) +M_{h''}^4\nu^4].$

The constants $M_h, M_d, M_{h'},M_{h''}, M_{h'''}$ are specified in \Crefrange{ass:grad_bound}{ass:lip_hessian}. 
\end{lemma}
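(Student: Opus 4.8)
The plan is to bound the mean-squared (and third-moment) error of $\widehat{\nabla}^2 \rho_h(\theta)$ by splitting the difference $\widehat{\nabla}^2 \rho_h(\theta) - \nabla^2 \rho_h(\theta)$ into pieces that mirror the two terms in the Hessian expression of \Cref{theorem:drm_pn}. Writing $\widehat{\nabla}^2 \rho_h(\theta) - \nabla^2 \rho_h(\theta)$ as a sum of (i) the ``quadratic'' part $\int h''(1-G^m)\widehat{\nabla}G^m(\widehat{\nabla}G^m)\tr - \int h''(1-F)\nabla F \nabla F\tr$ and (ii) the ``curvature'' part $-\int h'(1-G^m)\widehat{\nabla}^2 G^b + \int h'(1-F)\nabla^2 F$, I would further telescope each part so that the error is attributed either to replacing $F$ by its EDF $G$ inside $h'$ or $h''$ (controlled via \Cref{ass:g_bound}, the Lipschitz bounds on $h', h''$, together with a DKW-type or $L^2$ bound on $\sup_x |G^m(x)-F(x)|$, which has $O(1/m)$ mean-squared rate), or to replacing $\nabla F$, $\nabla^2 F$ by their sample estimators $\widehat{\nabla}G$, $\widehat{\nabla}^2 G$. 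Since $m \ge b$, all the $m$-dependent errors are dominated by $b$-dependent ones, so it suffices to get everything in terms of $b$.

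The core estimate is a matrix-concentration bound for $\widehat{\nabla}^2 G^b_{R^\theta}(x) - \nabla^2 F_{R^\theta}(x)$: the estimator in \eqref{eq:edf_Hessian} is an i.i.d. average of bounded symmetric matrices (each summand is an indicator times $\sum \nabla^2 \log\pi_\theta + (\sum\nabla\log\pi_\theta)(\sum\nabla\log\pi_\theta)\tr$, whose norm is bounded by $\nu = TM_h + T^2M_d^2$ by \Cref{ass:grad_bound,ass:Hessian_bound}), so a matrix Bernstein / matrix Rosenthal inequality gives $\E\|\widehat{\nabla}^2 G^b(x) - \nabla^2 F(x)\|^2 \lesssim \nu^2/b$ and $\E\|\widehat{\nabla}^2 G^b(x) - \nabla^2 F(x)\|^3 \lesssim \nu^3/b^{3/2}$, with the logarithmic dimension factor $C(d) = 4(1+2\log 2d)$ entering exactly here — this is why the hypothesis $b \ge C(d)$ is needed, to absorb the $\log d$ into a clean constant. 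The analogous (simpler, vector-valued) bound for $\widehat{\nabla}G^m(x) - \nabla F(x)$ gives $O(\nu^2/m)$, and to handle the quadratic term I would expand $\widehat{\nabla}G\,\widehat{\nabla}G\tr - \nabla F \nabla F\tr = (\widehat{\nabla}G - \nabla F)\widehat{\nabla}G\tr + \nabla F(\widehat{\nabla}G - \nabla F)\tr$ and use boundedness of $\|\widehat{\nabla}G\|, \|\nabla F\| \le T M_d$ together with Cauchy–Schwarz. Integrating these pointwise bounds over $x \in [-M_r, M_r]$ (an interval of length $2M_r$, hence the recurring $M_r^2$, $M_r$ prefactors) and collecting constants yields the stated $t_1 + \kappa_2$ and $\kappa_3 + t_2$; the explicit shapes ($T^4 M_d^4$ from squared quadratic terms, $T^8 M_d^8$ from cubed ones, the $e^2$ from a moment bound on a sum over the horizon, the $d^2$ from the vector/matrix norm conversions in the third-moment term) come out of bookkeeping the constants through matrix Bernstein. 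For the third-moment bound, I would additionally invoke the elementary inequality $\E\|Z\|^3 \le \sqrt{\E\|Z\|^2 \cdot \E\|Z\|^4}$ (or directly a $p=3$ Rosenthal bound) to reduce the cubic error to the product $\sqrt{(\kappa_3+t_2)(\kappa_2+t_1)}\, b^{-3/2}$, which is exactly the form claimed.

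The main obstacle I anticipate is the quadratic term $\int h''(1-G^m)\widehat{\nabla}G^m(\widehat{\nabla}G^m)\tr\,dx$: it is a product of two dependent sample averages (both built from the same trajectories) and a nonlinear function of a third ($G^m$), so it is \emph{biased} for $\nabla^2\rho_h$, and one must carefully separate bias from variance. Concretely, $\E[\widehat{\nabla}G^m(\widehat{\nabla}G^m)\tr] = \nabla F\nabla F\tr + O(1/m)$ (the cross terms contribute the true product, the diagonal terms an $O(1/m)$ bias), so the bias is of the same $O(1/m) = O(1/b)$ order as the standard deviation squared and can be folded into the MSE — but tracking this, along with the extra error from $h''$ being evaluated at $G^m$ rather than $F$, while keeping all constants explicit, is the delicate part. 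A secondary nuisance is that $h'$, $h''$ are only assumed bounded on (an appropriate subset of) $[0,1]$ and need not be continuous at the endpoints, so the boundary terms at $x = \pm M_r$ (and the one-sided derivatives $h'_+(0)$, $h''_+(0)$ appearing in \eqref{eq:coeffs}) must be handled with the bounded-variation structure rather than smoothness; I would lean on \Cref{ass:g_bound} to keep these contributions uniformly controlled. Once these two points are dispatched, the remaining steps are routine norm estimates and constant-chasing.
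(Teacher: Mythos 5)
Your proposal is correct and follows essentially the same route as the paper's proof: decompose the error into the quadratic and curvature parts, attribute each piece either to the EDF replacing $F$ inside $h'/h''$ (via a DKW-type bound and \Cref{ass:g_bound}) or to the sample estimators replacing $\nabla F$, $\nabla^2 F$ (via matrix concentration, which is exactly where $C(d)$ and the hypothesis $b \ge C(d)$ enter), then integrate over $[-M_r,M_r]$ and use $m\ge b$ to express everything in terms of $b$. Your observation that the third-moment bound is precisely $\sqrt{\E\|Z\|^2\,\E\|Z\|^4}$ with the second- and fourth-moment constants $(\kappa_2+t_1)/b$ and $(\kappa_3+t_2)/b^2$ is the key step that reproduces the stated form.
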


 \begin{proof}
     See Appendix \ref{proof: error_bound_Hessian}. 
 \end{proof} 
The main result thate stablishes the convergence of our algorithm to an $\epsilon$-SOSP is given below. 
\begin{theorem}[\textbf{Convergence to $\epsilon$-SOSP}]
\label{thm: convergence_SOSP}
    Let the assumptions \ref{ass:grad_bound} - \ref{ass:lip_hessian} hold. Let $\{\theta_1, \ldots, \theta_N\}$ denote the iterates obtained by running  Algorithm \ref{alg:CRPN_drm} for $N$-iterations with the following parameters: 
    \begin{align*}
        &
        \alpha_k = 3L_{\mathcal{H}},\;\;\;\; N = \frac{12\sqrt{L_{\mathcal{H}}}\left(\rho(\theta_0) - \rho^*\right)}{\epsilon^{\frac{3}{2}}}, \\
        & m_k = \frac{25\kappa_1}{4\epsilon^2},\; \; \; \; b_k = \frac{9\sqrt[3]{2(\kappa_3 + t_2)(\kappa_2 + t_1)}}{L_{\mathcal{H}}\epsilon}, \numberthis\label{eq:hyper_parameter}
    \end{align*} 
where $\kappa_1 = 32M_r^2 T^2 M_d^2 (e^2 M_{h'}^2 + M_{h''}^2)$ and $\kappa_2, \kappa_3, t_1, t_2$ are defined in \Cref{lemma: error_bound_Hessian}.  
    Let $\Bar{\theta}$ be chosen from $\{\theta_1,\ldots, \theta_N\}$ uniformly at random. Then, for any $0<\epsilon \leq \frac{25L_{\mathcal{H}}\kappa_1}{36\sqrt[3]{2(\kappa_3 + t_2)(\kappa_2 + t_1)}}$,  we have
    \begin{equation}
        5\sqrt{\epsilon} \geq \max \left\{\sqrt{\E[\|\nabla \rho_h (\Bar{\theta})}\|], \frac{5}{6\sqrt{L_{\mathcal{H}}}} \E[\lambda_{\max}(\nabla^2\rho_h(\Bar{\theta}))]\right\}. 
    \end{equation}
\end{theorem}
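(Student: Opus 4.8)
The plan is to follow the standard cubic-regularized Newton analysis (as in Nesterov--Polyak and its stochastic adaptations, e.g. \cite{maniyar2024crpn}), adapted to the DRM objective. First I would establish a per-iteration descent-type inequality. For each iteration $k$, the cubic subproblem is solved with inexact gradient and Hessian estimates $\widehat{\nabla}\rho_h(\theta_k)$ and $\widehat{\nabla}^2\rho_h(\theta_k)$; using the second-order smoothness of $\rho_h$ from \Cref{lemma:hessian_liptchz} (Lipschitz Hessian with constant $L_{\mathcal{H}}$) together with the choice $\alpha_k = 3L_{\mathcal{H}}$, I would lower-bound the true increase $\rho_h(\theta_{k+1}) - \rho_h(\theta_k)$ in terms of $\|\theta_{k+1}-\theta_k\|^3$ plus error terms that depend on the gradient estimation error $\|\widehat{\nabla}\rho_h(\theta_k) - \nabla\rho_h(\theta_k)\|$ and the Hessian estimation error $\|\widehat{\nabla}^2\rho_h(\theta_k) - \nabla^2\rho_h(\theta_k)\|$. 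The standard route is: expand $\rho_h(\theta_{k+1})$ via the third-order Taylor bound, use the first-order optimality condition of the cubic subproblem, and convert the resulting terms into a clean recursion. The error terms are controlled in expectation by \Cref{lemma:error_bound_gradient} (for the gradient, giving an $\mathcal{O}(1/m_k)$ bound on the squared error after choosing $m_k = 25\kappa_1/(4\epsilon^2)$) and by \Cref{lemma: error_bound_Hessian} (for the Hessian, giving the $\mathcal{O}(1/b_k)$ and $\mathcal{O}(1/b_k^{3/2})$ bounds after choosing $b_k$ as in \eqref{eq:hyper_parameter}).

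Second I would relate the step size $\|\theta_{k+1}-\theta_k\|$ to the stationarity measures at $\theta_{k+1}$. This is the usual two-part argument: (i) from the subproblem optimality conditions one shows $\|\nabla\rho_h(\theta_{k+1})\| \lesssim L_{\mathcal{H}}\|\theta_{k+1}-\theta_k\|^2 + (\text{gradient error}) + (\text{Hessian error})\cdot\|\theta_{k+1}-\theta_k\|$; and (ii) $\lambda_{\max}(\nabla^2\rho_h(\theta_{k+1})) \lesssim L_{\mathcal{H}}\|\theta_{k+1}-\theta_k\| + (\text{Hessian error})$. So an iteration with small step $\|\theta_{k+1}-\theta_k\|$ is automatically nearly second-order stationary, up to estimation error. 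Combining this with the descent inequality, I would telescope over $k=1,\dots,N$: the total increase $\sum_k (\rho_h(\theta_{k+1})-\rho_h(\theta_k))$ is bounded by $\rho^* - \rho_h(\theta_0)$, which forces $\frac{1}{N}\sum_k \|\theta_{k+1}-\theta_k\|^3$ to be small, on the order of $(\rho(\theta_0)-\rho^*)/(N\sqrt{L_{\mathcal{H}}})$ plus per-step error contributions. Since $\bar\theta$ is drawn uniformly from the iterates, $\E[\|\theta_{\bar k + 1} - \theta_{\bar k}\|^3]$ equals this average.

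Third I would plug in the prescribed $N = 12\sqrt{L_{\mathcal{H}}}(\rho(\theta_0)-\rho^*)/\epsilon^{3/2}$, $m_k$, $b_k$ and verify the arithmetic: the $N$ choice makes the ``optimization'' part of $\E[\|\text{step}\|^3]$ of order $\epsilon^{3/2}$; the $m_k$ and $b_k$ choices make the gradient- and Hessian-error contributions (after using Jensen/Cauchy--Schwarz to pass from $\E[\|\cdot\|]$ and $\E[\|\cdot\|^2]$, $\E[\|\cdot\|^3]$ bounds to the needed moments, and possibly Young's inequality to absorb mixed terms like $(\text{Hessian error})\cdot\|\text{step}\|$) also of order $\epsilon^{3/2}$. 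The side condition $\epsilon \leq 25L_{\mathcal{H}}\kappa_1/(36\sqrt[3]{2(\kappa_3+t_2)(\kappa_2+t_1)})$ is exactly what ensures $b_k \geq C(d)$ and $m_k \geq b_k$ so that \Cref{lemma: error_bound_Hessian} applies, and the ``$5\sqrt{\epsilon}$'' / ``$\frac{5}{6\sqrt{L_{\mathcal{H}}}}$'' constants come out of collecting these bounds through the step-to-stationarity inequalities of part (ii). Finally, taking the max over the two stationarity measures yields the claimed bound.

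The main obstacle I anticipate is the bookkeeping of the inexactness: unlike the exact cubic Newton method, here both the descent inequality and the step-to-stationarity inequalities carry several error terms, and one must choose how to split $\epsilon$'s budget among the optimization term, the gradient-error term, and the (two) Hessian-error terms so that all constants line up to give precisely the stated $5\sqrt{\epsilon}$ bound with the stated $m_k, b_k, N$. A secondary subtlety is that the Hessian error enters the bound on $\|\nabla\rho_h(\theta_{k+1})\|$ multiplied by $\|\theta_{k+1}-\theta_k\|$, a cross term that must be handled by Young's inequality without spoiling the $\epsilon^{-3.5}$ overall sample complexity; getting the moment bounds ($\E[\|\cdot\|^2]$ and $\E[\|\cdot\|^3]$ from \Cref{lemma: error_bound_Hessian}) to combine correctly through this step is where the analysis genuinely differs from the risk-neutral case and from a first-order-only analysis.
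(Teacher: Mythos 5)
Your plan is correct and follows essentially the same route as the paper's proof: the standard inexact cubic-regularized Newton analysis (per-iteration ascent inequality from the third-order Taylor bound and the subproblem optimality conditions, step-to-stationarity bounds for both $\|\nabla\rho_h(\theta_{k+1})\|$ and $\lambda_{\max}(\nabla^2\rho_h(\theta_{k+1}))$, telescoping over $N$ iterations, and controlling the gradient/Hessian error terms via \Cref{lemma:error_bound_gradient} and \Cref{lemma: error_bound_Hessian} with the prescribed $m_k$, $b_k$), and you correctly identify that the side condition on $\epsilon$ is precisely $m_k \geq b_k$. The only work remaining is the constant bookkeeping you already flag, which is carried out in the paper exactly as you describe.
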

 \begin{proof}
 See Section \ref{proof: convergence_SOSP}. 
 \end{proof}
As a consequence of \Cref{thm: convergence_SOSP}, the sample complexity of our CRPN-DRM algorithm to converge a $\epsilon$-SOSP is bounded by $\mathcal{O}(\epsilon^{-3.5})$. 


\section{Simulation Experiments} 
\label{sec:experiments} 
\begin{figure*}[t]
\begin{tabular}{cc}
\begin{subfigure}{0.5\textwidth} 
    \includegraphics[width=0.9\linewidth]{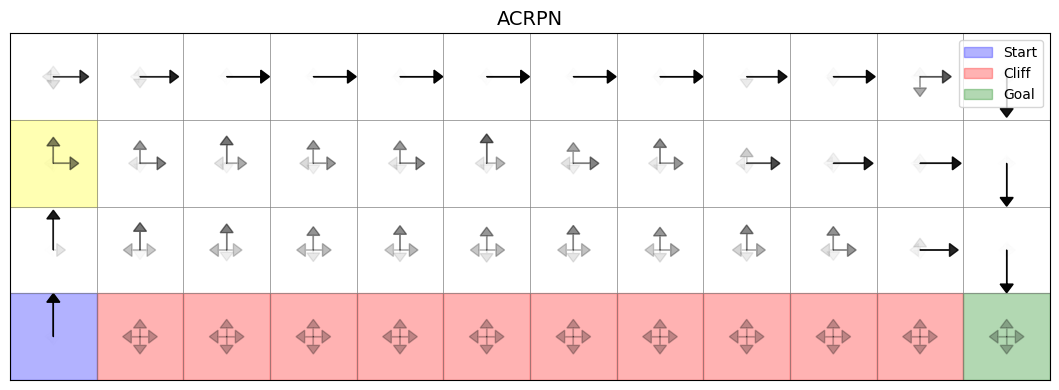}
    \label{fig:cliffwalking_policymap_drmacrpn_neutral}
\end{subfigure}&
\begin{subfigure}{0.5\textwidth} 
\includegraphics[width=0.9\linewidth]{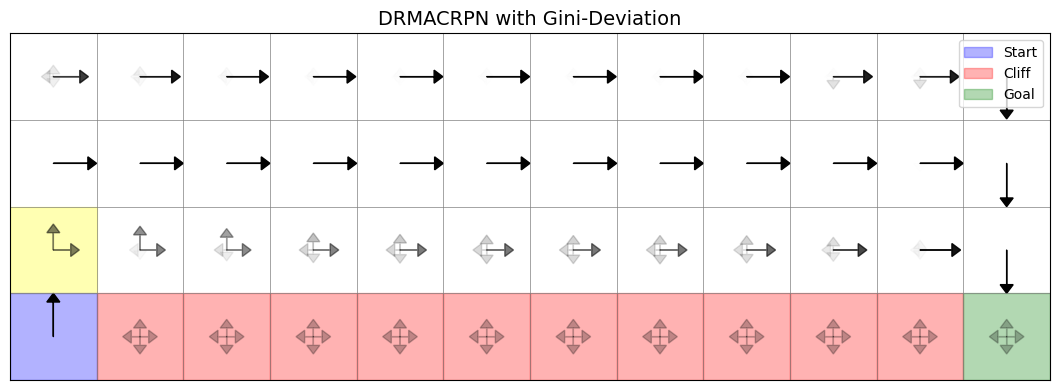}
    \label{fig:cliffwalking_policymap_drmacrpn_gini}
\end{subfigure}
\end{tabular}
\caption{A visual illustration of the policies found by ACRPN and DRMACRPN algorithms on cliff walk environment. The length of the arrows is proportional to the probability of taking that action by the policy. The bottom left and right cells denote the start and goal states, respectively.}
\label{fig:cliffwalking_policymap_results_1}
\end{figure*} 
\paragraph{Implementation}
We compare the performance of the following algorithms on three environments, namely Cart-pole, Humanoid, and cliff walking. \\
1. \textbf{ACRPN:} This is the risk-neutral cubic-regularized policy Newton algorithm from \cite{maniyar2024crpn}. We implement the approximate variant of their policy Newton algorithm, as described in Section 5 of \cite{maniyar2024crpn}. \\
2. \textbf{DRMACRPN:} This is \Cref{alg:CRPN_drm} in \cref{sec:pseudo}. We have three variants of this algorithm corresponding to three different DRMs, namely dual power, Gini deviation. See Figure \ref{fig:w} for the distortion functions for each of these DRMs. \\
3. \textbf{REINFORCE-DRM:} Uses the gradient estimate in \Cref{lemma: drm_sample_estimator_alt} and performs the traditional single step gradient ascent with the step-size $\lambda = \sqrt{\frac{2}{\alpha ||\bar{g}||}}$, 
as suggested in \cite{maniyar2024crpn}. This step-size is obtained by solving the cubic sub-problem in \eqref{eq:subproblem} while assuming $\Hess = 0$, i.e., without having access to second-order information. 

We implement \Cref{alg:CRPN_drm} on the three environments using the variance-reduced gradient and Hessian estimates as shown in \Cref{lemma: drm_sample_estimator_alt}. We consider the following two DRMs: (i) Dual-power with $h(t) = 1 - (1-t)^2$; and
(ii)  Gini-deviation with $h(t) = t - t^2$.
In our implementation of \Cref{alg:CRPN_drm}, we set the cubic-penalty parameter $\alpha=10^5$ as that gave the most stable learning behavior across DRMs. The batch sizes for gradient and Hessian estimation are set as follows: $m_k = b_k = 200$. The algorithm was run for a total of $N = 100$ iterations, using the same trajectories for the gradient as well as Hessian estimation. For ease of implementation, we assume $M_r = R^{\theta}_{(m)}$. 
The policy parameterization is tabular, linear and deep neural network on cliff walk, cart pole and humanoid environments, respectively. In particular, for the humanoid case, the policy parameterization involved a deep neural network consisting of two hidden layers of 64 neurons each, with softmax activation. All the policies use a Boltzmann distribution in the final layer for the probabilities to sum up to one. The results reported are averages over ten independent replications. 

\paragraph{Cliff walk.}
We implement a variant of this environment with a modified reward scheme which incentivizes the agent to move towards the goal state. The modified reward 
 incorporates a sense of distance from the goal state, and is given as follows:
\begin{align*}
    r_{\texttt{modified}}(s, a, s') = r_{\texttt{default}}(s, a, s') - c \cdot || s' - s_g ||_{1},
\end{align*}
where $(s, a, s')$ are the current state, current action and next state, respectively, $s_g$ is the coordinate of the goal state and $||\cdot||_1$ denotes the $\ell_1$ norm. In our experiments, we set $c = 0.5$. Note that, for the purposes of understanding the results, we only consider $r_{\texttt{default}}$ while the agent is trained on $r_{\texttt{modified}}$.

Cliff walk serves as a good environment to study the performance for the risk-neutral and DRM policies owing to the risk involved in the path close to the cliff, which can result in very high negative rewards. 

We present the mean, standard deviation, minimum and maximum of the episodic return in \Cref{tab:cliffwalking_results}. It is apparent that the risk-neutral algorithms have lower standard deviation and their \textit{ best} trajectory -- with a cumulative reward of $-16$ -- follows the path that walks along the border of the grid, farthest away from the cliff. On the other hand, risk-seeking algorithms follow the shortest ``riskier'' path -- with a cumulative reward of $-12$ -- closer to the cliff, and this path has a higher risk of falling off the cliff. 

\begin{table}[ht]
\caption{Episodic return for risk-neutral algorithms (REINFORCE, ACRPN) and their DRM counterparts with Gini deviation on cliff walking environment.}
    \centering
    \begin{tabular}{|c|c|c|c|}
        \hline
        \multirow{2}{*}{\textbf{Algorithm}} & \multirow{2}{*}{$\textbf{Mean}\pm \textbf{std}$} &  \multirow{2}{*}{\textbf{Min}} & \multirow{2}{*}{\textbf{Max}} \\ 
        & &  & \\\hline
        REINFORCE & $-16.2\pm0.5 $&  \bf{-22} & -16 \\ \hline
        ACRPN & $-16.0\pm0.9$ &  -24 & -16 \\ \hline
        REINFORCE-DRM& $-14.1\pm \bf{5.4}$ &  -123 & \bf{-12} \\ \hline
        DRMACRPN& $\bf{-13.6}\pm 4.9$ &  -123 & \bf{-12} \\ \hline
    \end{tabular}
    
    \label{tab:cliffwalking_results}
\end{table} 
To get deeper insights into the policies' actions on the grid, we present a visualization in \Cref{fig:cliffwalking_policymap_results_1}.
It is interesting to note that the less risky ACRPN policy, tends to move away from the cliff even in the first row (the row second-closest to the cliff), while the DRMACRPN policy are highly determined to move towards the goal with shorter path lengths. In the second row (the row closest to the cliff), DRMACRPN policy is more likely to move towards the goal than move away from the cliff. This highlights the urgency of the DRM-sensitive policy to get to the goal state despite the low but non-zero chance of falling off the cliff. Furthermore, noticing the cells highlighted in yellow, we see that the DRMACRPN's risk-seeking policy tends to go towards the goal earlier than the risk-neutral polices, and DRMACRPN, which is a second-order algorithm, tends to be more likely to do so than its first-order counterpart (REINFORCE-DRM), see Appendix \ref{sec:appendix-expts} for more details.

\paragraph{Humanoid.}
\begin{figure}[t]
\begin{tabular}{cc}
\scalebox{0.9}{
\begin{subfigure}{0.5\textwidth} 
    \includegraphics[width=0.9\linewidth]{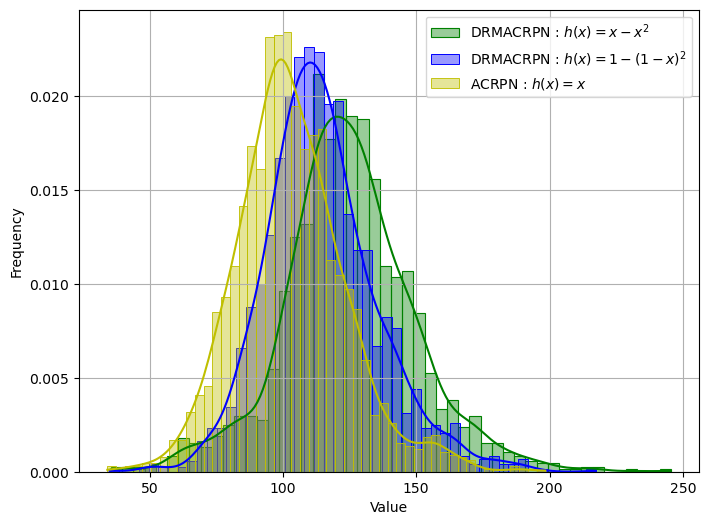}
    \caption{Episodic return distributions of the DRM policies}
    \label{fig:drm_hists}
\end{subfigure}}
&
\scalebox{0.9}{
\begin{subfigure}{0.5\textwidth} 
\includegraphics[width=0.9\linewidth]{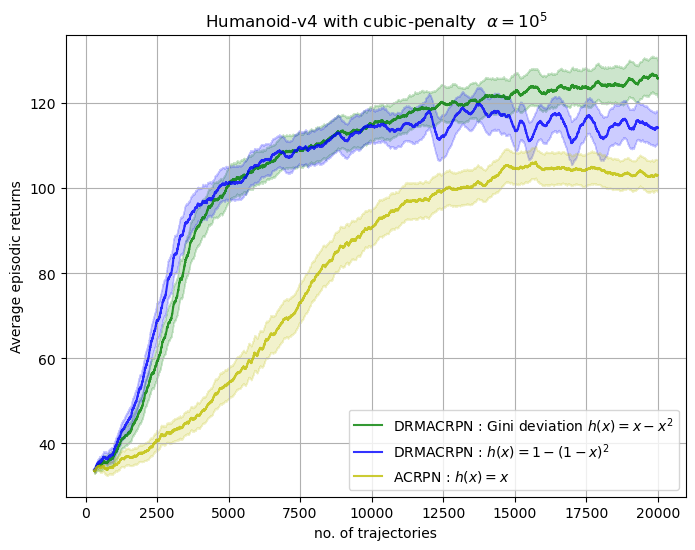}
\caption{Learning curves of the DRM policies.}
    \label{fig:drm_lc}
\end{subfigure}}
\end{tabular}
\caption{Performance comparison of risk-neutral policy Newton and \Cref{alg:CRPN_drm} with two different DRMs, namely dual-power and Gini deviation on the Humanoid-v4 environment.}
\label{fig:humanoid-results}
\end{figure} 
This environment has a three-dimensional bipedal robot designed to simulate a human.  We chose this environment as it contains both rewards and costs, and the total reward is the signed sum of these values.

\begin{figure}[ht]
\centering
\tabl{c}{
  \scalebox{0.6}{\begin{tikzpicture}
  \begin{axis}[width=11cm,height=6.5cm,legend pos=south east,
           grid = major,
           grid style={dashed, gray!30},
           xmin=0,     
           xmax=1,    
           ymin=-5,     
           ymax=5,   
           axis background/.style={fill=white},
           ylabel={\large Weight $\bm{w(x)}$},
           xlabel={\large $\bm{x}$}
           ]
           \addplot[domain=0:1, green, thick,smooth,samples=150] 
             {2*x - 1}; 
             \addlegendentry{Gini-Deviation}
          \addplot[domain=0:1, blue, thick,smooth,samples=150] 
             {2*x}; 
             \addlegendentry{Dual-power}
          \addplot[domain=0:1, yellow, thick]{1}; 
            \addlegendentry{Identity}
  \end{axis}
  \end{tikzpicture}}\\[1ex]
}
\caption{Weight coefficients for the gradient estimate of the different DRMs. Here $w(x) = h'(1-x)$ is the weight coefficient for $x\in [0,1]$. The weight associated with the return of the $i$-th trajectory is scaled by $h'(1-\frac{i}{m})$, see \Cref{lemma: drm_sample_estimator_alt}.
}
\label{fig:w3}
\end{figure}
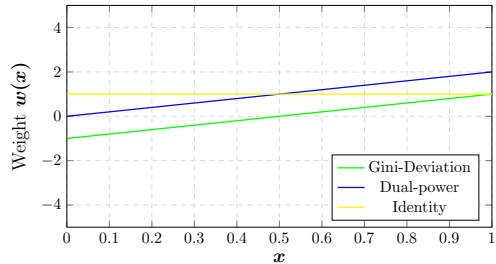
Figure \ref{fig:humanoid-results} presents the following quantities for each of the algorithms mentioned above: (i) the histogram of the policies obtained after $N$ iterations; and (ii) the learning curves, which show the change in episodic return. 
\Cref{tab:drm_results} tabulates the mean and variance of the episodic return for our DRMACRPN algorithm with different distortion functions. 
Further, \Cref{tab:drm_results_2} in Appendix \ref{sec:appendix-expts} compares the performance of our algorithm to the risk-neutral baseline. From these results, it is apparent that our algorithm performs better than the risk-neutral counterpart w.r.t. the DRM objective. To put it differently, the risk-neutral objective is not a surrogate for DRM, and we need a specialized algorithm to optimize the latter objective.
\Cref{fig:humanoid-results} presents the performance of DRM-sensitive and risk-neutral algorithms on the humanoid environment for the dual-power and Gini-deviation DRMs. We observe that our algorithm outperforms the risk-neutral policy Newton, both from the learning curves viewpoint as well as the reward distributions shown in \Cref{fig:drm_lc}. 

We believe the superior performance of dual-power and Gini deviation is due to the fact that these DRMs weigh trajectories with higher returns more as is evident from the gradient expression in \Cref{lemma: drm_sample_estimator_alt} and from the plot of weight coefficients given in \Cref{fig:w3}. In other words, each step of \Cref{alg:CRPN_drm} would be in the direction of  policies that achieve higher returns, owing to higher weights from DRM gradient/Hessian estimates, potentially leading to faster learning by \textit{focusing on the best experiences}. \Cref{fig:drm_hists} shows the distribution of the policies obtained for the different algorithms. We also note that although Gini-deviation gives us better mean, it has thicker tails as compared to the Dual-power DRM. This result is owing to the fact that Gini deviation is closely related to variance, and the algorithm learns to maximize the same.
Similar inferences can be made from the experiments on the Cart-pole environment, see \Cref{sec:appendix-expts} for the details.
\begin{table}[t]
\caption{Sample mean and variance of the episodic return for DRMACRPN with three different distortion functions on Humanoid-v4. 
	}
    \label{tab:drm_results}
    \centering
    \begin{tabular}{|c|c|c|}
    \hline
        DRM & Mean & Standard deviation \\\hline 
       Identity  & 103.0 &  20.6 \\ \hline
       Dual-power & 114.4 &  21.1 \\ \hline
       Gini deviation & \textbf{124.8} & \textbf{24.7} \\ \hline
    \end{tabular}
\end{table}

    

\section{Conclusions}
\label{sec:concl}
We proposed a policy Newton algorithm for maximizing distortion riskmetrics. We derived the policy Hessian theorem for a DRM objective, and used it to form DRM gradient/Hessian estimates from sample trajectories. We proved a non-asymptotic bound that establishes the convergence of our policy Newton algorithm to an $\epsilon$-SOSP for the DRM objective with sample complexity $\mathcal{O}(\epsilon^{-3.5})$. 
To the best of our knowledge, this is the first work to present convergence to an $\epsilon$-SOSP in a risk-sensitive RL framework. 





\bibliographystyle{plain}
\bibliography{refs}
\newpage

\appendix
\setcounter{secnumdepth}{2}
\section*{Organization of the supplementary material}
 In \Cref{sec:pseudo}, we present the pseudocode for the CRPN-DRM Algorithm discussed in \Cref{sec:CRPN_drm}. In \Cref{sec:drm_examples}, we provide a few examples of distortion riskmetrics and verify assumption \ref{ass:g_bound} for several  DRMs. In \Cref{prrof:proofs_gradient_hessin}, we provide  the proofs for the claims in \Cref{sec:drm_policy_Hessian_theorem}. In particular, this includes the proofs of \Cref{lemma:drm_cdf_pn}, \Cref{theorem:drm_pn}, and \Cref{lemma:hessian_liptchz}. 
In \Cref{sec:estimation_appendix}, we provide the proofs of the claims  in \Crefrange{sec:DRM policy gradient and Hessian estimation}{sec:CRPN_drm}. In  \Cref{sec:main_appendix}, we provide the proofs of  \Cref{thm: convergence_SOSP} and \Cref{lemma: error_bound_Hessian}.  
In \Cref{sec:eff_hessian}, we discuss an efficient computational method for the DRM gradient and Hessian estimates. Finally, in \Cref{sec:appendix-expts}, we present the additional experiments on the humanoid, cart-pole, and cliff walking environments.   

\section{CRPN-DRM Algorithm Pseudocode} \label{sec:pseudo}
\begin{algorithm*}[!ht]
	\LinesNotNumbered
	\SetKwInOut{Input}{Input}\SetKwInOut{Output}{Output}
	\Input{initial point $\theta_0 \in \mathbb{R}^d$, 
		a non-negative sequence $\{\alpha_n\}$, and number of iterations $N\geq 1$,
	}
	\For{$k\leftarrow 1$ \KwTo $N$}{
		
		Generate $m_k$ and $b_k$ trajectories for gradient and Hessian estimation \tcp*{Simulation}
		
		$\widehat{\nabla}_{k} \rho_h (\theta_k) = - \int_{-M_r}^{M_r} h'(1 - G^{m_k}_{\theta_{k-1}}) \widehat{\nabla} G^{m_k}_{R^{\theta_{k-1}}} dx 
			$\tcp*{DRM gradient estimation}
		\tcc{DRM Hessian estimation}
		
		\vspace{1ex}
		
		\begin{small}
			\centerline{$
				\widehat{\nabla}_k^2 \rho_h (\theta_k) =  - \int_{-M_r}^{M_r} h'(1 - G^{m_k}_{R^{\theta_{k-1}}}) \widehat{\nabla}^2 G^b_{R^{\theta_{k-1}}} dx  + \int_{-M_r}^{M_r} h{''}(1 - G^{m_k}_{R^{\theta_{k-1}}}) \left(\widehat{\nabla} G^{m_k}_{R^{\theta_{k-1}}}\right) \left(\widehat{\nabla} G^{m_k}_{R^{\theta_{k-1}}}\right)\tr  dx, 
				$}
			\vspace{1ex}
		\end{small}
		where $G^{m}_{R^{\theta}}(x), \widehat{\nabla} G^{m}_{R^{\theta}}(x)$ and $\widehat{\nabla}^2 G^{b}_{R^{\theta}}(x)$ are defined in \Cref{sec:DRM policy gradient and Hessian estimation}\;

        \vspace{-3ex}
        
			\begin{align*}
		\hspace{-6em}\textrm{\texttt{Cubic-regularized Newton step:}~~}		    \theta_k = \argmax_{\theta \in \mathbb{R}^d} \left\{ \tilde{\rho}_h(\theta, \theta_{k-1},\widehat{\nabla}_{k} \rho_h (\theta_k), \widehat{\nabla}_k^2 \rho_h (\theta_k),\alpha_k)\right\},
    \numberthis\label{eq:subproblem}
			\end{align*}
		where
		$\tilde{\rho}_h(x,y,\mathcal{H},h,\alpha) = \left <h, x-y \right >+\frac{1}{2}\left <\mathcal{H}(x-y),x-y\right > -\frac{\alpha}{6}\|x-y\|^3$. 
	}
	\Output{Parameter $\theta_N $. } 
	\caption{Cubic-regularized policy Newton algorithm for DRM (CRPN-DRM)} 
	\label{alg:CRPN_drm}
\end{algorithm*} 
   
\section{DRM examples}
\label{sec:drm_examples}
We provide some examples of distortion riskmetrics in \Cref{tab:distortion_function_example}. These examples are  from \cite{wang2020distortion}. 

\begin{table*}[h]
	\centering
 \caption{Examples of distortion riskmetrics (DRMs).}
 \label{tab:summary}
		\begin{tabular}{|c|c|c|c|}
			\hline
			\multirow{2}{*}{\textbf{No}} & \multirow{2}{*} {\textbf{DRM name}}& \multirow{2}{*} {\textbf{DRM expression}} &\multirow{2}{*}{\textbf{Distortion function $h(t)$}}  \\ 
			& & &   \\[1.5ex] \hline
 			 1 \cmark & Mean  & $\E[X]$ & $ t$ \\[1.5ex] \hline
			2 & VaR & $F_X^{-1}(\alpha), \alpha \in (0,1)$ &  
			    $1 \text{ if } \ t > 1 - \alpha$ and 
                $0  \text{ otherwise.}$\\[1.5ex] \hline
				3 & CVaR & $\frac{1}{1 - \alpha}\int_{\alpha}^1 F_X^{-1}(x) dx, 0<\alpha < 1$ & $ \min \Bigl\{\frac{t}{1 - \alpha},1\Bigr\}$ \\[1.5ex]\hline
                4 & RVaR & $\frac{1}{\beta - \alpha}\int_{\alpha}^{\beta} F_X^{-1}(x) dx, 0<\alpha <\beta <1$ & $ \min \Bigl\{\frac{\max\{t-1+\beta,0\}}{\beta - \alpha},1\Bigr\}$   \\[1.5ex]\hline
                5 & Mean median& \multirow{2}{*}{$\min_{x \in \mathbb{R}}\E\left [|X - x|\right ]$}  & \multirow{2}{*}{$ \min\{t, 1 - t\}$} \\
                & deviation&&\\\hline
            6 \cmark & Wang transform\textsuperscript{a} &$\int_0^1 F_X^{-1}(x) \Bigl(\Phi^{-1}(x)-\lambda\Bigr) dx, \lambda > 0$  & $ \Phi(\Phi^{-1}(t) - \lambda)$ \\[1.5ex]\hline
            7 & Proportional Hazard  & \multirow{2}{*}{$\int_0^1 F_X^{-1}(x) \alpha x^{\alpha - 1} dx$}  & \multirow{2}{*}{$ t^{\alpha}, 0 < \alpha \leq 1$} \\
            &transform &&\\\hline
            8 \cmark & Exponential & $\int_0^1 F_X^{-1}(x) \alpha e^{-\alpha  x} dx $ & $ 1 - e^{-\alpha t}, \alpha > 0$ \\[1.5ex]\hline
             9 \cmark & Dual power & $\int_0^1 F_X^{-1}(x) \alpha x^{\alpha -1} dx $ & $ 1 - (1 - t)^\alpha, \alpha \geq 1$ \\[1.5ex]\hline
              10 \cmark  & Gini deviation\textsuperscript{b} & $\frac{1}{2}\E[|X^{*} - X^{**}|]$ & $ t - t^2$ \\[1.5ex]\hline
              11   & Gini Shortfall & CVaR(X) + $\lambda \E[|X^{*}_\alpha - X^{**}_\alpha|], \lambda > 0$ & $ \min \left\{\frac{t}{1-\alpha}, \frac{2\lambda t\max\{0,(1-t-\alpha)\}}{(1-\alpha)^2}\right\}$ \\[1.5ex]\hline
		\end{tabular} \\
  \label{tab:distortion_function_example}
  {\textsuperscript{a} $\Phi$ is the CDF of the standard normal distribution and $\Phi^{-1}$ is the quantile of the standard normal distribution.}\\
{\textsuperscript{b} Here $X^{*}, X^{**}$ are the i.i.d. copies of $X$.} A \cmark indicates that the distortion function satisfies \Cref{ass:g_bound}. 
\end{table*} 

The result below provides the first three derivatives for the distortion functions underlying several popular DRMs. From these expressions, it can be clearly seen  that these DRMs satisfy \Cref{ass:g_bound}. 
The choice of distortion function impacts the analysis through \Cref{ass:g_bound}, while  the other three assumptions relate to the policy parameterization, and are frequently used in policy gradient type algorithms.     
\begin{lemma}
\label{lemma:assum_ver}
For the different DRMs defined in \Cref{tab:distortion_function_example}, the derivatives of the distortion function $h$ are given below.\\
    (1) For the Gini-deviation DRM, we have $h^{'}(t) = 1 - 2t$, $h^{''}(t) = -2 $ and $h^{'''}(t) = 0 $.   \\
    (2) For the dual power DRM, we have $h^{'}(t) = \alpha (1-t)^{\alpha -1}$, $h^{''}(t) = -\alpha (\alpha -1) (1-t)^{\alpha - 2} $ and $h^{'''}(t) = \alpha (\alpha -1) (\alpha -2)(1-t)^{\alpha -3} $. \\
    (3) For the exponential DRM, we have $h^{'}(t) = \alpha e^{-\alpha t}$, $h^{''}(t) = - \alpha^2 e^{-\alpha t} $ and $h^{'''}(t) =  \alpha^3 e^{-\alpha t} $. \\
    (4) For the mean DRM, we have $h^{'}(t) = 1$, $h^{''}(t) = 0 $ and $h^{'''}(t) = 0 $. 
\end{lemma}

\section{Proofs of the claims in \Cref{sec:drm_policy_Hessian_theorem}} 
\label{prrof:proofs_gradient_hessin}

\subsection{Proof of Lemma \ref{lemma:drm_cdf_pn}}
\label{proof: drm_cdf_pn}
\begin{proof}
For the first part, see \cite{vijayan2021policy}. 
Let $\Omega$ denote the all sample episodes with cardinality $\Lambda$. We denote $S_t(\omega)$ and $A_t(\omega)$ as the state and action at time $t$ respectively for an episode $\omega$.   

    From \cite{vijayan2021policy}, we have
    \begin{align*}
        \nabla F_{R^{\theta}}(x)         &=  \nabla \E \left[\mathbf{1}\{R^{\theta}\leq x\}\right] = \sum_{\omega \in \Omega} \mathbf{1}\{R(\omega) \leq x\}\sum_{t=0}^{T - 1} \nabla \log \pi_{\theta} (A_t(\omega)|S_t(\omega))\mathbb{P}_{\theta}(\omega)
    \end{align*}
Differentiating on both sides, we have
    \begin{align*}
    &
        \nabla^2 F_{R^{\theta}} (x)  = \sum_{\omega \in \Omega} \mathbf{1} \{R(\omega) \leq x\}\sum_{t=0}^{T-1} \left(\nabla^2 \log \pi_{\theta}(A_t(\omega)|S_t(\omega))\mathbb{P}_{\theta}(\omega)
          + \nabla \log \pi_{\theta}(A_t(\omega)|S_t(\omega))\nabla \mathbb{P}_{\theta}(\omega)\right) \\
        & = \sum_{\omega \in \Omega} \mathbf{1} \{R(\omega) \leq x\}\sum_{t=0}^{T-1} \left(\nabla^2 \log \pi_{\theta}(A_t(\omega)|S_t(\omega))\mathbb{P}_{\theta}(\omega)
         + \nabla \log \pi_{\theta}(A_t(\omega)|S_t(\omega)) \frac{\nabla \mathbb{P}_{\theta}(\omega)}{\mathbb{P}_{\theta}(\omega)} \mathbb{P}_{\theta}(\omega)\right)\\ 
         & = \E\left[\mathbf{1}\{R^{\theta}\leq x\}\sum_{t = 0}^{T - 1} \nabla^2 \log \pi_{\theta}(A_t(\omega)|S_t(\omega))\right]\\
& \qquad \qquad + \E \left[ \mathbf{1} \{R^{\theta} \leq x\} \left(\sum_{t = 0}^{T - 1} \nabla \log \pi_{\theta}(A_t(\omega)|S_t(\omega))\right) \left(\sum_{t = 0}^{T - 1} \nabla \log \pi_{\theta}(A_t(\omega)|S_t(\omega))\right)\tr \right].
    \end{align*} 
    The claim follows.
\end{proof} 
\subsection{Proof of \Cref{theorem:drm_pn}}
\label{proof:drm_pn}
\begin{proof}
We know from Theorem 1 in \cite{vijayan2021policy}, 
\begin{align*}
    \nabla \rho_h (\theta) = - \int\limits_{-M_r}^{M_r} g'(1 - F_{R^{\theta}}(c)) \nabla F_{R^{\theta}}(x) dx. 
\end{align*}
Differentiating on both sides and using the dominated convergence theorem, we obtain
\begin{align*}
    & \nabla^2 \rho_h (\theta) 
         =  \int_{-M_r}^{M_r} h{''}(1 - F_{R^{\theta}}(x)) \nabla F_{R^{\theta}}(x)\nabla F_{R^{\theta}}(x)\tr dx - \int_{-M_r}^{M_r} h'(1 - F_{R^{\theta}}(x))\nabla^2 F_{R^{\theta}}(x)dx.
\end{align*}
\end{proof}
\subsection{Proof of Lemma \ref{lemma:hessian_liptchz}}
\label{proof: hessian_liptchz}
The first part is shown in \cite{vijayan2021policy}. We state a variant of a result from \cite{vijayan2021policy} that will be used in the proof of \Cref{lemma:hessian_liptchz}.
\begin{lemma}
\label{lem:cdfbounds}
 For all $x \in (-M_r, M_r)$, \[\left\|\nabla F_{R^{\theta}}(x)\right\| \leq T M_d, \; \; \; \left\|\nabla^2 F_{R^{\theta}}(x)\right \| \leq T M_h + T^2 M_d^2. \]    
\end{lemma}
\begin{proof}
    See \cite{vijayan2021policy}. 
\end{proof}

\begin{proof}(\textit{\Cref{lemma:hessian_liptchz}})

We begin with the DRM Hessian expression given as follows: 
    \begin{align*}
    & 
        \nabla^2 \rho_h (\theta) =  \int_{-M_r}^{M_r} h{''}(1 - F_{R^{\theta}}(x)) \left(\nabla F_{R^{\theta}}(x)\right)\left(\nabla F_{R^{\theta}}(x)\right)\tr dx - \int_{-M_r}^{M_r} h'(1 - F_{R^{\theta}}(x))\nabla^2 F_{R^{\theta}}(x)dx.  
    \end{align*} 
Notice that
    \begin{align*}
    &
        \|\nabla^2 \rho_h(\theta_1) - \nabla^2 \rho_h(\theta_2)\| \\
        & =    \Biggl| \Biggl|\int_{-M_r}^{M_r}h{''}(1- F_{R^{\theta_1}}(x)) \left(\nabla F_{R^{\theta_1}}(x)\right) \left(\nabla F_{R^{\theta_1}}(x)\right)\tr dx - \int_{-M_r}^{M_r}h'(1 - F_{R^{\theta_1}}(x))\nabla^2 F_{R^{\theta_1}}(x)dx \\
        & \qquad \qquad  -\int_{-M_r}^{M_r}h{''}(1- F_{R^{\theta_2}}(x)) \left(\nabla F_{R^{\theta_2}}(x)\right) \left(\nabla F_{R^{\theta_2}}(x)\right)\tr dx + \int_{-M_r}^{M_r}h'(1 - F_{R^{\theta_2}}(x))\nabla^2 F_{R^{\theta_2}}(x)dx \Biggr|\Biggr| \\
        & \overset{(a)}{\leq} \int_{-M_r}^{M_r}\Biggl|\Biggl|h{''}(1- F_{R^{\theta_1}}(x)) \left(\nabla F_{R^{\theta_1}}(x)\right) \left(\nabla F_{R^{\theta_1}}(x)\right)\tr - h'(1 - F_{R^{\theta_1}}(x))\nabla^2 F_{R^{\theta_1}}(x) \\
        & \qquad \qquad - h{''}(1- F_{R^{\theta_2}}(x)) \left(\nabla F_{R^{\theta_2}}(x)\right) \left(\nabla F_{R^{\theta_2}}(x)\right)\tr + h'(1 - F_{R^{\theta_2}}(x))\nabla^2 F_{R^{\theta_2}}(x)\Biggr|\Biggr|dx \\
        & \overset{(b)}{\leq} \int_{-M_r}^{M_r}\left\|h{''}(1- F_{R^{\theta_1}}(x)) \left(\nabla F_{R^{\theta_1}}(x)\right) \left(\nabla F_{R^{\theta_1}}(x)\right)\tr - h{''}(1- F_{R^{\theta_2}}(x)) \left(\nabla F_{R^{\theta_2}}(x)\right) \left(\nabla F_{R^{\theta_2}}(x)\right)\tr\right\| dx\\
        & \qquad \qquad \qquad+ \int_{-M_r}^{M_r} \left\|h'(1 - F_{R^{\theta_1}}(x))\nabla^2 F_{R^{\theta_1}}(x) - h'(1 - F_{R^{\theta_2}}(x))\nabla^2 F_{R^{\theta_2}}(x)\right\|  dx  \\
        & = \int_{-M_r}^{M_r}A_1  dx +  \int_{-M_r}^{M_r} A_2 dx,\numberthis\label{eq:a1a2} 
    \end{align*}
    where $A_1 = \left\|h{''}(1- F_{R^{\theta_1}}(x)) \left(\nabla F_{R^{\theta_1}}(x)\right) \left(\nabla F_{R^{\theta_1}}(x)\right)\tr - h{''}(1- F_{R^{\theta_2}}(x)) \left(\nabla F_{R^{\theta_2}}(x)\right) \left(\nabla F_{R^{\theta_2}}(x)\right)\tr\right\|$ and $A_2 = \left\|h'(1 - F_{R^{\theta_1}}(x))\nabla^2 F_{R^{\theta_1}}(x) - h'(1 - F_{R^{\theta_2}}(x))\nabla^2 F_{R^{\theta_2}}(x)\right\|.$ In the above, $(a)$ follows from Jensen's inequality applied to the norm function, and $(b)$ follows from the triangle inequality. 

We bound the term $A_1$ as follows: 
\begin{align*}
&
    \left\|h{''}(1- F_{R^{\theta_1}}(x)) \left(\nabla F_{R^{\theta_1}}(x)\right) \left(\nabla F_{R^{\theta_1}}(x)\right)\tr - h{''}(1- F_{R^{\theta_2}}(x)) \left(\nabla F_{R^{\theta_2}}(x)\right) \left(\nabla F_{R^{\theta_2}}(x)\right)\tr\right\|\\
    & = \Biggl|\Biggl|h{''}(1- F_{R^{\theta_1}}(x)) \left(\nabla F_{R^{\theta_1}}(x)\right) \left(\nabla F_{R^{\theta_1}}(x)\right)\tr -h{''}(1- F_{R^{\theta_1}}(x)) \left(\nabla F_{R^{\theta_1}}(x)\right) \left(\nabla F_{R^{\theta_2}}(x)\right)\tr \\
    & \qquad \qquad + h{''}(1- F_{R^{\theta_1}}(x)) \left(\nabla F_{R^{\theta_1}}(x)\right) \left(\nabla F_{R^{\theta_2}}(x)\right)\tr - h{''}(1- F_{R^{\theta_2}}(x)) \left(\nabla F_{R^{\theta_2}}(x)\right) \left(\nabla F_{R^{\theta_2}}(x)\right)\tr\Biggr|\Biggr|\\
    & \overset{(c)}{\leq} \Biggl|h{''}(1-F_{R^{\theta_1}}(x))\Biggr| \,\Biggl|\Biggl|\nabla F_{R^{\theta_1}}(x)\Biggr|\Biggr|\, \Biggl|\Biggl|\nabla F_{R^{\theta_1}}(x) - \nabla F_{R^{\theta_2}}(x)\Biggr|\Biggr| \\
    & \qquad \qquad \qquad + \Biggl|\Biggl|h{''}(1 - F_{R^{\theta_1}}(x))\nabla F_{R^{\theta_1}} - h{''}(1 - F_{R^{\theta_2}}(x))\nabla F_{R^{\theta_2}}\Biggr|\Biggr| \,\Biggl|\Biggl|\nabla F_{R^{\theta_2}}(x)\Biggr|\Biggr| \\
    & \overset{(d)}{\leq} M_{h{''}}T M_d(TM_h + T^2M_d^2)\Biggl|\Biggl|\theta_1 - \theta_2\Biggr|\Biggr| + T M_d \Biggl|\Biggl|h{''}(1 - F_{R^{\theta_1}}(x))\nabla F_{R^{\theta_1}}(x) - h{''}(1 - F_{R^{\theta_2}}(x))\nabla F_{R^{\theta_2}}(x)\Biggr|\Biggr|\\
    & = \xi^{'}_1 \Biggl|\Biggl|\theta_1 - \theta_2\Biggr|\Biggr| + T M_d \Biggl|\Biggl|h{''}(1 - F_{R^{\theta_1}}(x))\nabla F_{R^{\theta_1}}(x) - h{''}(1 - F_{R^{\theta_2}}(x))\nabla F_{R^{\theta_2}}(x)\Biggr|\Biggr|,
\end{align*} 
where $\xi^{'}_1 = M_{h{''}}T M_d(TM_h + T^2M_d^2).$ In above, $(c)$ follows from the triangle inequality of norm and Cauchy-Schwarz inequality and, $(d)$ follows by applying the mean-value theorem, \Cref{lem:cdfbounds}, and \Cref{ass:g_bound}. 

Notice that
\begin{align*}
&
    \Biggl|\Biggl|h{''}(1 - F_{R^{\theta_1}}(x))\nabla F_{R^{\theta_1}}(x) - h{''}(1 - F_{R^{\theta_2}}(x))\nabla F_{R^{\theta_2}}(x)\Biggr|\Biggr|\\
    & = \Biggl|\Biggl|h{''}(1 - F_{R^{\theta_1}}(x))\nabla F_{R^{\theta_1}}(x) -  h{''}(1 - F_{R^{\theta_1}}(x))\nabla F_{R^{\theta_2}}(x) \\
    & \qquad \qquad \qquad +  h{''}(1 - F_{R^{\theta_1}}(x))\nabla F_{R^{\theta_2}}(x) - h{''}(1 - F_{R^{\theta_2}}(x))\nabla F_{R^{\theta_2}}(x)\Biggr|\Biggr|\\
    & \overset{(e)}{\leq} \Biggl|h{''}(1 - F_{R^{\theta_1}}(x))\Biggr|\;\Biggl|\Biggl|\nabla F_{R^{\theta_1}}(x) - \nabla F_{R^{\theta_2}}(x)\Biggr|\Biggr| + \Biggl|\Biggl|\nabla_{F^{\theta_2}}(x)\Biggr|\Biggr| \; \Biggl|h{''}(1 - F_{R^{\theta_1}}(x)) - h{''}(1 - F_{R^{\theta_2}}(x))\Biggr| \\
    & \overset{(f)}{\leq} M_{h{''}}(T M_h + T^2M_d^2)\Biggl|\Biggl|\theta_1 - \theta_2\Biggr|\Biggr| + T M_d M_{h{'''}} \Biggl|F_{R^{\theta_1}}(x) - F_{R^{\theta_2}}(x)\Biggr|\\
    & \overset{(g)}{\leq} M_{h{''}}(T M_h + T^2M_d^2)\Biggl|\Biggl|\theta_1 - \theta_2\Biggr|\Biggr| + T^2 M_d^2 M_{h{'''}}\Biggl|\Biggl|\theta_1 - \theta_2\Biggr|\Biggr| \\ 
    & = \left(M_{h{''}}(T M_h + T^2M_d^2) + T^2 M_d^2 M_{h{'''}}\right)\Biggl|\Biggl|\theta_1 - \theta_2\Biggr|\Biggr| = \xi^{'}_2 \Biggl|\Biggl|\theta_1 - \theta_2\Biggr|\Biggr|,
\end{align*}
where $\xi^{'}_2 = \left(M_{h{''}}(T M_h + T^2M_d^2) + T^2 M_d^2 M_{h{'''}}\right).$ Further, $(e)$ follows from the properties of norm, $(f)$ follows from the mean-value theorem and \Cref{lem:cdfbounds}, and $(g)$ follows from the mean-value theorem in conjunction with \Cref{ass:g_bound}. 

Thus, 
\begin{align*}
&
    \int_{-M_r}^{M_r}A_1  dx \;\;\; \leq \;\;\; 2M_r (\xi^{'}_1 + T M_d \xi^{'}_2) \Biggl|\Bigg|\theta_1 - \theta_2 \Biggr|\Biggr| \;\;\; = \;\;\; \xi_1 \Biggl|\Bigg| \theta_1 - \theta_2\Biggr|\Biggr|,\numberthis\label{eq:a1bd}
\end{align*}
where $\xi_1 = 2M_r (\xi^{'}_1 + T M_d \xi^{'}_2).$

We now bound the term $A_2$ in \eqref{eq:a1a2}.
 \begin{align*}
    A_2&= \Biggl|\Bigg|h'(1 - F_{R^{\theta_1}}(x))\nabla^2 F_{R^{\theta_1}}(x) - g'(1 - F_{R^{\theta_2}}(x))\nabla^2 F_{R^{\theta_2}}(x)\Biggr|\Biggr|\\
     & = \Biggl|\Biggl|h'(1 - F_{R^{\theta_2}}(x))\nabla^2 F_{R^{\theta_2}}(x) - h'(1 - F_{R^{\theta_2}}(x))\nabla^2 F_{R^{\theta_1}}(x) \\
     & \qquad \qquad \qquad \qquad+ h'(1 - F_{R^{\theta_2}}(x))\nabla^2 F_{R^{\theta_1}}(x) - h'(1 - F_{R^{\theta_2}}(x))\nabla^2 F_{R^{\theta_2}}(x)\Biggr|\Biggr|\\
     & \overset{(h)}{\leq} \Biggl|\Biggl| h'(1 - F_{R^{\theta_2}}(x))\nabla^2 F_{R^{\theta_2}}(x) - h'(1 - F_{R^{\theta_2}}(x))\nabla^2 F_{R^{\theta_1}}(x)\Biggr|\Biggr| \\
     & \qquad \qquad \qquad \qquad + \Biggl|\Biggl|h'(1 - F_{R^{\theta_2}}(x))\nabla^2 F_{R^{\theta_1}}(x) - h'(1 - F_{R^{\theta_2}}(x))\nabla^2 F_{R^{\theta_2}}(x)\Biggr|\Biggr|\\
     & \overset{(i)}{=} \Biggl|h'(1 - F_{R^{\theta_2}}(x))\Biggr| \; \Biggl|\Bigg|\nabla^2 F_{R^{\theta_2}}(x) - \nabla^2 F_{R^{\theta_1}}(x)\Biggr|\Biggr| + \Biggl|h'(1 - F_{R^{\theta_2}}(x)) - h'(1 - F_{R^{\theta_1}}(x))\Biggr|\; \Biggl|\Biggl|\nabla^2 F_{R^{\theta_1}}\Biggr|\Biggr| \\
     & \overset{(j)}{\leq}  \Biggl|h'(1 - F_{R^{\theta_2}}(x))\Biggr| \; \Biggl|\Biggl|\nabla^2 F_{R^{\theta_2}}(x) - \nabla^2 F_{R^{\theta_1}}(x)\Biggr|\Biggr| + M_{h{''}}\Biggl|F_{R^{\theta_1}}(x) - F_{R^{\theta_2}}(x)\Biggr| \; \Biggl|\Biggl|\nabla^2 F_{R^{\theta_1}}\Biggr|\Biggr| \\
     & \overset{(k)}{\leq} \Biggl|h'(1 - F_{R^{\theta_2}}(x))\Biggr| \Biggl|\Biggl|\nabla^2 F_{R^{\theta_2}}(x) - \nabla^2 F_{R^{\theta_1}}(x) \Biggr|\Biggr| + M_{h{''}}T M_d (T M_h + T^2 M_d^2) \Biggl|\Biggl|\theta_1 - \theta_2\Biggr|\Biggr| \\
     & \overset{(l)}{\leq}  M_{h'} \Biggl|\Biggl|\nabla^2 F_{R^{\theta_2}}(x) - \nabla^2 F_{R^{\theta_1}}(x) \Biggr|\Biggr| +  M_{h{''}}T M_d (T M_h + T^2 M_d^2) \Biggl|\Biggl|\theta_1 - \theta_2\Biggr|\Biggr| \\
     & = M_{h'} \Biggl|\Biggl|\nabla^2 F_{R^{\theta_2}}(x) - \nabla^2 F_{R^{\theta_1}}(x) \Biggr|\Biggr| +  \xi^{'}_3  \Biggl|\Biggl|\theta_1 - \theta_2\Biggr|\Biggr|, 
 \end{align*}
 where $\xi^{'}_3 = M_{h{''}}T M_d (T M_h + T^2 M_d^2).$ In the above, $(h)$ and $(i)$ follow from the properties of norm, $(j)$ and $(k)$ follows from mean-value theorem, $(l)$ follows from \Cref{ass:g_bound}. 

Let $\Phi_{\theta_i}^t = \nabla \log \pi_{\theta_i}(A_t(\omega)|S_t(\omega))$.
Note that 
\begin{align*}
&
    \Biggl|\Biggl|\nabla^2 F_{R^{\theta_2}}(x) - \nabla^2 F_{R^{\theta_1}}(x) \Biggr|\Biggr|\\
    &= \Biggl|\Biggl|\sum_{\omega \in \Omega} \mathbf{1} \{R(\omega) \leq x\}\sum_{t=0}^{T-1} \left(\nabla^2 \log \pi_{\theta_2}(A_t(\omega)|S_t(\omega))\mathbb{P_{\theta}}(\omega) - \nabla^2 \log \pi_{\theta_1}(A_t(\omega)|S_t(\omega))\mathbb{P_{\theta}}(\omega) \right) \\
      &\qquad \qquad   + \sum_{\omega \in \Omega} \mathbf{1} \{R(\omega) \leq x\}\sum_{t=0}^{T-1} \left((\Phi_{\theta_2}^t) (\Phi_{\theta_2}^t)\tr - (\Phi_{\theta_1}^t) (\Phi_{\theta_1}^t)\tr \right) \mathbb{P}_{\theta}(\omega) \Biggr|\Biggr| \\
      & \overset{(m)}{\leq} T L_2 \Lambda \Biggl|\Biggl|\theta_1 - \theta_2\Biggr|\Biggr| + T \Lambda \left(\Biggl|\Bigg|\Phi_{\theta_2}^t - \Phi_{\theta_1}^t\Biggr|\Biggr|\right) \Biggl(\Biggl|\Biggl|\Phi_{\theta_2}^t\Biggr|\Biggr| + \Biggl|\Biggl|\Phi_{\theta_1}^t\Biggr|\Biggr|\Biggr)\\
      & \overset{(o)}{\leq} T L_2 \Lambda \Biggl|\Biggl|\theta_1 - \theta_2\Biggr|\Biggr| + 2 T M_d M_h \Lambda \Biggl|\Biggl|\theta_1 - \theta_2\Biggr|\Biggr| \\ 
      & = (T L_2 + 2 TM_d M_h) \Lambda \Biggl|\Biggl|\theta_1 - \theta_2\Biggr|\Biggr|\\
      & = \xi^{'}_4\Biggl|\Biggl|\theta_1 - \theta_2\Biggr|\Biggr|, 
\end{align*}
where $\xi^{'}_4 = (T L_2 + 2 TM_d M_h) \Lambda.$ In the above, $(m)$ follows from \Cref{ass:lip_hessian} and $\|XX\tr - YY\tr\| \leq (\|X - Y\|)(\|X\|+ \|Y\|)$, and $(o)$ follows from \Cref{lem:cdfbounds}, mean-value theorem, and \Cref{ass:Hessian_bound}.  

Thus, we have
\begin{align*}
    \int_{-M_r}^{M_r} A_2 dx \;\;\;\leq\;\;\; 2M_r(M_{h'}\xi^{'}_4 + \xi^{'}_3) \Biggl|\Biggl|\theta_1 - \theta_2\Biggr|\Biggr| \;\;\; = \;\;\; \xi_2 \Biggl|\Biggl|\theta_1 - \theta_2\Biggr|\Biggr|,\numberthis\label{eq:a2bd}
\end{align*}
where $\xi_2 = 2M_r(M_{h'}\xi^{'}_4 + \xi^{'}_3).$ 

Substituting \eqref{eq:a1bd} and \eqref{eq:a2bd} in \eqref{eq:a1a2}, we obtain
\[\Biggl|\Biggl|\nabla^2 \rho_h(\theta_1) - \nabla^2 \rho_h(\theta_2)\Biggr|\Biggr| \leq L_{\mathcal{H}} \Biggl|\Biggl|\theta_1 - \theta_2\Biggr|\Biggr|,\] where $L_{\mathcal{H}} = \xi_1 + \xi_2.$ 

 Hence proved. 
\end{proof}    
\section{Proofs of the claims in \Cref{sec:DRM policy gradient and Hessian estimation,sec:CRPN_drm}}  \label{sec:estimation_appendix}
\subsection{Efficient computation of DRM gradient and Hessian estimates}
We claimed in \Cref{sec:DRM policy gradient and Hessian estimation} that the gradient and Hessian estimates defined in \eqref{eq:drm_grad_estimation} and \eqref{eq:drm_Hessian_estimation}, respectively, can be computed efficiently using the order statistics of the $b$ samples $ \{R_i^{\theta}\}_{i = 1}^b$. The results below make this claim precise.
\begin{lemma}\label{lemma: drm_grad_estimator} 
Let $R_{(i)}^{\theta}$ be the $i$-th smallest order statistic from the samples $\{R_{(i)}^{\theta}\}_{i=0}^{m}$. Then,
    \begin{align*}
    &
    \widehat{\nabla} \rho_h (\theta) = \frac{1}{m}\sum_{i=1}^{m-1} \Biggl( R^{\theta}_{(i)} - R^{\theta}_{(i+1)}\Biggr) h'\Biggl(1 - \frac{i}{m}\Biggr)\sum_{j = 1}^{i} \nabla l^{\theta}_{(j)}   + \frac{1}{m} \Biggl(R^{\theta}_{(m)} - M_r\Biggr) h'_{+} (0) \sum_{j = 1}^{m} \nabla l^{\theta}_{(j)}.
    \end{align*}
\end{lemma}
\begin{proof}
    See \cite[Lemma 3]{vijayan2021policy}. 
\end{proof}

\begin{lemma}\label{lemma: drm_sample_estimator}
Let $R_{(i)}^{\theta}$ be the $i$-th smallest order statistic from the samples $\{R_{(i)}^{\theta}\}_{i=0}^{b}$. Then we have
    \begin{align*}
    &
        \widehat{\nabla}^2 \rho_h (\theta)= \frac{1}{b^2}\sum_{i=1}^{b-1} \Biggl[\left[R_{(i+1)}^{\theta} \!-\! R_{(i)}^{\theta}\right] h{''}\left[1 - \frac{i}{b}\right]\!\left[\sum_{j = 1}^i \nabla l_{(j)}^{\theta}\right]\! \left[\sum_{j = 1}^i \nabla l_{(j)}^{\theta}\right]\tr\Biggr] \\
    &   - \frac{1}{b}\sum_{i=1}^{b-1}\left(R_{(i+1)}^{\theta} - R_{(i)}^{\theta}\right) h'\left(1 - \frac{i}{b}\right) \left(\sum_{j = 1}^i \nabla^2 l_{(j)}^{\theta} + \sum_{j = 1}^i \left(\nabla l_{(j)}^{\theta}\right) \left(\nabla l_{(j)}^{\theta}\right)\tr \right) \\ 
    & \qquad \qquad+ \frac{1}{b^2} \left(M_r - R_{(b)}^{\theta}\right)\left(h_{+}^{''}(0) \left(\sum_{j = 1}^i \nabla l_{(j)}^{\theta}\right) \left(\sum_{j = 1}^i \nabla l_{(j)}^{\theta}\right)\tr\right)\\
    &\qquad \qquad-\frac{1}{b}\Biggl[ \left(M_r - R_{(b)}^{\theta}\right)h'_{+}(0) \left(\sum_{j = 1}^b \nabla^2 l_{(j)}^{\theta} + \sum_{j = 1}^b \nabla l_{(j)}^{\theta} {\nabla l_{(j)}^{\theta}}\tr\right)\Biggr], 
    \end{align*}
    where $\nabla l_{(i)}^{\theta} = \sum_{t = 0}^{T - 1} \nabla \log \pi_{\theta}(A^i_t| S^i_t).$
\end{lemma} 

\begin{proof}
\label{proof: drm_sample_estimator}
 We have the following estimates 
 from \cite{vijayan2021policy}. 
    \begin{equation}
    \label{eq:eq_edf}
    G_{R^{\theta}}^m (x) = 
        \begin{cases}
            0, & \ \text{if} \ x < R_{(1)}^{\theta}\\
            \frac{i}{m}, & \ \text{if} \ R_{(i)}^{\theta} \leq x < R_{(i+1)}^{\theta},\\
             & i \in \{1,\ldots, m-1\}\\
            1, & \ \text{if} \ x \geq R_{(m)}^{\theta}. 
        \end{cases}
    \end{equation}
    and 
    \begin{equation}
    \label{eq:eq_grad}
    \widehat{\nabla} G_{R^{\theta}}^m (x) = 
        \begin{cases}
            0, & \ \text{if} \ x < R_{(1)}^{\theta}\\
            \frac{1}{m}\sum_{j = 1}^i \nabla l_{(j)}^{\theta}, & \ \text{if} \ R_{(i)}^{\theta} \leq x < R_{(i+1)}^{\theta},\\
            & i \in \{1,\ldots, m-1\}\\
            \frac{1}{m}\sum_{j = 1}^m \nabla l_{(j)}^{\theta}, & \ \text{if} \ x \geq R_{(m)}^{\theta}. 
        \end{cases}
    \end{equation}
We form the estimate  $\widehat{\nabla}^2 G_{R^{\theta}}^m (x)$ of $\nabla^2 G_{R^{\theta}}^m (x)$ as follows: 
\begin{equation} 
\label{eq:eq_hes}
    \widehat{\nabla}^2 G_{R^{\theta}}^m (x) = 
        \begin{cases}
            0, & \ \text{if} \ x < R_{(1)}^{\theta}\\
            \frac{1}{m}\sum_{j = 1}^i \nabla^2 l_{(j)}^{\theta} + \frac{1}{m}\sum_{j = 1}^i \left(\nabla l_{(j)}^{\theta}\right) \left(\nabla l_{(j)}^{\theta}\right)\tr, & \ \text{if} \ R_{(i)}^{\theta} \leq x < R_{(i+1)}^{\theta},\\
            & i \in \{1,\ldots, m-1\}\\
            \frac{1}{m}\sum_{j = 1}^m \nabla^2 l_{(j)}^{\theta} + \frac{1}{m}\sum_{j = 1}^m \left(\nabla l_{(j)}^{\theta}\right) \left(\nabla l_{(j)}^{\theta}\right)\tr, & \ \text{if} \ x \geq R_{(m)}^{\theta}. 
        \end{cases}
    \end{equation}

    Using \eqref{eq:eq_edf}, \eqref{eq:eq_grad}, and \eqref{eq:eq_hes}, we obtain 
    \begin{align*}
         &
    \widehat{\nabla}^2 \rho_h (\theta) \\
    & = \int_{-M_r}^{M_r} \left(h{''}(1 - G^m_{R^{\theta}}(x)) \left(\widehat{\nabla} G^m_{R^{\theta}}(x)\right) \left(\widehat{\nabla} G^m_{R^{\theta}}(x)\right)\tr -  h'(1 - G^m_{R^{\theta}}(x)) \widehat{\nabla}^2 G^m_{R^{\theta}}(x)\right) dx \\
    & = \int_{-M_r}^{R_{(1)}^{\theta}} \left(h{''}(1 - G^m_{R^{\theta}}(x)) \left(\widehat{\nabla} G^m_{R^{\theta}}(x)\right) \left(\widehat{\nabla} G^m_{R^{\theta}}(x)\right)\tr -  h'(1 - G^m_{R^{\theta}}(x)) \widehat{\nabla}^2 G^m_{R^{\theta}}(x)\right) dx\\
    &\qquad + \sum_{i=1}^{m-1}\int_{R_{(i)}^{\theta}}^{R_{(i+1)}^{\theta}} \left(h{''}(1 - G^m_{R^{\theta}}(x)) \left(\widehat{\nabla} G^m_{R^{\theta}}(x)\right) \left(\widehat{\nabla} G^m_{R^{\theta}}(x)\right)\tr -  h'(1 - G^m_{R^{\theta}}(x)) \widehat{\nabla}^2 G^m_{R^{\theta}}(x)\right) dx\\
    &\qquad + \int_{R_{(m)}^{\theta}}^{M_r} \left(h{''}(1 - G^m_{R^{\theta}}(x)) \left(\widehat{\nabla} G^m_{R^{\theta}}(x)\right) \left(\widehat{\nabla} G^m_{R^{\theta}}(x)\right)\tr -  h'(1 - G^m_{R^{\theta}}(x)) \widehat{\nabla}^2 G^m_{R^{\theta}}(x)\right) dx \\
    & = \frac{1}{m}\sum_{i=1}^{m-1}\int_{R_{(i)}^{\theta}}^{R_{(i+1)}^{\theta}} h{''}(1 - \frac{i}{m}) \left(\sum_{j = 1}^i \nabla l_{(j)}^{\theta}\right) \left(\sum_{j = 1}^i \nabla l_{(j)}^{\theta}\right)\tr  dx\\
    & \qquad - \frac{1}{m}\sum_{i=1}^{m-1}\int_{R_{(i)}^{\theta}}^{R_{(i+1)}^{\theta}} g'(1 - \frac{i}{m}) \left(\sum_{j = 1}^i \nabla^2 l_{(j)}^{\theta} + \sum_{j = 1}^i \left(\nabla l_{(j)}^{\theta}\right) \left(\nabla l_{(j)}^{\theta}\right)\tr\right) dx \\ 
    &\qquad + \frac{1}{m}\int_{R_{(m)}^{\theta}}^{M_r} \left(h_{+}^{''}(0) \left(\sum_{j = 1}^i \nabla l_{(j)}^{\theta}\right) \left(\sum_{j = 1}^i \nabla l_{(j)}^{\theta}\right)\tr\right) dx\\
    & \qquad - \frac{1}{m}\int_{R_{(m)}^{\theta}}^{M_r} h'_{+}(0) \left(\sum_{j = 1}^m \nabla^2 l_{(j)}^{\theta} + \sum_{j = 1}^m \left(\nabla l_{(j)}^{\theta}\right) \left(\nabla l_{(j)}^{\theta}\right)\tr\right) dx \\
    & = \frac{1}{m}\sum_{i=1}^{m-1} \left(R_{(i+1)}^{\theta} - R_{(i)}^{\theta}\right) g{''}(1 - \frac{i}{m}) \left(\sum_{j = 1}^i \nabla l_{(j)}^{\theta}\right) \left(\sum_{j = 1}^i \nabla l_{(j)}^{\theta}\right)\tr \\
    & \qquad - \frac{1}{m}\sum_{i=1}^{m-1}\left(R_{(i+1)}^{\theta} - R_{(i)}^{\theta}\right) h'(1 - \frac{i}{m}) \left(\sum_{j = 1}^i \nabla^2 l_{(j)}^{\theta} + \sum_{j = 1}^i \left(\nabla l_{(j)}^{\theta}\right) \left(\nabla l_{(j)}^{\theta}\right)\tr\right) \\ 
    &\qquad + \frac{1}{m} \left(M_r - R_{(m)}^{\theta}\right)\left(h_{+}^{''}(0) \left(\sum_{j = 1}^m \nabla l_{(j)}^{\theta}\right) \left(\sum_{j = 1}^m \nabla l_{(j)}^{\theta}\right)\tr\right)\\
    & \qquad - \frac{1}{m} \left(M_r - R_{(m)}^{\theta}\right)h'_{+}(0) \left(\sum_{j = 1}^m \nabla^2 l_{(j)}^{\theta} + \sum_{j = 1}^m \left(\nabla l_{(j)}^{\theta}\right) \left(\nabla l_{(j)}^{\theta}\right)\tr\right). 
    \end{align*}
\end{proof} 
\subsection{Proof of Lemma \ref{lemma: drm_sample_estimator_alt}} 
\begin{proof}
\label{proof: drm_sample_estimator_alt}
From Lemma \eqref{lemma: drm_sample_estimator}, we can define the following two terms such that $\widehat{\nabla}^2 \rho_h (\theta) := \widehat{\nabla}^2 \rho^{A}_g (\theta) - \widehat{\nabla}^2 \rho^{B}_h (\theta)$, where
\begin{align*}
\widehat{\nabla}^2 \rho^{A}_h (\theta) &:= \frac{1}{m}\sum_{i=1}^{m-1} \left(R_{(i+1)}^{\theta} - R_{(i)}^{\theta}\right) h{''}(1 - \frac{i}{m}) \left(\sum_{j = 1}^i \nabla l_{(j)}^{\theta}\right) \left(\sum_{j = 1}^i \nabla l_{(j)}^{\theta}\right)\tr\\
&+ \frac{1}{m} \left(M_r - R_{(m)}^{\theta}\right) h_{+}^{''}(0) \left(\sum_{j = 1}^i \nabla l_{(j)}^{\theta}\right) \left(\sum_{j = 1}^i \nabla l_{(j)}^{\theta}\right)\tr, \\
\widehat{\nabla}^2 \rho^{B}_h (\theta) &:= \frac{1}{m}\sum_{i=1}^{m-1}\left(R_{(i+1)}^{\theta} - R_{(i)}^{\theta}\right) h'(1 - \frac{i}{m}) \left(\sum_{j = 1}^i \nabla^2 l_{(j)}^{\theta} + \sum_{j = 1}^i \left(\nabla l_{(j)}^{\theta}\right) \left(\nabla l_{(j)}^{\theta}\right)\tr\right) \\
&+ \frac{1}{m} \left(M_r - R_{(m)}^{\theta}\right)h'_{+}(0) \left(\sum_{j = 1}^m \nabla^2 l_{(j)}^{\theta} + \sum_{j = 1}^m \left(\nabla l_{(j)}^{\theta}\right) \left(\nabla l_{(j)}^{\theta}\right)\tr\right).
\end{align*}
Given the construction of $c''_i$ and $c'_i$ from Lemma \eqref{lemma: drm_sample_estimator_alt} we can extend the upper limit of the summation to $m$ for both the terms. Re-writing the two terms as follows:
\begin{align*}
\widehat{\nabla}^2 \rho^{A}_h (\theta) &= \frac{1}{m}\sum_{i=1}^{m}  c''_i \left(\sum_{j = 1}^i \nabla l_{(j)}^{\theta}\right) \left(\sum_{j = 1}^i \nabla l_{(j)}^{\theta}\right)\tr & \widehat{\nabla}^2 \rho^{B}_h (\theta) &= \frac{1}{m}\sum_{i=1}^{m} c'_i \left(\sum_{j = 1}^i \nabla^2 l_{(j)}^{\theta} + \sum_{j = 1}^i \left(\nabla l_{(j)}^{\theta}\right) \left(\nabla l_{(j)}^{\theta}\right)\tr\right)\\
&= \frac{1}{m}\sum_{i=1}^{m}  c''_i \left(\nabla \sum_{j = 1}^i l_{(j)}^{\theta}\right) \left(\nabla \sum_{j = 1}^i  l_{(j)}^{\theta}\right)\tr & &= \frac{1}{m}\sum_{i=1}^{m} c'_i \sum_{j = 1}^i \left(\nabla^2 l_{(j)}^{\theta} + \left(\nabla l_{(j)}^{\theta}\right) \left(\nabla l_{(j)}^{\theta}\right)\tr\right)\\
&= \frac{1}{m}\sum_{i=1}^{m}  c''_i \nabla s^{\theta}_i \nabla\tr s^{\theta}_i, \textrm{ and } & &= \frac{1}{m} \sum_{j=1}^{m} \left(\nabla^2 l_{(j)}^{\theta} + \left(\nabla l_{(j)}^{\theta}\right) \left(\nabla l_{(j)}^{\theta}\right)\tr\right) \left(\sum_{i=j}^{m} c'_i \right)\\
& & &= \frac{1}{m} \sum_{j=1}^{m} \psi_j \left(\nabla^2 l_{(j)}^{\theta} + \nabla l_{(j)}^{\theta} \nabla \tr l_{(j)}^{\theta} \right),
\end{align*}
where $s^{\theta}_i := \sum_{j = 1}^i  l_{(j)}^{\theta}$ and $\psi_j = \sum_{i=j}^{m} c'_i$. In the third line for term $\widehat{\nabla}^2 \rho^{B}_h (\theta)$, we switch the order of summations as $\sum_{i=1}^m \sum_{j=1}^i \equiv \sum_{j=1}^m \sum_{i=j}^m$. Combining these two terms, we get the final form of the Hessian estimate given in Lemma \eqref{lemma: drm_sample_estimator_alt}. The expression for $\widehat{\nabla} \rho_g (\theta)$ is obtained using similar technique as demonstrated for $\widehat{\nabla}^2 \rho^B_g (\theta)$. Re-writing the Hessian estimate in this form, we can adopt the computation trick used in \cite{maniyar2024crpn}.

\paragraph{Variance-reduced DRM estimates.}
We first argue that any cross-trajectory term of the form $r(\tau_i) \nabla \log p(\tau_j)$ is zero in expectation over all possible trajectories if $ i\neq j$. This holds true as $\E_{\tau}{[\nabla  \log p(\tau)]} = \E_{\tau}{[\frac{\nabla p(\tau)}{p(\tau)}]} = \sum_{\tau} \frac{\nabla p(\tau)}{p(\tau)} p(\tau) = \sum_{\tau} \nabla p(\tau) = \nabla \sum_{\tau} p(\tau) = \nabla [1] = 0$. Furthermore, $\nabla  \E_{\tau}{[\nabla  \log p(\tau)]} = \sum_{\tau} p(\tau) \nabla^2 \log p(\tau) + p(\tau) \nabla \log p(\tau) \nabla^T \log p(\tau) = \E_{\tau}{[\nabla^2 \log p(\tau) + \nabla \log p(\tau) \nabla^T \log p(\tau)]} = 0$. Now, we consider the gradient estimate
\begin{align*}
    \widehat{\nabla} \rho_h &= -\frac{1}{m} \sum_{i=1}^m \psi'_{i} \nabla l^{\theta}_{(i)} = -\frac{1}{m} \sum_{i=1}^m \sum_{j=i}^m c'_j \nabla l^{\theta}_{(i)} = -\frac{1}{m} \sum_{i=1}^m \left( c'_i \nabla l^{\theta}_{(i)} + \sum_{j=i+1}^m c'_j \nabla l^{\theta}_{(i)} \right) , \\
    &= \frac{1}{m} \sum_{i=1}^m \left( R^{\theta}_{(i)} h'(1-\frac{i}{m}) \nabla l^{\theta}_{(i)} \right)
    - \frac{1}{m} \sum_{i=1}^m \left( R^{\theta}_{(i+1)} h'(1-\frac{i}{m}) \nabla l^{\theta}_{(i)} \right) 
    - \frac{1}{m} \sum_{i=1}^m \sum_{j=i+1}^m c'_j \nabla l^{\theta}_{(i)} .
\end{align*}
Notice that the second and third term are cross-trajectory terms with zero expectation and thus can be ignored in order to reduce variance in our DRM estimates. For term A in our DRM Hessian estimate, we re-write as follows
\begin{align*}
    \widehat{\nabla}^2 \rho^{A}_h(\theta) &=  \frac{1}{m}\sum_{i=1}^{m}  c''_i \left(\sum_{j = 1}^i \nabla 
 l_{(j)}^{\theta}\right) \left(\sum_{j = 1}^i \nabla l_{(j)}^{\theta}\right)\tr = \frac{1}{m}\sum_{i=1}^{m}  c''_i \left( \sum_{j=1}^{i} \nabla l_{(j)}^{\theta} \nabla \tr l_{(j)}^{\theta} + \sum_{\substack{j, k=1\\ j \neq k}}^{i} \nabla l_{(j)}^{\theta} \nabla^T l_{(k)}^{\theta} \right) \\
 &= \frac{1}{m} \sum_{i=1}^{m}  \left( \sum_{j=i}^{m} c''_j \right) \sum_{j=1}^{i} \nabla l_{(j)}^{\theta} \nabla \tr l_{(j)}^{\theta} + \frac{1}{m}\sum_{i=1}^{m}  c''_i \sum_{\substack{j, k=1\\ j \neq k}}^{i} \nabla l_{(j)}^{\theta} \nabla \tr l_{(k)}^{\theta},
\end{align*}
with the second term having zero expectation over trajectories $j \neq k$ as $\E_{\tau} [\nabla \log p(\tau)] = 0$. The argument for term B follows in a similar fashion to that of the gradient estimate, but due to $\E_{\tau} [\nabla^2 \log p(\tau) + \nabla  \log p(\tau) \nabla^T \log p(\tau)]$ being 0. The authors of \cite{markowitz2023} argue that by removing such cross-trajectory terms, we end up reducing the variance of our unbiased estimates. These estimates have also shown better performance empirically.

\end{proof}

\section{Proofs of the claims in \Cref{sec:main_results}}
\label{sec:main_appendix} 
In this section, we prove \Cref{lemma: error_bound_Hessian}  and \Cref{thm: convergence_SOSP}. Now we state an useful results from \cite{vijayan2021policy} below, which will be used in the proof of  \Cref{lemma: error_bound_Hessian}. 

\subsection{Proof of Lemma \ref{lemma: error_bound_Hessian}}
\label{proof: error_bound_Hessian}
We state and prove a few intermediate results that will be used in the proof of Lemma \ref{lemma: error_bound_Hessian}.

\begin{proposition}
\label{propo_grad2}
     For all $x \in (-M_r, M_r)$, 
     \begin{align*}
    \E\left [ \left|G^{m}_{R^{\theta}}(x) - F_{R^{\theta}}(x) \right|^2\right] &\leq \frac{4}{m}. \\
    \E\left[\left\|\widehat{\nabla} G^{m}_{R^{\theta}}(x) - \nabla F_{R^{\theta}}(x)\right\|^2\right] &\leq \frac{4e^2 T^2 M_d^2}{m}.          
     \end{align*}
\end{proposition}
\begin{proof}
    These identities can be extracted from the proof of  Lemma 4 in \cite{vijayan2021policy}. 
\end{proof}

\begin{lemma} 
\label{lemma:error_bound_gradient}
Let the gradient estimate $\widehat{\nabla} \rho_h (\theta)$  be computed by \eqref{eq:drm_grad_estimation} with $m$ number of trajectories. From \cite{vijayan2021policy}, we have
    \begin{align}
    \label{eq:error_bound_grad}
    \E \left[ \left\|\widehat{\nabla} \rho_h (\theta) - \nabla \rho_h (\theta)\right \|^2\right] \leq \frac{\kappa_1}{m},
    \end{align}
    where $\kappa_1 = 32M_r^2 T^2 M_d^2 (e^2 M_{h'}^2 + M_{h''}^2). $
\end{lemma}
\begin{proof}
     See \cite[Lemma 4]{vijayan2021policy}. 
\end{proof}

\begin{proposition}
\label{propo_grad4}
     For all $x \in (-M_r, M_r)$, 
    \[\E\left [ \left|G^{m}_{R^{\theta}}(x) - F_{R^{\theta}}(x) \right|^4\right] \leq \frac{16}{m^2}.\]
\end{proposition}
\begin{proof}
    Using Azuma-Hoeffding inequality, we have
    \begin{align*}
        P \Bigl(|G^m_{R^{\theta}}(x) - F_{R^{\theta}}(x)| > \epsilon\Bigr) \leq 2 e^{-\frac{m\epsilon^2}{2}}. 
    \end{align*}
    Thus,
    \begin{align*}
        \E \left[|G^m_{R^{\theta}}(x) - F_{R^{\theta}}(x)|^4 > \epsilon\right] & = \int\limits_{0}^{\infty} P \Bigl(|G^m_{R^{\theta}}(x) - F_{R^{\theta}}(x)| > \epsilon^{\frac{1}{4}}\Bigr) d\epsilon\\
        & \leq \int_{0}^{\infty} 2 e^{\frac{-m\sqrt{\epsilon}}{2}}d\epsilon = \frac{16}{m^2}. 
    \end{align*}
\end{proof}
\begin{proposition}
\label{propo_Hessian4} 
    For all $x \in (-M_r, M_r)$,
    \[\E\left[\left\|\widehat{\nabla} G^{m}_{R^{\theta}}(x) - \nabla F_{R^{\theta}}(x)\right\|^4\right] \leq \frac{16e^2 T^4 M_d^4}{m^2}. \]
\end{proposition}
\begin{proof}
    Using Azuma-Hoeffding inequality, we have
    \begin{align*}
        P \Bigl(\left \|\nabla G^m_{R^{\theta}}(x) - \nabla F_{R^{\theta}}(x)\right \| > \epsilon\Bigr) \leq 2e^2 e^{-\frac{m\epsilon^2}{2T^2 M^2_d}}. 
    \end{align*}
    Thus,
    \begin{align*}
        \E \left[\|\nabla G^m_{R^{\theta}}(x) - \nabla F_{R^{\theta}}(x)\|^4 > \epsilon\right] & = \int\limits_{0}^{\infty} P \Bigl(\|\nabla G^m_{R^{\theta}}(x) - \nabla F_{R^{\theta}}(x)\| > \epsilon^{\frac{1}{4}}\Bigr) d\epsilon\\
        & \leq 2e^2\int_{0}^{\infty} e^{\frac{-m\sqrt{\epsilon}}{2T^2 M^2_d}}d\epsilon = \frac{16e^2T^4 M^4_d}{m^2}. 
    \end{align*}
\end{proof}

\begin{proof}\textbf{\textit{(Lemma \ref{lemma: error_bound_Hessian})}}\ \\
Notice that
\begin{align*}
     &
     \E \left[\left\| \widehat{\nabla}^2 \rho_h(\theta) - \nabla^2 \rho_h(\theta)\right\|^2\right] \\
     & = \E \Biggl[ \Biggl| \Biggl|\int_{-M_r}^{M_r} h{''}(1 - G^m_{R^{\theta}}(x)) \left(\widehat{\nabla} G^m_{R^{\theta}}(x)\right) \left(\widehat{\nabla} G^m_{R^{\theta}}(x)\right)\tr dx - \int_{-M_r}^{M_r} h'(1 - G^m_{R^{\theta}}(x)) \widehat{\nabla}^2 G^m_{R^{\theta}}(x) dx. \\
     & \qquad \qquad -  \int_{-M_r}^{M_r} h{''}(1 - F_{R^{\theta}}(x)) \left(\nabla F_{R^{\theta}}(x)\right)\left(\nabla F_{R^{\theta}}(x)\right)\tr dx + \int_{-M_r}^{M_r} h'(1 - F_{R^{\theta}}(x))\nabla^2 F_{R^{\theta}}(x)dx \Biggr| \Biggr|^2 \Biggr]\\
     &  \overset{(a)}{\leq} 2 \E \Biggl[ \Biggl| \Biggl|\int_{-M_r}^{M_r} \left[h{''}(1 - G^m_{R^{\theta}}(x)) \left(\widehat{\nabla} G^m_{R^{\theta}}(x)\right) \left(\widehat{\nabla} G^m_{R^{\theta}}(x)\right)\tr -  h{''}(1 - F_{R^{\theta}}(x)) \left(\nabla F_{R^{\theta}}(x)\right)\left(\nabla F_{R^{\theta}}(x)\right)\tr\right] dx \Biggr|\Biggr|^2 \\
     & \qquad \qquad \qquad  + \Biggl|\Bigg|\int_{-M_r}^{M_r} \left[h'(1 - F_{R^{\theta}}(x))\nabla^2 F_{R^{\theta}}(x) - h'(1 - G^m_{R^{\theta}}(x)) \widehat{\nabla}^2 G^m_{R^{\theta}}(x)\right] dx \Biggr|\Bigg|^2 \Biggr]\\
     & = 2 \E \Biggl[ \Biggl| \Biggl|\int_{-M_r}^{M_r} \left[h{''}(1 - G) \left(\widehat{\nabla} G\right) \left(\widehat{\nabla} G\right)\tr -  h{''}(1 - F) \left(\nabla F\right)\left(\nabla F\right)\tr\right] dx \Biggr|\Biggr|^2\\
     & \qquad \qquad \qquad + \Biggl|\Bigg|\int_{-M_r}^{M_r} \left[h'(1 - F)\nabla^2 F - h'(1 - G) \widehat{\nabla}^2 G \right]dx \Biggr|\Bigg|^2 \Biggr]\\
     & = 2 \E \Biggl[ \Biggl| \Biggl|\int_{-M_r}^{M_r} D_1 dx \Biggr|\Biggr|^2 + \Biggl|\Bigg|\int_{-M_r}^{M_r} D_2 dx \Biggr|\Bigg|^2 \Biggr].\numberthis \label{eq:D1D2}
\end{align*}
In the above, for notational simplicity, we set $G = G^m_{R^{\theta}}(x), \widehat{\nabla} G = \widehat{\nabla} G^m_{R^{\theta}}(x), F = F_{R^{\theta}}(x), \nabla F = \nabla F_{R^{\theta}}(x), \nabla^2 F = \nabla^2 F_{R^{\theta}}(x)$,  $\widehat{\nabla}^2 G = \widehat{\nabla}^2 G^m_{R^{\theta}}(x),D_1 = h{''}(1 - G) \left(\widehat{\nabla} G\right) \left(\widehat{\nabla} G\right)\tr -  h{''}(1 - F) \left(\nabla F\right)\left(\nabla F\right)\tr,$ and $D_2 = h'(1 - F)\nabla^2 F - h'(1 - G) \widehat{\nabla}^2 G.$ Further, $(a)$ follows from the fact that $\|X + Y\|^2 \leq 2(\|X\|^2 + \|Y\|^2)$. 

We now bound the term $\Biggl| \Biggl|\int_{-M_r}^{M_r} D_1 dx \Biggr|\Biggr|^2$ in \eqref{eq:D1D2} as follows:
\begin{align*}
    \Biggl| \Biggl|\int_{-M_r}^{M_r} D_1 dx \Biggr|\Biggr|^2
    & = \Biggl| \Biggl|\int_{-M_r}^{M_r} h{''}(1 - G) \left(\widehat{\nabla} G\right) \left(\widehat{\nabla} G\right)\tr -  h{''}(1 - F) \left(\nabla F\right)\left(\nabla F\right)\tr dx \Biggr|\Biggr|^2 \\
    & \overset{(b)}{\leq} 2M_r\int_{-M_r}^{M_r} \Biggl| \Biggl| h{''}(1 - G) \left(\widehat{\nabla} G\right) \left(\widehat{\nabla} G\right)\tr -  h{''}(1 - F) \left(\nabla F\right)\left(\nabla F\right)\tr  \Biggr|\Biggr|^2 dx \\ 
    & = 2M_r\int_{-M_r}^{M_r} \Biggl| \Biggl| h{''}(1 - G) \left(\widehat{\nabla} G\right) \left(\widehat{\nabla} G\right)\tr - h{''}(1 - G) \left(\widehat{\nabla} G\right) \left(\nabla F \right)\tr \\
    & \qquad \qquad \qquad\qquad \qquad+ h{''}(1 - G) \left(\widehat{\nabla} G\right) \left(\nabla F \right)\tr -  h{''}(1 - F) \left(\nabla F\right)\left(\nabla F\right)\tr  \Biggr|\Biggr|^2 dx \\
    &  \overset{(c)}{\leq} 4M_r\int_{-M_r}^{M_r} \Biggl| \Biggl| h{''}(1 - G) \left(\widehat{\nabla} G\right) \left(\widehat{\nabla} G\right)\tr - h{''}(1 - G) \left(\widehat{\nabla} G\right) \left(\nabla F \right)\tr\Biggr|\Biggr|^2 dx\\
    & \qquad \qquad \qquad + 4M_r\int_{-M_r}^{M_r} \Biggl|\Biggl|h{''}(1 - G) \left(\widehat{\nabla} G\right) \left(\nabla F \right)\tr -  h{''}(1 - F) \left(\nabla F\right)\left(\nabla F\right)\tr  \Biggr|\Biggr|^2 dx \\
    & \overset{(d)}{\leq} 4M_r\int_{-M_r}^{M_r} \Biggl|\Biggl| h{''}(1 - G) \left(\widehat{\nabla} G\right) \Biggr|\Biggr|^2 \Biggl|\Biggl| \left(\widehat{\nabla} G\right)\tr - \left(\nabla F \right)\tr \Biggr|\Biggr|^2 dx \\
    & \qquad \qquad \qquad \qquad \qquad + 4M_r\int_{-M_r}^{M_r} \Biggl|\Biggl|h{''}(1 - G) \left(\widehat{\nabla} G\right) -  h{''}(1 - F) \left(\nabla F\right)  \Biggr|\Biggr|^2 \Biggl| \Biggl| \left(\nabla F\right)\tr\Biggr|\Biggr|^2 dx \\
    & \overset{(e)}{\leq} 4M_r\int_{-M_r}^{M_r} M_{h''}^2  T^2 M_d^2 \Biggl|\Biggl| \left(\widehat{\nabla} G\right) - \left(\nabla F \right) \Biggr|\Biggr|^2 dx\\
    & \qquad \qquad \qquad \qquad \qquad+ 4M_r\int_{-M_r}^{M_r} T^2 M_d^2 \Biggl|\Biggl|h{''}(1 - G) \left(\widehat{\nabla} G\right) -  h{''}(1 - F) \left(\nabla F\right)  \Biggr|\Biggr|^2 dx, \numberthis \label{eq:d1_1}
\end{align*}
Where $(b)$ follows from the Cauchy-Schwarz inequality for integrals, $(c)$ follows from the fact that $\|X + Y\|^2 \leq 2(\|X\|^2 + \|Y\|^2)$, $(d)$ follows from Cauchy-Schwarz inequality, and  $(e)$ follows \Cref{ass:g_bound} and \Cref{lem:cdfbounds}. 

Notice that  
\begin{align*}
&
\Biggl|\Biggl|h{''}(1 - G) \left(\widehat{\nabla} G\right) -  h{''}(1 - F) \left(\nabla F\right)  \Biggr|\Biggr|^2 \\
& = \Biggl|\Biggl|h{''}(1 - G) \left(\widehat{\nabla} G\right) - h{''}(1 - F) \left(\widehat{\nabla} G\right) + h{''}(1 - F) \left(\widehat{\nabla} G\right) -  h{''}(1 - F) \left(\nabla F\right)  \Biggr|\Biggr|^2 \\
& \overset{(f)}{\leq} 2 \Biggl|h{''}(1 - G) - h{''}(1 - F) \Biggr|^2\Biggl|\Bigg|\left(\widehat{\nabla} G\right)\Biggr|\Biggr|^2 + 2\Biggl|h{''}(1 - F)\Biggr|^2 \Biggl|\Biggl|\left(\widehat{\nabla} G\right) -  \left(\nabla F\right)  \Biggr|\Biggr|^2 \\
&\overset{(g)}{\leq} 2 M_{h^{'''}}^2 \Biggl| G - F \Biggr|^2 \Biggl|\Bigg|\left(\widehat{\nabla} G\right)\Biggr|\Biggr|^2 + 2 M_{h^{''}}^2 \Biggl|\Biggl|\left(\widehat{\nabla} G\right) -  \left(\nabla F\right)  \Biggr|\Biggr|^2 \\
& \overset{(h)}{\leq} 2 M_{h^{'''}}^2 T^2 M_d^2 \Biggl| G - F \Biggr|^2  + 2 M_{h^{''}}^2 \Biggl|\Biggl|\left(\widehat{\nabla} G\right) -  \left(\nabla F\right)  \Biggr|\Biggr|^2, \numberthis \label{eq:d1_2}
\end{align*}
where $(f)$ follows from the property of norm and $\|X + Y\|^2 \leq 2(\|X\|^2 + \|Y\|^2)$, $(g)$ follows from the mean-value theorem and \Cref{ass:g_bound}, and $(h)$ follows from \Cref{lem:cdfbounds}. 

We now bound the term $\Biggl|\Bigg|\int_{-M_r}^{M_r} D_2 dx \Biggr|\Bigg|^2$ in \eqref{eq:D1D2} as follows: 
\begin{align*}
    \Biggl|\Bigg|\int_{-M_r}^{M_r} D_2 dx \Biggr|\Bigg|^2& = \Biggl|\Biggl|\int_{-M_r}^{M_r} h'(1 - F)\nabla^2 F - g'(1 - G) \widehat{\nabla}^2 G dx \Biggr|\Bigg|^2 \\
    & = \Biggl|\Biggl|\int_{-M_r}^{M_r} h'(1 - F)\nabla^2 F - h'(1 - G)\nabla^2 F + h'(1 - G)\nabla^2 F - h'(1 - G) \widehat{\nabla}^2 G dx \Biggr|\Biggr|^2 \\
    & \overset{(i)}{\leq} 2M_r\int_{-M_r}^{M_r} \Biggl|\Bigg| h'(1 - F)\nabla^2 F - h'(1 - G)\nabla^2 F + h'(1 - G)\nabla^2 F - h'(1 - G) \widehat{\nabla}^2 G  \Biggr|\Biggr|^2 dx \\
    & \overset{(j)}{\leq} 4M_r \int_{-M_r}^{M_r} \Biggl[ \Biggl|h'(1 - F) - h'(1 - G)\Biggr|^2 \Biggl|\Biggl| \nabla^2 F\Biggr|\Biggr|^2 +  \Biggl|h'(1 - G) \Biggr|^2 \Biggl|\Biggl| \nabla^2 F - \widehat{\nabla}^2 G \Biggr|\Biggr|^2\Biggr] dx \\
    & \overset{(k)}{\leq} 4M_r \int_{-M_r}^{M_r} \Biggl[ M_{h^{''}}^2 (T M_h + T^2 M_d^2)^2 \Biggl|F -G\Biggr|^2  +  M_{h'}^2 \Biggl|\Biggl| \nabla^2 F - \widehat{\nabla}^2 G \Biggr|\Biggr|^2\Biggr] dx, \numberthis \label{eq:d2_1}
\end{align*}
where $(i)$ follows from the Cauchy-Schwarz inequality for integrals, $(j)$ follows from the fact that $\|X + Y\|^2 \leq 2(\|X\|^2 + \|Y\|^2)$, and $(k)$ follows from mean-value theorem and \Cref{lem:cdfbounds}. 

Substituting \eqref{eq:d1_1}, \eqref{eq:d1_2} and \eqref{eq:d2_1} in \eqref{eq:D1D2}, we obtain
\begin{align*}
    &
    \E \left[\left\| \widehat{\nabla}^2 \rho_h(\theta) - \nabla^2 \rho_h(\theta)\right\|^2\right] \\
    & \overset{(l)}{\leq} 4M_r\E \Biggl[ \int_{-M_r}^{M_r} \Biggl\{ 2M_{h''}^2  T^2 M_d^2 \Biggl|\Biggl| \left(\widehat{\nabla} G\right)\tr - \left(\nabla F \right)\tr \Biggr|\Biggr|^2 + 4 T^4 M_d^4 M_{h^{'''}}^2 \Biggl| G - F \Biggr|^2  \\
    & + 4 T^2 M_d^2 M_{h^{''}}^2 \Biggl|\Biggl|\left(\widehat{\nabla} G\right) -  \left(\nabla F\right)  \Biggr|\Biggr|^2 + 2 \Biggl( M_{h^{''}}^2 (T M_h + T^2 M_d^2)^2 \Biggl|F -G\Biggr|^2  +  M_{h'}^2 \Biggl|\Biggl| \nabla^2 F - \widehat{\nabla}^2 G \Biggr|\Biggr|^2\Biggr) \Biggr\} dx \Biggr] \\
    & \overset{(m)}{\leq} 4M_r\int_{-M_r}^{M_r} \Biggl\{ 2M_{h''}^2  T^2 M_d^2 \E \Biggl|\Biggl| \left(\widehat{\nabla} G\right)\tr - \left(\nabla F \right)\tr \Biggr|\Biggr|^2 + 4 T^4 M_d^4 M_{h^{'''}}^2 \E \Biggl| G - F \Biggr|^2  \\
    & + 4T^2 M_d^2 M_{h^{''}}^2  \E \Biggl|\Biggl|\left(\widehat{\nabla} G\right) -  \left(\nabla F\right)  \Biggr|\Biggr|^2 +  2\Biggl( M_{h^{''}}^2 (T M_h + T^2 M_d^2)^2 \E \Biggl|F - G\Biggr|^2  +  M_{h'}^2 \E \Biggl|\Biggl| \nabla^2 F - \widehat{\nabla}^2 G \Biggr|\Biggr|^2\Biggr) \Biggr\} dx \\
    & \overset{(n)}{\leq} 4M_r\int_{-M_r}^{M_r} \Biggl\{ 2M_{h''}^2  T^2 M_d^2 \left(\frac{4e^2T^2 M^2_d}{m}\right) + 4 T^4 M_d^4 M_{h^{'''}}^2 \left(\frac{4}{m}\right) \\
    &\qquad + 2 T^2 M_d^2 M_{h^{''}}^2 \left(\frac{4e^2T^2 M^2_d}{m}\right) + 2 \Biggl( M_{h^{''}}^2 (T M_h + T^2 M_d^2)^2 \left(\frac{4}{m}\right) +  M_{h'}^2 \E \Biggl|\Biggl| \nabla^2 F - \widehat{\nabla}^2 G \Biggr|\Biggr|^2\Biggr) \Biggr\} dx \\
    & = \frac{\kappa_2}{m} +8M_r M^2_{h'}\int_{-M_r}^{M_r}\E \Biggl|\Biggl| \nabla^2 F - \widehat{\nabla}^2 G \Biggr|\Biggr|^2 dx, \numberthis \label{eq:err_tem}
\end{align*}
where $\kappa_2 = 64M_r^2\left(3e^2 M^2_{h''}T^4M_d^4 + 2 T^4 M_d^4 M^2_{h'''} +  M^2_{h''}(TM_h + T^2M^2_d)^2\right).$ In the above, $(l)$ follows from the \eqref{eq:d1_1}, \eqref{eq:d1_2} and \eqref{eq:d2_1}, $(m)$ follows from Fubini's theorem, and $(n)$ follows from \Cref{propo_grad2}. 

We now bound the term $\int_{-M_r}^{M_r}\E  \|\nabla^2 F - \widehat{\nabla}^2 G \|^2 dx$ in \eqref{eq:err_tem}.
From \eqref{eq:edf_Hessian}, we have 
\begin{align*}
    \widehat{\nabla}^2 G^{m}_{R^{\theta}}(x)
    & = \frac{1}{b} \sum_{i =1}^{b} \Biggl[ \mathbf{1}\{R_i^{\theta} \leq x\}\Bigg\{\sum_{t = 0}^{T - 1} \nabla^2 \log \pi_{\theta}(A^i_t| S^i_t) + \left(\sum_{t = 0}^{T - 1} \nabla \log \pi_{\theta}(A^i_t| S^i_t)\right) \left(\sum_{t = 0}^{T - 1} \nabla \log \pi_{\theta}(A^i_t| S^i_t)\right)\tr \Biggr\} \Biggr]. \\
    & = \frac{1}{b} \sum_{i =1}^{b} \mathcal{H}_i(\theta). 
\end{align*}
From \Cref{lem:cdfbounds}, we know that $\E \left[\|\mathcal{H}_i(\theta)\right\|^2] \leq (TM_h + T^2 M_d^2)^2$. 

From Theorem 1 in \cite{tropp2016expected}, we have
\begin{equation}
    \E \Biggl|\Biggl| \nabla^2 F - \widehat{\nabla}^2 G \Biggr|\Biggr|^2 \leq \frac{2C(d)}{b^2}\left(\left \|\sum \E[\Delta_i^2] \right \| + C(d) \E \left[\max \left \|\Delta_i \right \|^2\right]\right),
\end{equation}
where $\Delta_i = \mathcal{H}_i(\theta) - \nabla^2 F$ and $C(d) = 4(1+2\log 2d)$. 

It is apparent that $\E [\|\Delta_i\|^2] \leq \E \left[\|\mathcal{H}_i(\theta)\right\|^2] \leq (TM_h + T^2 M_d^2)^2 $ and $\|\sum \E[\Delta_i^2]\| \leq \sum \|\E[\Delta_i^2]\| \leq \sum \E [\|\Delta_i^2\|]$. 
Thus,
\begin{equation}
\label{eq:tropp_1}
    \E \Biggl|\Biggl| \nabla^2 F - \widehat{\nabla}^2 G \Biggr|\Biggr|^2 \leq \frac{4C(d)}{b} (TM_h + T^2 M_d^2)^2. 
\end{equation}
Using Rosenthal's inequality and Lemma 15 from \cite{maniyar2024crpn}, we get
\begin{equation}
\label{eq:tropp_2}
    \E \Biggl|\Biggl| \nabla^2 F - \widehat{\nabla}^2 G \Biggr|\Biggr|^4 \leq \frac{15d^2}{b^2} (TM_h + T^2 M_d^2)^4. 
\end{equation}
Substituting \eqref{eq:tropp_1} in \eqref{eq:err_tem}, we get
\begin{equation}
\label{eq:err_2}
    \E \left[\left\| \widehat{\nabla}^2 \rho_h(\theta) - \nabla^2 \rho_h(\theta)\right\|^2\right] \leq \left(\frac{\kappa_2}{m} + \frac{64M_r^2M^2_{h'}C(d)(TM_h + T^2 M_d^2)^2}{b}\right). 
\end{equation}
 Similarly using \eqref{eq:tropp_2}, \Crefrange{propo_grad4}{propo_Hessian4} and the inequality $\|X+Y\|_F^4 \leq 8\left(\|X\|_F^4 + \|Y\|_F^4\right)$,  
we have

\begin{equation}
\label{eq:err_4}
    \E \left[\left\| \widehat{\nabla}^2 \rho_h(\theta) - \nabla^2 \rho_h(\theta)\right\|_F^4\right] \leq \left(\frac{\kappa_3}{m^2} + \frac{1920M_r^2M^4_{h'}d^2(TM_h + T^2 M_d^2)^4}{b^2}\right), 
\end{equation}
where $\kappa_3 = 4096M_r^2\left\{T^8M_d^8(9e^2M_{h''}^4 + 8 M_{h'''}^4)+M_{h'}^4(TM_d+T^4M_d^4)^4\right\}$  and $\|.\|_{F}$ denotes the Frobenius norm. 

Thus, by using H\"{o}lder's inequality, we obtain
\begin{align*}
&
    \E \left[\left\| \widehat{\nabla}^2 \rho_h(\theta) - \nabla^2 \rho_h(\theta)\right\|^3\right]\\
    &\leq \left(\E \left[\left\| \widehat{\nabla}^2 \rho_h(\theta) - \nabla^2 \rho_h(\theta)\right\|_F^4\right]. \E \left[\left\| \widehat{\nabla}^2 \rho_h(\theta) - \nabla^2 \rho_h(\theta)\right\|^2\right]\right)^{\frac{1}{2}}\\
    & \overset{(o)}{\leq} \left(\frac{\kappa_3}{m^2} + \frac{1920M_r^4M_{h''}^4d^2(TM_h + T^2 M_d^2)^4}{b^2}\right)^{\frac{1}{2}}.\left(\frac{\kappa_2}{m} + \frac{64M_r^2M^2_{h'}C(d)(TM_h + T^2 M_d^2)^2}{b}\right)^{\frac{1}{2}}, 
\end{align*} 
where $(o)$ follows from \eqref{eq:err_2} and \eqref{eq:err_4}. 

If $m \geq b$, then 
\begin{align*}
&
    \E \left[\left\| \widehat{\nabla}^2 \rho_h(\theta) - \nabla^2 \rho_h(\theta)\right\|^3\right] \leq \left(\frac{\kappa_3 + t_2}{b^2}\right)^{\frac{1}{2}}.\left(\frac{\kappa_2 + t_1}{b}\right)^{\frac{1}{2}} = \frac{\sqrt{(\kappa_2 + t_1)(\kappa_3 + t_2)}}{b^{\frac{3}{2}}}, 
\end{align*}
where $t_1 = 64M_r^2M^2_{h'}C(d)(TM_h + T^2 M_d^2)^2 $ and $t_2 = 1920M_r^2M_{h'}^4d^2(TM_h + T^2 M_d^2)^4.$ 
The claim follows.
\end{proof}

\begin{lemma}
\label{lemma:lemma9}
Let the parameters $\{\theta_k\}$ be generated by \Cref{alg:CRPN_drm}. Then, we have 
\begin{align*}
&
    \sqrt{\E\left[\left \|\theta_{k} - \theta_{k-1}\right \|^2\right]}\\
    & \geq \max \Biggl\{\sqrt{\frac{\E\left[\|\nabla \rho_h(\theta_k)\|\right] - \delta_k^g - \delta_k^{\mathcal{H}}}{L_{\mathcal{H}} + \alpha_k}}, \frac{2}{\alpha_k + 2 L_{\mathcal{H}}}\Biggl[\E \left[\lambda_{\max}(\nabla^2 \rho_h(\theta_k))\right]- \sqrt{2(\alpha + L_{\mathcal{H}}) \delta_k^{\mathcal{H}}}\Biggr]\Biggr\},  
\end{align*}
where $\delta_k^g > 0$ and $\delta_k^{\mathcal{H}}> 0$ are defined as follows:
\begin{eqnarray}
\label{eq:error_bound}
    \E\left[\left \|\nabla \rho_h(\theta_{k-1}) - \widehat{\nabla}_k \rho_{h}\right \|^2\right] \leq (\delta^g_k)^2, \; \; \; \E\left[\left \|\nabla^2 \rho_h\theta_{k-1}) - \widehat{\nabla}^2_k \rho_h\right \|^3  \right] \leq \Bigl(2(L_{\mathcal{H}} + \alpha_k)\delta_k^{\mathcal{H}}\Bigr)^{\frac{3}{2}}. 
\end{eqnarray}
\end{lemma}
\begin{proof}
    This lemma can be proved by  completely parallel arguments to the proof of Lemma $9$ in \cite{maniyar2024crpn}. 
\end{proof} 

\begin{lemma}
\label{lemma:lemma10}
Let the parameters $\{\theta_k\}$ be generated by the \Cref{alg:CRPN_drm}. Then, for a given iteration limit $N\geq 1$, we have
\begin{align*}
&
    \E \left[\left\|\theta_{R} - \theta_{R-1}\right\|^3\right] \leq \frac{36}{\sum_{k = 1}^{N} \alpha_k} \left[\rho_{h}^{*}-\rho_h(\theta_0) + \sum_{k=1}^{N} \frac{4(\delta_k^g)^{\frac{3}{2}}}{\sqrt{3\alpha_k}} + \sum_{k=1}^N \left(\frac{18 \sqrt[4]{2}}{\alpha_k}\right)^2 ((L_{\mathcal{H}} + \alpha_k)\delta_k^{\mathcal{H}})^{\frac{3}{2}}\right],     
\end{align*}
where $R$ is a integer random variable supported on $\{1,\ldots,N\}$, whose probability distribution is defined as follows:
\[P_R(R = k) = \frac{\alpha_k}{\sum_{k=1}^{N}\alpha_k}, \;\;\; k = 1\ldots, N,\] and $\delta_k^g, \delta_k^{\mathcal{H}}$ are satisfying the conditions as before in \eqref{eq:error_bound}.  

\end{lemma} 

\begin{proof}
    See Lemma 10 in \cite{maniyar2024crpn}. 
\end{proof}

\subsection{Proof of Theorem \ref{thm: convergence_SOSP}}
\label{proof: convergence_SOSP}
\begin{proof}
We follow the technique from \cite{maniyar2024crpn} to prove this theorem.  The proof proceeds through the following sequence of lemmas: (i) First, we use \Cref{lemma:error_bound_gradient} and \Cref{lemma: error_bound_Hessian} with the parameters chosen as specified in \eqref{eq:hyper_parameter}, (ii) Note that \eqref{eq:error_bound} in \Cref{lemma:lemma9} is satisfied with $\delta_k^g = \frac{2\epsilon}{5}$ and $\delta_k^{\mathcal{H}} = \frac{\epsilon}{144}$, (iii) We apply \Cref{lemma:lemma10} with the chosen parameters as specified in \eqref{eq:hyper_parameter}; and (iv) We apply \Cref{lemma:lemma9} with $\delta_k^g$ and $\delta_k^{\mathcal{H}}$ set as mentioned above. 


From \Cref{lemma:lemma10} and using \eqref{eq:hyper_parameter}, we have
\begin{align}
    \E \Bigl[\left\|\theta_R - \theta_{R-1}\right\|^3\Bigr] \leq \frac{8\epsilon^{3/2}}{L_{{\mathcal{H}}}^{3/2}}. 
\end{align}
Using Lyapunov inequality, we obtain
\begin{align}
    \E \Bigl[\left\|\theta_R - \theta_{R-1}\right\|^2\Bigr]^{1/2} \leq \E \Bigl[\left\|\theta_R - \theta_{R-1}\right\|^3\Bigr]^{1/3} \leq \frac{2\epsilon^{1/2}}{L_{{\mathcal{H}}}^{1/2}}. 
\end{align}

The main proof follows from \Cref{lemma:lemma9} and 
\[\sqrt{\E[\|\nabla \rho_h(\theta_k)\|}] \leq 5 \sqrt{\epsilon},\;\;\; \frac{\E [\lambda_{\max}(\nabla^2 \rho_h(\theta_k))]}{\sqrt{L_{\mathcal{H}}}} \leq 6 \sqrt{\epsilon}.\] 

The sample complexity of CRPN-DRM is bounded by
\[\sum_{k = 1}^N m_k = \mathcal{O}\Bigl(\frac{1}{\epsilon^{3.5}}\Bigr), \;\;\; \sum_{k = 1}^N b_k = \mathcal{O}\Bigl(\frac{1}{\epsilon^{2.5}}\Bigr).\]

\end{proof}

\section{Efficient Hessian-vector product computation using Auto-grad in the DRM-sensitive RL setting}
\label{sec:eff_hessian}
We follow the approach adopted in \cite{maniyar2024crpn} for the efficient implementation of \Cref{alg:CRPN_drm}. 
For the sake of completeness, we provide the efficiency calculations below.


Our variance-reduced DRM policy Hessian estimate consists of three quantities which we shall discuss below. 
From \Cref{lemma: drm_sample_estimator_alt}, let $\mathbf{L_1}^{(i)} := R^{\theta}_{(i)} h'(1-\frac{i}{m}) l^{\theta}_{(i)}$, $\mathbf{L_2}^{(i)} := \mathbf{S_2}^{(i)} := l^{\theta}_{(i)}$ and $\mathbf{S_1}^{(i)} = \psi{''}_{i} l^{\theta}_{(i)}$ denote the stacked ``losses'' corresponding to each of the $n$ trajectories in the mini-batch.
Given $\theta \in \mathbb{R}^d$ and $\nabla \equiv \nabla_{\theta}$, let $\nabla \mathbf{L_1}^{(ij)} := \frac{\partial \mathbf{L_1}^{(j)}}{\partial \theta^{(i)}}$, $\nabla \mathbf{L_2}^{(ij)} := \frac{\partial \mathbf{L_2}^{(j)}}{\partial \theta^{(i)}}$, $\nabla \mathbf{S_1}^{(ij)} := \frac{\partial \mathbf{S_1}^{(j)}}{\partial \theta^{(i)}}$ and $\nabla \mathbf{S_2}^{(ij)} := \frac{\partial \mathbf{S_2}^{(j)}}{\partial \theta^{(i)}}$.
Then, $\mathbf{L_1}, \mathbf{L_2}, \mathbf{S_1}, \mathbf{S_2} \in \mathbb{R}^n$ and $\nabla \mathbf{L_1}, \nabla \mathbf{L_2}, \nabla \mathbf{S_1}, \nabla \mathbf{S_2} \in \mathbb{R}^{d \times n}$ by construction. We can now re-write our DRM estimates in the following matrix-vector form
\begin{align*}
	\Bar{g} &:= \widehat{\nabla} \rho_h = \frac{1}{n} \sum_{i} \nabla \mathbf{L_1} ^{(i)} =  \nabla \left( \frac{1}{n} \sum_{i}  \mathbf{L_1}^{(i)} \right),\\
	\Bar{\Hess} &:= \widehat{\nabla}^2 \rho_h =   \frac{1}{n} \sum_{i} (\nabla \mathbf{S_1}^{(i)} \nabla^{\top} \mathbf{S_2}^{(i)} + \nabla \mathbf{L_1}^{(i)} \nabla^{\top} \mathbf{L_2}^{(i)} + \nabla^2 \mathbf{L_1}^{(i)} ) \\
	&= \frac{1}{n} \sum_{i} (\nabla \mathbf{S_1}^{(i)} \nabla^{\top} \mathbf{S_2}^{(i)}) + \frac{1}{n} \sum_{i} (\nabla \mathbf{L_1}^{(i)} \nabla^{\top} \mathbf{L_2}^{(i)}) + \frac{1}{n} \sum_{i} \nabla^2 \mathbf{L_1}^{(i)}.
\end{align*}
The resulting Hessian-vector product can be shown as the following in matrix form,
\begin{align*}
	\Bar{\Hess} \cdot v &= \frac{1}{n} \sum_{i} (\nabla \mathbf{S_1}^{(i)} \nabla^{\top} \mathbf{S_2}^{(i)}) \cdot v + \frac{1}{n} \sum_{i} (\nabla \mathbf{L_1}^{(i)} \nabla^{\top} \mathbf{L_2}^{(i)}) \cdot v + \frac{1}{n} \sum_{i}\nabla^2 \mathbf{L_1}^{(i)} \cdot v ,\\
	&= \frac{1}{n} \sum_{i} \nabla \mathbf{S_1}^{(i)} (\nabla^{\top} \mathbf{S_2}^{(i)} v) + \frac{1}{n} \sum_{i} \nabla \mathbf{L_1}^{(i)} (\nabla^{\top} \mathbf{L_2}^{(i)} v) + \nabla \frac{1}{n} \sum_{i} (\nabla^{\top} \mathbf{L_1}^{(i)} v) , \\
	&= \frac{1}{n} \nabla \mathbf{S_1} \nabla^{\top} \mathbf{S_2} v + \frac{1}{n} \nabla \mathbf{L_1} \nabla^{\top} \mathbf{L_2} v + \nabla \Bar{g}^{\top} v . \numberthis\label{eq:hvp}
\end{align*}
The last term, $\nabla \Bar{g}^{\top} v$ can be computed by utilizing the computational graph of $\Bar{g}$ obtained during gradient calculation thereby requiring just \textbf{one} additional auto-grad call. However, the first two terms, given their similar forms, can be calculated by using the forward auto-diff trick shown in Appendix I of \cite{maniyar2024crpn}.

\section{Additional simulation experiments}
\label{sec:appendix-expts}
We present additional simulation results on the three Gymnasium environments in the following sections, while a summary of the hyperparameters used in our experiments is provided in \Cref{tab:hyperparameter_summary}. 

\begin{table}[h]
    \caption{Summary of all the hyperparameters used for each of the environments.}
    \label{tab:hyperparameter_summary}
    \centering
    \begin{tabular}{|c|c|c|c|}
    \hline
    \multirow{2}{*}{}  & \multirow{2}{*}{CartPole-v1} & \multirow{2}{*}{Humanoid-v4} & \multirow{2}{*}{CliffWalking-v0} \\
     & & & \\ \hline
    \multirow{2}{*}{Observation space $\mathbb{S}$} & \multirow{2}{*}{Continuous $\mathbb{R}^4$} & \multirow{2}{*}{Continuous $\mathbb{R}^{348}$} & \multirow{2}{*}{Discrete $|\mathbb{S}| = 48$}\\
     & & & \\ \hline
     \multirow{2}{*}{Action space $\mathbb{A}$} &  \multirow{2}{*}{Discrete $|\mathbb{A}| = 4$} &  \multirow{2}{*}{Continuous $\mathbb{R}^{17}$} &  \multirow{2}{*}{Discrete $|\mathbb{A}| = 4$} \\
      & & & \\ \hline
     \multirow{2}{*}{Policy $\pi(s|a ; \theta)$}   &  \multirow{2}{*}{Linear Boltzmann} &  \multirow{2}{*}{Deep Gaussian} &  \multirow{2}{*}{Tabular Boltzmann} \\
      & & & \\ \hline
    \multirow{2}{*}{Total iterations $N$} & \multirow{2}{*}{100} & \multirow{2}{*}{100} & \multirow{2}{*}{1000} \\
      & & & \\ \hline
    \multirow{2}{*}{Gradient batch size $m_k$} & \multirow{2}{*}{200} & \multirow{2}{*}{200} & \multirow{2}{*}{200} \\
      & & & \\ \hline
    \multirow{2}{*}{Hessian batch size $b_k$} & \multirow{2}{*}{200} & \multirow{2}{*}{200} & \multirow{2}{*}{200} \\
      & & & \\ \hline
    \multirow{2}{*}{Cubic penalty $\alpha$} & \multirow{2}{*}{5000} & \multirow{2}{*}{10$^5$} & \multirow{2}{*}{2500} \\
      & & & \\ \hline
    \multirow{2}{*}{Discount factor $\gamma$} & \multirow{2}{*}{0.99} & \multirow{2}{*}{0.99} & \multirow{2}{*}{1} \\
      & & & \\ \hline
    \end{tabular}
\end{table}
\subsection{Cart-pole and Humanoid}
We implemented \Cref{alg:CRPN_drm} on the Cart-pole Gymnasium environment. From the results in \Cref{fig:cartpole-results}, we observe a trend that is similar to the Humanoid environment, whose results were presented in \Cref{fig:humanoid-results}. In particular, as in the Humanoid case, the DRMACRPN algorithm outperformed the risk-neutral counterpart ACRPN for each of the distortion functions considered.

\begin{figure}[ht]
\begin{tabular}{cc}
\begin{subfigure}{0.5\textwidth} 
    \includegraphics[width=0.9\linewidth]{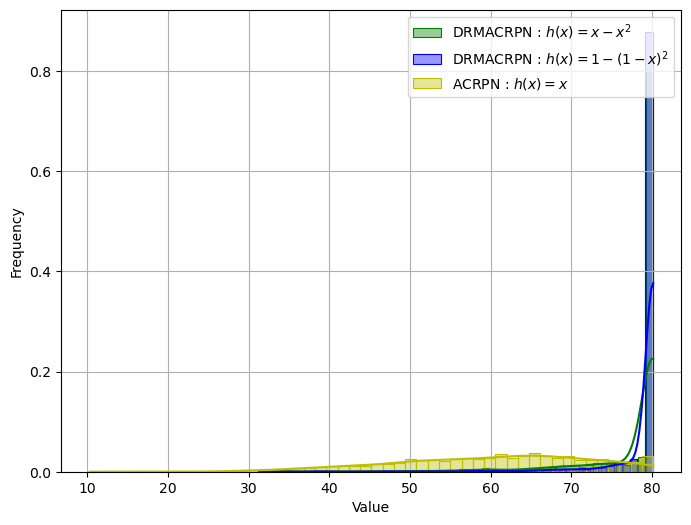}
    \caption{Cumulative return distributions of the DRM policies}
    \label{fig:drm_hists_cartpole}
\end{subfigure}
&
\begin{subfigure}{0.5\textwidth} 
\includegraphics[width=0.9\linewidth]{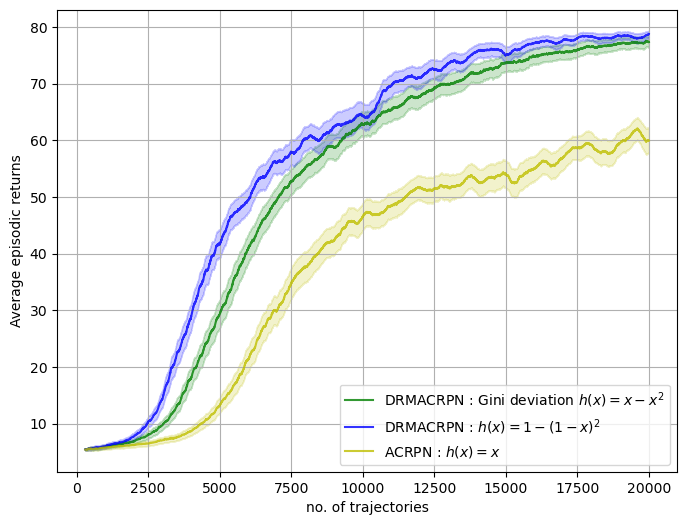}
\caption{Learning curves of the DRM policies.}
    \label{fig:drm_lc_cartpole}
\end{subfigure}
\end{tabular}
\caption{Performance comparison of risk-neutral policy Newton and \Cref{alg:CRPN_drm} with two different DRMs, namely dual-power, Gini deviation and RDEU on Cart-pole.}
\label{fig:cartpole-results}
\end{figure}
\begin{figure}[ht]
\begin{tabular}{cc}
\begin{subfigure}{0.5\textwidth} 
    \includegraphics[width=0.85\linewidth]{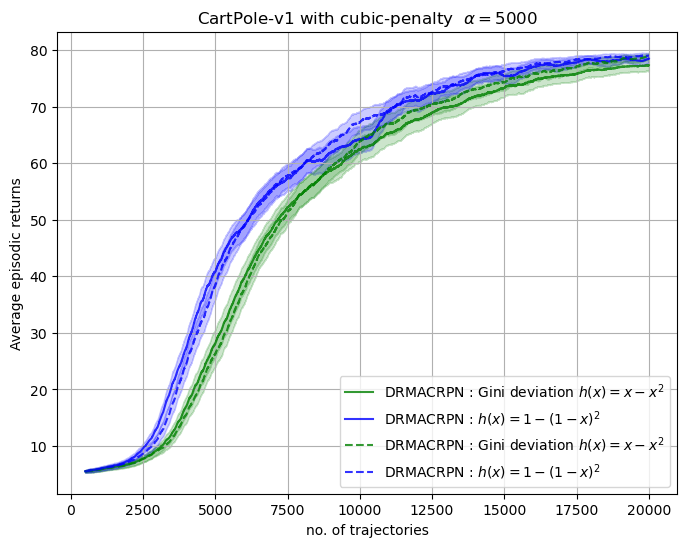}
    \caption{}
    \label{fig:drm_vs_bs2_cartpole}
\end{subfigure}
&
\begin{subfigure}{0.5\textwidth} 
\includegraphics[width=0.85\linewidth]{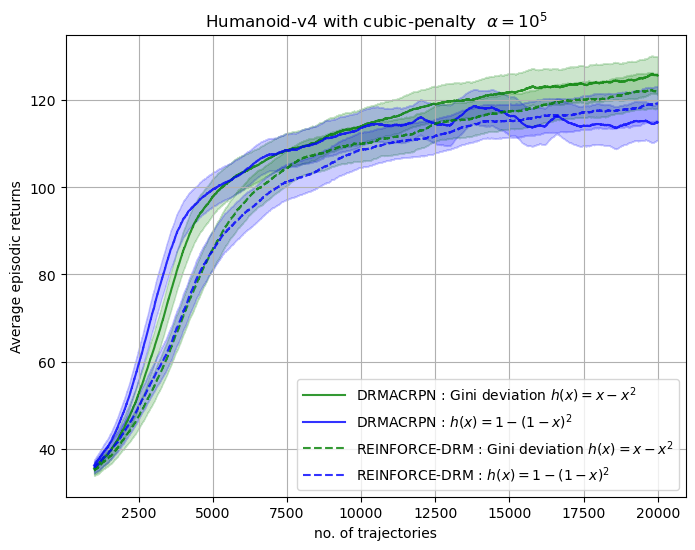}
\caption{}
    \label{fig:drm_vs_bs2_humanoid}
\end{subfigure}
\end{tabular}
\caption{Comparison of learning curves of REINFORCE-DRM and DRMACRPN algorithms on Cart-pole and Humanoid environments.}
\label{fig:drm_vs_firstorder-results}
\end{figure}

We also tested the performance of DRMACRPN and REINFORCE-DRM  with two DRM functions, namely Gini-deviation and Dual-power, on Cart-pole and Humanoid environments. For this comparison, on the Cart-pole environment, we used a linear layer with $\alpha=5000$ and for the Humanoid environment, a $64 \times 64$ deep Gaussian MLP policy with $\alpha=10^5$. Both environments had $m_k = b_k = 200$ with $N = 100$ and were run for ten independent iterations with a discount factor $\gamma = 0.99$. \Cref{fig:drm_vs_firstorder-results} presents the performance of these algorithms. On the Cart-pole environment, we do not observe a significant difference in performance as the policy network is simple owing to its linear nature, which probably implies lower occurrences of saddle-points. In the  Humanoid case, DRMACRPN outperforms REINFORCE-DRM in the initial phase of learning due to the non-linear nature of the deep Gaussian policy network implemented for this environment. We hypothesize the significance of second-order methods to be more helpful for larger policy networks such as deeper networks, transformers, LSTMs, etc, and such extensions can be an interesting direction for future empirical research.


\begin{figure}[ht]
    \centering
    \includegraphics[width=0.5\linewidth]{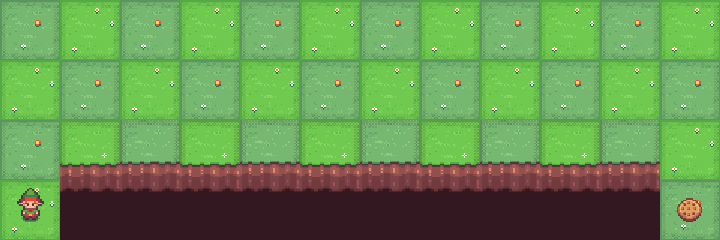}
    \caption{A snapshot of the CliffWalking-v0 environment.}
    \label{fig:cliff_walking}
\end{figure}
\begin{table}[ht]
\caption{DRM values of the learned policies for our DRMACRPN and the ACRPN on Humanoid-v4. }
    \label{tab:drm_results_2}
    \centering
    \begin{tabular}{|c|c|c|}
    \hline
        \multirow{2}{*}{Value} & \multirow{2}{*}{DRMACRPN} & \multirow{2}{*}{ACRPN} \\ 
        & & \\\hline
       Dual-power & \textbf{126.0} &  114.3 \\ \hline
       Gini deviation & \textbf{13.5} & 11.3 \\ \hline
    \end{tabular}
\end{table} 

\subsection{Cliff Walking}
This\footnote{\url{https://gymnasium.farama.org/environments/toy_text/cliff_walking/}} is an environment by OpenAI's gymnasium library that consists of a gridworld with $4 \times 12 = 48$ states and $4$ actions of each state to move up, right, down, and left. The agent starts the episode in the \textit{start} state located on the bottom left of the grid $(3, 0)$ and attempts to reach the \textit{goal} state at the bottom right, i.e. (3, 11). There is a cliff running along the bottom of the gridworld between the \textit{start} and \textit{goal} states. The agent incurs a negative reward of -1 for every time step, unless it falls off the cliff incurring -100, or reaching the goal state where the transition reward is 0. If the agent falls off the cliff, he starts again at the \textit{start} state. The episode ends if and only if the goal state is reached or the time step limit of $250$ is reached. See \Cref{tab:hyperparameter_summary} for a summary for the hyper-parameters used for each of the environments. We run ten independent training loops to obtain ten converged policies for each algorithm.

\begin{figure}[ht]
\begin{tabular}{cc}
\begin{subfigure}{0.5\textwidth} 
    \includegraphics[width=0.9\linewidth]{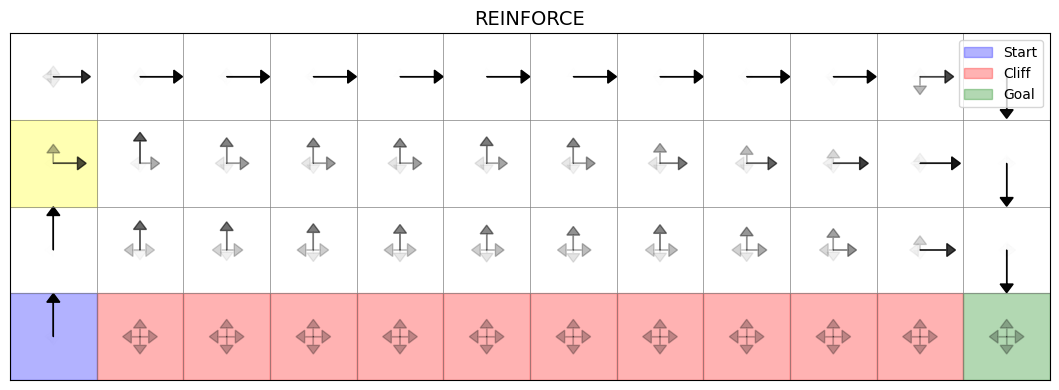}
    \label{fig:cliffwalking_policymap_reinforcedrm_neutral_supp}
\end{subfigure}
&
\begin{subfigure}{0.5\textwidth} 
\includegraphics[width=0.9\linewidth]{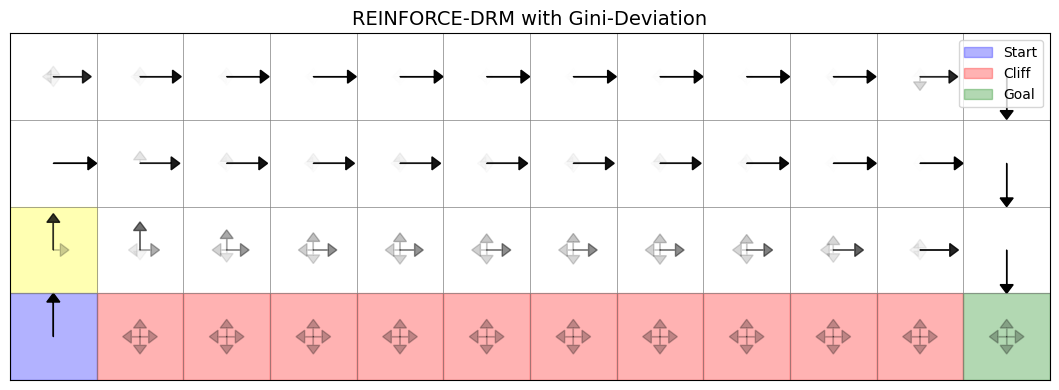}
    \label{fig:cliffwalking_policymap_reinforcedrm_gini_supp}
\end{subfigure}
\\
\begin{subfigure}{0.5\textwidth} 
    \includegraphics[width=0.9\linewidth]{figs/cliffwalking_policymap_drmacrpn_neutral.png}
    \label{fig:cliffwalking_policymap_drmacrpn_neutral_supp}
\end{subfigure}
&
\begin{subfigure}{0.5\textwidth} 
\includegraphics[width=0.9\linewidth]{figs/cliffwalking_policymap_drmacrpn_gini.png}
    \label{fig:cliffwalking_policymap_drmacrpn_gini_supp}
\end{subfigure}
\end{tabular}
\caption{Comparison of the policy maps. The length of the arrows are proportional to the probability of taking that action by the \textit{aggregated} policy $\pi_{agg}$.}
\label{fig:cliffwalking_policymap_results}
\end{figure}


\noindent
It is also interesting to note that risk-seeking policies may tend to fall off the cliff, as is evident by the minimum values of $-123$ from \Cref{tab:cliffwalking_results}. We also visualize the policy maps for REINFORCE and REINFORCE-DRM given in \Cref{fig:cliffwalking_policymap_results}. As we have 10 learned policy parameters $\theta_{i}$, for $i \in [1, 10]$, we obtain the aggregate policy as 
$$\pi_{agg} (s, \cdot) = \frac{1}{10}\sum_{n=1}^{10}\textrm{softmax}(\theta_n(s, \cdot)).$$
We use this $\pi_{agg}$ policy to visualize our policy maps in \Cref{fig:cliffwalking_policymap_results_1} and \Cref{fig:cliffwalking_policymap_results}. In a similar manner to what we did for the Cart-pole and Humanoid environments, we also provide the reward distribution especially for the two risk-seeking policies in \Cref{fig:cliffwalking_barplot} after removing outliers, i.e. cumulative rewards with very low frequencies. From \Cref{fig:cliffwalking_barplot}, we can also conclude that the second-order algorithms are more likely to converge to a more optimal policy. The learning curves for this experiment are shown in \Cref{fig:cliffwalking_lcs} for the sake of completeness. We see that each of the algorithms are assured convergence while also showcasing that the second-order methods achieve faster convergence due to their ability to understand the curvature of the reward landscape better. In conclusion, the performance of risk-seeking DRMs is higher in expectation than their risk-neutral counterparts while second-order tend to outperform first-order by being quicker and likelier to converge to better policies.

\begin{figure}[ht]
\centering
\begin{tabular}{cc}
\begin{subfigure}{0.5\textwidth} 
    \includegraphics[width=0.9\linewidth]{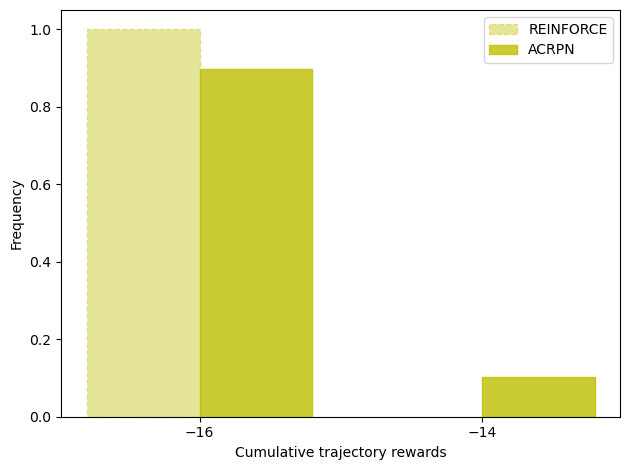}
    \caption{Distributions of the risk-neutral policies.}
    \label{fig:cliffwalking_riskneutral_barplot}
\end{subfigure}
&
\begin{subfigure}{0.5\textwidth}
\includegraphics[width=0.9\linewidth]{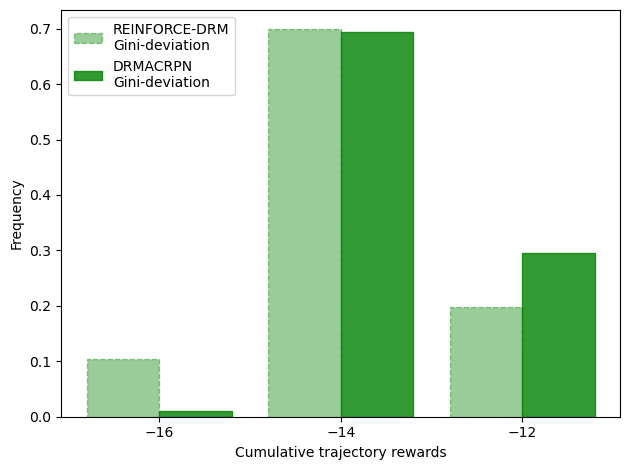}
    \label{fig:cliffwalking_drm_barplot}
    \caption{Distributions of the risk-seeking policies.}
\end{subfigure}
\end{tabular}
\caption{Distributions of the policies comparing first-order vs second-order.}
\label{fig:cliffwalking_barplot}
\end{figure}

\begin{figure}[ht]
\centering
\begin{tabular}{cc}
\begin{subfigure}{0.5\textwidth} 
    \includegraphics[width=0.9\linewidth]{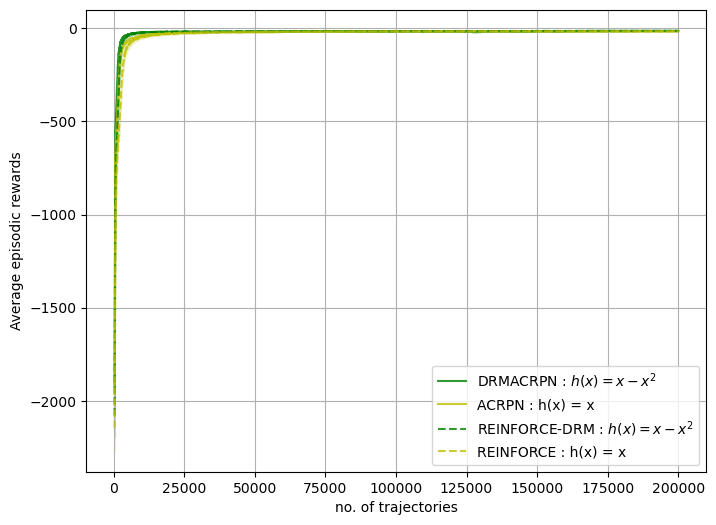}
    \caption{Asymptotic convergence of the learning curves for $N=1000$ iterations.}
    \label{fig:cliffwalking_lc_asymptotic}
\end{subfigure}
&
\begin{subfigure}{0.5\textwidth}
\includegraphics[width=0.9\linewidth]{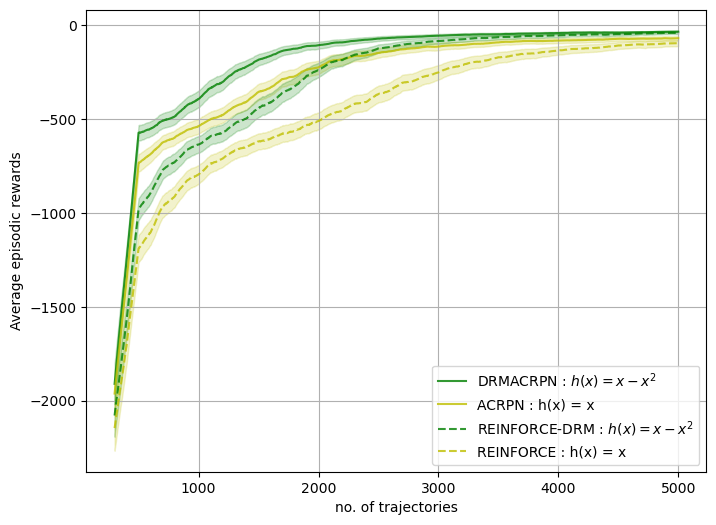}
    \label{fig:cliffwalking_lc_first25}
    \caption{Learning curves for the first $k=[1, 25]$ iterations.}
\end{subfigure}
\end{tabular}
\caption{Plot of the learning curves for the 4 algorithms for CliffWalking environment.}
\label{fig:cliffwalking_lcs}
\end{figure}

\end{document}